\pgfplotsset{compat=1.16}  
\newtheorem{lemma}{Lemma}
\newtheorem{corollary}{Corollary}
\newtheorem{proposition}{Proposition}
\theoremstyle{definition}
\newtheorem{remark}{Remark}
\newtheorem{definition}{Definition}
\newtheorem*{assumption*}{\assumptionnumber}
\providecommand{\assumptionnumber}{}
\newcommand*\rel@kern[1]{\kern#1\dimexpr\macc@kerna}
\newcommand*\widebar[1]{%
  \begingroup
  \def\mathaccent##1##2{%
    \rel@kern{0.8}%
    \overline{\rel@kern{-0.8}\macc@nucleus\rel@kern{0.2}}%
    \rel@kern{-0.2}%
  }%
  \macc@depth\@ne
  \let\math@bgroup\@empty \let\math@egroup\macc@set@skewchar
  \mathsurround\z@ \frozen@everymath{\mathgroup\macc@group\relax}%
  \macc@set@skewchar\relax
  \let\mathaccentV\macc@nested@a
  \macc@nested@a\relax111{#1}%
  \endgroup
}
\DeclareMathOperator*{\argmin}{argmin}
\DeclareMathOperator{\nul}{null}
\DeclareMathOperator{\sign}{sign}
\DeclareMathOperator{\aff}{aff}
\def\R{\mathbb{R}}
\def\T{\mathsf{T}}
\def\th{^{\textnormal{th}}}
\def\Id{\mathrm{Id}}
\def\cF{\mathcal{F}}
\def\cX{\mathcal{X}}
\def\cY{\mathcal{Y}}
\def\notimplies{\centernot\implies} 
\def\Regret{\mathrm{Regret}}
\newcommand{\jupyter}[1]{\href[pdfnewwindow=true]{#1}{\smash{\begingroup
\setbox0=\hbox{\includegraphics[height=1.5em]{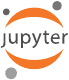}}%
\parbox{\wd0}{\box0}\endgroup}}}
\title{Gradient Equilibrium in Online Learning: \\ Theory and Applications}    
\author{Anastasios N.\ Angelopoulos$^*$ \and
Michael I.\ Jordan$^{*\dagger}$ \and
Ryan J.\ Tibshirani$^*$} 
\date{$^*$University of California, Berkeley \qquad $^\dagger$Inria, Paris} 
\begin{document}
\maketitle

\begin{abstract}
We present a new perspective on online learning that we refer to as
\emph{gradient equilibrium}: a sequence of iterates achieves gradient
equilibrium if the average of gradients of losses along the sequence converges
to zero. In general, this condition is not implied by, nor implies, sublinear
regret. It turns out that gradient equilibrium is achievable by standard online
learning methods such as gradient descent and mirror descent with 
\emph{constant} step sizes (rather than decaying step sizes, as is usually   
required for no regret). Further, as we show through examples, gradient
equilibrium translates into an interpretable and meaningful property in online
prediction problems spanning regression, classification, quantile estimation,
and others. Notably, we show that the gradient equilibrium framework can be used
to develop a debiasing scheme for black-box predictions under arbitrary 
distribution shift, based on simple post hoc online descent updates.  
We also show that post hoc gradient updates can be used to calibrate  
predicted quantiles under distribution shift, and that the framework leads to
unbiased Elo scores for pairwise preference prediction.             
\end{abstract}

\tableofcontents
\clearpage

\section{Introduction}

Online learning is a powerful paradigm for the analysis of sequential data, with
applications in diverse areas such as forecasting, calibration, and control.
Distinct from classical methods in statistics for analyzing time series and
stochastic processes, the online learning framework makes no stochastic or
generative assumptions about the sequence of data points. That is, the
guarantees from typical online learning analyses apply to an individual sequence
of data points that can be entirely arbitrary. As such, online learning offers
algorithmic and analytic strategies to cope with arbitrary distribution drift
and adversarial behavior---issues which are central in real deployments of
machine learning systems. 

Most online learning analyses focus on \emph{regret} as a key metric---given an
algorithm that is presented with one data point at a time, and must produce
predictions accordingly, how small is the total loss relative to an oracle who
can wait until the end, seeing the entire batch of data, and outputting the
single best prediction? Although regret provides a theoretical handle on the
challenging problem of optimizing sums of losses with respect to arbitrary
sequences, it is not always apparent how regret aligns with various goals of
learning. It is often the case that certain properties of a predictive algorithm
are of interest, and not just its loss relative to a fixed reference; for
example, we might seek to understand its bias, coverage, or other statistical
properties. While extensions of regret may be able to target such behaviors,
they can become complex.

In the current paper, we offer a simple, alternative perspective on online
learning which directly targets various properties that we might want an online
algorithm to exhibit. Such properties tend to have a statistical flavor, but in
the spirit of online learning, we achieve them without making any stochastic
assumptions. For example, our framework can guarantee a set of predictions is
unbiased over a completely arbitrary sequence of data points; here, bias refers
to the difference between the average prediction and average response over the
sequence, not in expectation. We refer to our online learning framework as
\emph{gradient equilibrium}.

Next we introduce the basic idea of gradient equilibrium, and we
provide three examples of applications that highlight the simplicity and
scalability of gradient equilibrium. In Section \ref{sec:regret}, we return to a
discussion of regret, demonstrating that our framework is not the same as
regret---sublinear regret does not subsume gradient equilibrium nor vice
versa. In Section \ref{sec:grad_descent}, we provide comprehensive theory 
on gradient equilibrium in online gradient descent. Section
\ref{sec:regularization} extends the analysis to incorporate
regularization. Although our main focus is constant step sizes, in Section
\ref{sec:arbitrary_steps} we treat the case of arbitrary step sizes. Finally, in
Section \ref{sec:experiments}, we give several examples of applications, and 
in Section \ref{sec:discussion}, our conclusions. Throughout, each experimental
figure has a clickable Jupyter logo
\jupyter{https://github.com/aangelopoulos/gradient-equilibrium} in its caption,
which links to the notebook for reproducing that figure.  

\subsection{Gradient equilibrium}

To fix notation, let $\ell_t$, $t = 1,2,3,\dots$ be a sequence of loss
functions, with each $\ell_t : \R^d \to (-\infty, \infty]$ assumed to be finite
and subdifferentiable, in a generalized sense made precise below, on a common
domain $D \subseteq \R^d$. Each loss $\ell_t$ will typically depend on a data
point $(x_t, y_t) \in \cX \times \cY$, e.g., we could have
\smash{$\ell_t(\theta) = \frac{1}{2} (y_t - x_t^\T \theta)^2$} with $\cX =
\R^d$ and $\cY = \R$, though we suppress this dependence for notational
simplicity.  

Let $\theta_t$, $t = 1,2,3,\dots$ denote a sequence of iterates produced by some  
online optimization scheme applied to $\ell_t$, $t = 1,2,3,\dots$, such as
gradient descent. Next we define our main condition of study. 

\begin{definition}
A sequence of iterates $\theta_t \in D \subseteq \R^d$, $t = 1,2,3,\dots$ satisfies
\emph{gradient equilibrium} with respect to a sequence of loss functions
$\ell_t$, $t = 1,2,3,\dots$ provided that 
\begin{equation}
\label{eq:grad_eq}
\frac{1}{T} \sum_{t=1}^T g_t(\theta_t) \to 0, \quad \text{as $T \to \infty$},
\end{equation}
where each $g_t(\theta_t)$ is a subgradient of $\ell_t$ at $\theta_t$.   
\end{definition}

In this definition and in general, we use $g_t(\theta) \in \R^d$ to
denote a generalized subgradient of $\ell_t$ at $\theta \in D$ as defined in 
Appendix \ref{app:gen_subgrad}. For our purposes in what follows, the precise
definition of a generalized subgradient is not crucial, and the most important
point to convey is that this generalization fluidly encapsulates both
subgradients of convex functions, and gradients of differentiable (and possibly
nonconvex) functions. We will often call  $g_t(\theta)$ a ``gradient,'' for
simplicity, even though it technically denotes a generalized subgradient.  

Gradient equilibrium, simply put, says that the average of gradients along the
iterate sequence tends to zero. This condition turns out to be interpretable in
various problem settings: for example, for squared losses, it translates into a
sequential notion of unbiasedness; for quantile losses, it reduces to one-sided
coverage; for generalized linear model (GLM) losses, it becomes uncorrelatedness
of the GLM residuals and the covariates. Generally, one can understand these
conditions as sequential analogs of the first-order optimality condition in
M-estimation problems. Table \ref{tab:grad_eq} gives a summary. In Section
\ref{sec:examples}, we will examine these and other examples in more
detail, also including a derivation of the equilibrium conditions. 

\renewcommand{\arraystretch}{1.4}
\begin{table}[H]
\centering
\begin{tabular}{ccc}
\toprule
Loss & Equilibrium condition & Interpretation \\
\midrule
$\ell_t(\theta) = \frac{1}{2} (y_t - \theta)^2$ & 
$\frac{1}{T} \sum_{t=1}^T \theta_t \asymp \frac{1}{T} \sum_{t=1}^T y_t$ &    
$\theta_t$ is unbiased for $y_t$ \\  
\midrule
$\ell_t(\theta) = \rho_\tau (y_t - \theta)$ &  
$\frac{1}{T} \sum_{t=1}^T 1\{ y_t \leq \theta_t \} \asymp \tau$ &
$\theta_t$ lies above $y_t$ with frequency $\tau$ \\
\midrule
$\ell_t(\theta) = -y_t x_t^\T \theta + \psi(x_t^\T \theta)$ &  
$\frac{1}{T} \sum_{t=1}^T (\psi'(x_t^\T \theta_t) - y_t) x_t \asymp 0$ &  
\parbox[c]{2in}{\centering\smallskip\smallskip
GLM residuals from $\theta_t$ are \\ uncorrelated with features
\smallskip\smallskip} \\  
\bottomrule
\end{tabular}
\captionsetup{singlelinecheck=off}
\caption[.]{\it Examples of loss functions and gradient equilibrium
  conditions. For sequences $a_T$ and $b_T$, we write $a_T \asymp b_T$ to mean
  $a_T - b_T \to 0$ as $T \to \infty$. In the second row, $\rho_\tau$ denotes
  the quantile loss at level $\tau \in [0,1]$, i.e., $\rho_\tau(u) = \tau |u|$
  for $u \geq 0$ and $\rho_\tau(u) = (1-\tau) |u|$ for $u < 0$. In the third
  row, $\psi$ is the cumulant generating function for the GLM (e.g.,
  \smash{$\psi(u) = \frac{1}{2} u^2$} for linear regression, $\psi(u) = \log(1 +
  e^u)$ for logistic regression, and $\psi(u) = e^u$ for Poisson regression),
  and $\psi'$ is its derivative.}
\label{tab:grad_eq}
\end{table}
\renewcommand{\arraystretch}{1}

\begin{figure}[p]
\centering
\includegraphics[width=\textwidth]{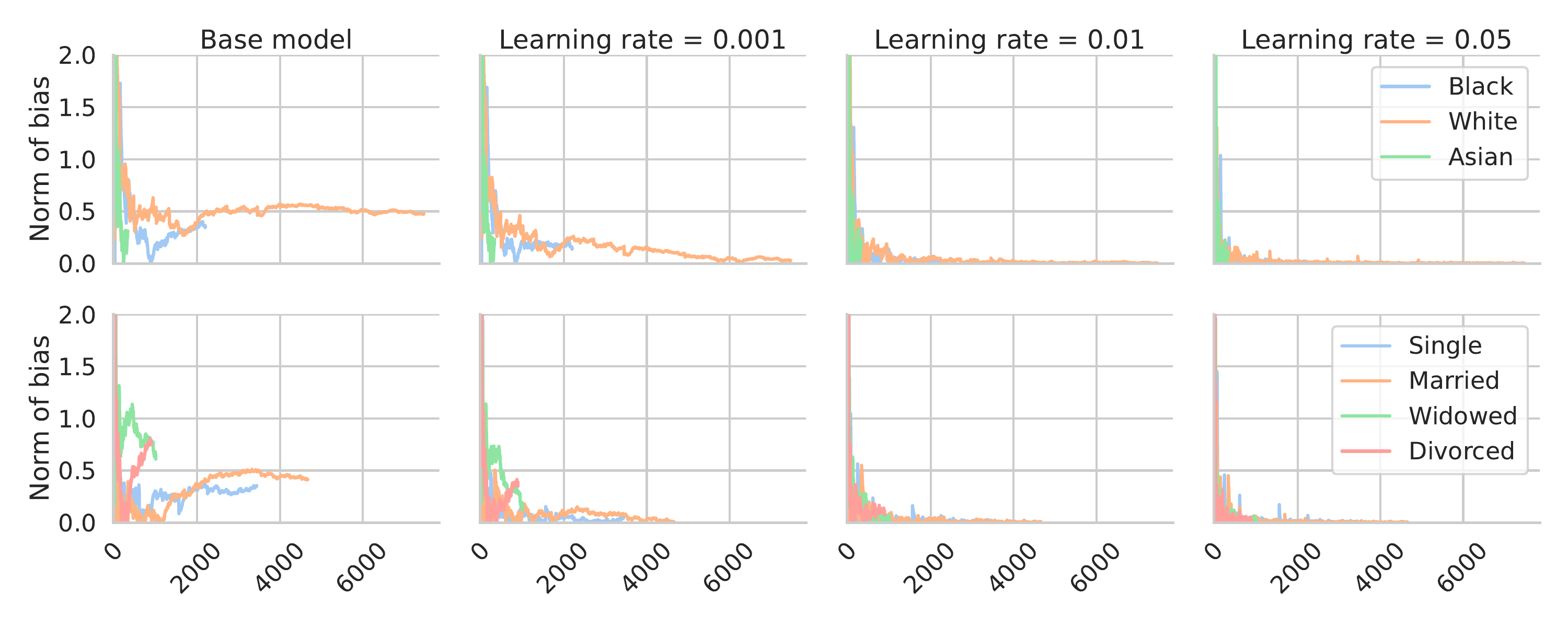} 
\caption{\it Multigroup debiasing results on the MIMIC dataset, on predicting
  length-of-stay of patients in a hospital system in Boston, Massachusetts. We
  train an XGBoost model on a large number of features from this dataset and run
  our multigroup debiasing procedure with respect to ethnicity (top row) and
  marital status (bottom row), with each column showing a different learning
  rate. Gradient equilibrium (third row of Table \ref{tab:grad_eq} where the
  features are group indicators) for this problem says that we  achieve zero
  bias for each ethnicity and marital status, in the long run. 
  \jupyter{https://github.com/aangelopoulos/gradient-equilibrium/blob/main/mimic_stay/multigroup.ipynb}}
\label{fig:multigroup_debiasing_mimic}

\bigskip\bigskip
\includegraphics[width=\linewidth]{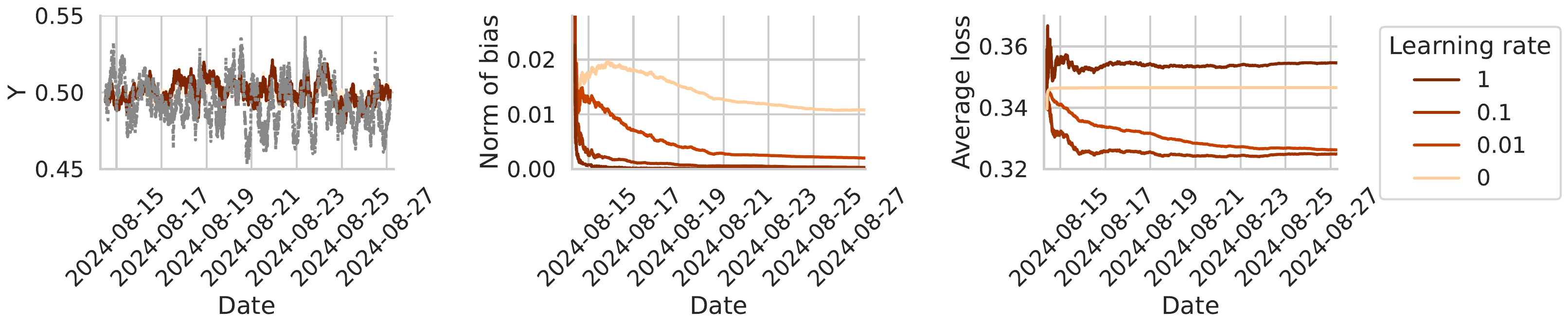}
\includegraphics[width=\linewidth]{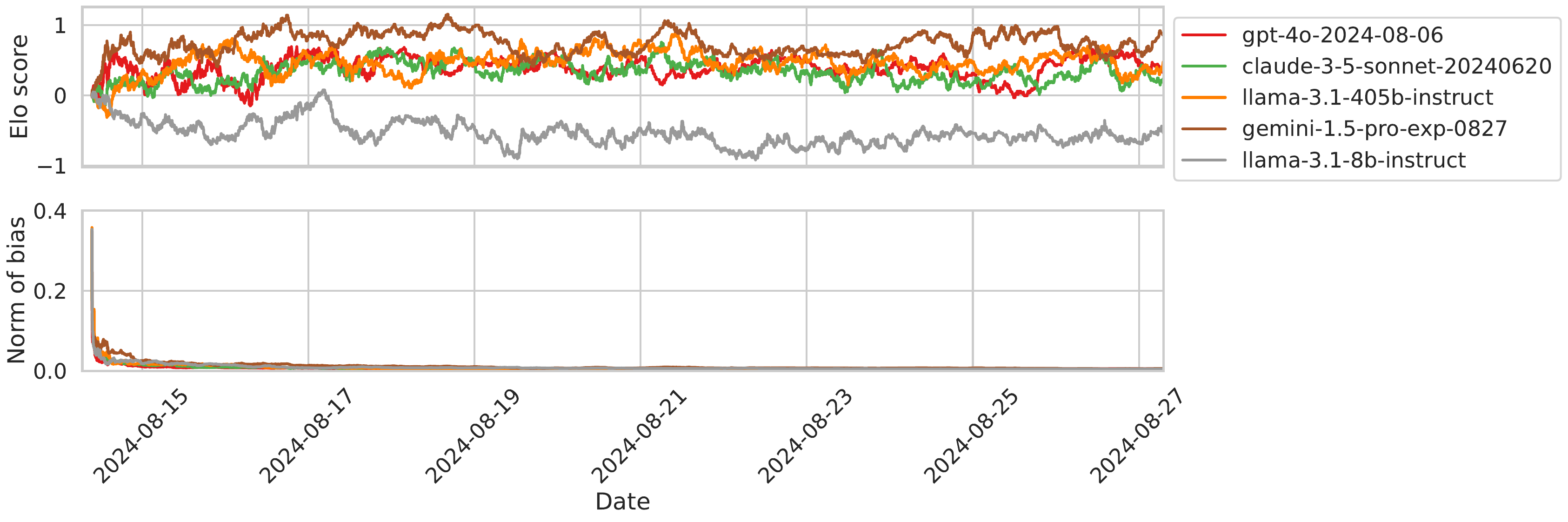}
\caption{\it Results on the Chatbot Arena dataset, comprised of predictions of
  human preferences between pairs of large language models (LLMs). The top row
  shows statistics of the predictions over time, with different learning
  rates. The left plot shows a rolling average of the predicted win rate along
  with a rolling average of the ground-truth labels in gray over time. The
  middle plot gives the absolute bias of the predictions over the sequence. On
  the right is the average binary cross-entropy loss of predictions. The bottom
  two plots show the sequence of Elo scores, and the bias of the Elo win-rate
  predictions on a per-model level. Gradient equilibrium for this problem says
  that we achieve zero bias per model, in the long run.  
  \jupyter{https://github.com/aangelopoulos/gradient-equilibrium/blob/main/arena/online_elo.ipynb}}   
\label{fig:elo_scores_chatbot}
\end{figure}

To further motivate our study, we briefly review a few applications of 
gradient equilibrium conditions in machine learning. An important application is
the debiasing of a black-box prediction model under arbitrary distribution
shift, a problem of significant interest across a wide range of machine learning
deployments. To give a flavor of how this works, and how simple it is, suppose
that $f_t(x_t)$ is a prediction of a response $y_t$ at $t$. We form an adjusted
prediction via 
\[
\tilde{f}_t(x_t) = f_t(x_t) + \theta_t, \quad \text{where} \quad
\theta_t = \theta_{t-1} + \eta (y_{t-1} - f_{t-1}(x_{t-1})).
\]
Here $\eta > 0$ is a step size (i.e., learning rate). The parameter $\theta_t$
can be seen as the result of an online gradient descent update applied to the 
loss \smash{$\ell_{t-1}(\theta) = \frac{1}{2} (y_{t-1} - f_{t-1}(x_{t-1}) -
  \theta)^2$}. Gradient equilibrium translates into
\[
\frac{1}{T} \sum_{t=1}^T \tilde{f}_t(x_t) - \frac{1}{T} \sum_{t=1}^T y_t \to 0,
\quad \text{as $T \to \infty$},
\]
the unbiasedness of the adjusted predictions \smash{$\tilde{f}_t(x_t)$}, $t =
1,2,3,\dots$ along the sequence. The same idea can be applied to carry out
debiasing based on features, which may or may not be a part of the feature
$x_t$ used by the original predictor $f_t$. For example, we can achieve
multigroup (i.e., groupwise) debiasing guarantees by setting $\ell_{t-1}$ to be
the loss which regresses the residual $y_{t-1} - f_{t-1}(x_{t-1})$ onto a
feature that indicates group membership. Figure
\ref{fig:multigroup_debiasing_mimic} showcases multigroup debiasing on a medical
length-of-stay prediction dataset. This setting is studied formally in Section
\ref{sec:examples} and empirically in Sections \ref{sec:simple_debiasing} and 
\ref{sec:multigroup_debiasing}. 

A second example we highlight is the learning of Elo scores for pairwise
preference prediction. Elo scores are a widely-used rating system for
competitive two-player games, like chess. Given $M$ competitors, we first
initialize a coefficient vector of all zeros, $\theta_1 = 0 \in \R^d$. During a
sequence $t = 1,2,3,\dots$ of battles between two distinct competitors $a_t,b_t
\in \{1,\dots,M\}$, we define $y_t = 1$ if $b_t$ wins and $y_t = 0$
otherwise. After the battle at each $t$, we perform the following update  
(abbreviating $a = a_t$ and $b = b_t$): 
\begin{align*}
\theta_{ta} &= \theta_{t-1,a} - \eta(y_t - p_t) \\
\theta_{tb} &= \theta_{t-1,b} - \eta(p_t - y_t).
\end{align*}
Here $\eta > 0$ is a learning rate, and $p_t = \sigma(\theta_{t-1,b} -
\theta_{t-1,a})$ where $\sigma(x) = e^x/(1+e^x)$ is the sigmoid function. All  
other coefficients are carried forward, $\theta_{tm} = \theta_{t-1,m}$ for $m
\not= a,b$. Gradient equilibrum allows us to prove an unbiasedness guarantee for
this algorithm: namely, for each player $m \in \{1,\dots,M\}$,    
\[
\frac{1}{|I_m|}\sum_{t \in I_m} p_t - \frac{1}{|I_m|}\sum_{t \in I_m} y_t \to 0, 
\quad \text{as $T \to \infty$, where} \; I_m = \{ t \leq T : \text{$a_t = m$ or 
      $b_t = m$} \}.
\]
That is, for every player, our win-rate predictions are unbiased in the long
run, over the sequence. Figure \ref{fig:elo_scores_chatbot} showcases this
capability in the domain of pairwise human preference evaluations of large  
language models (LLMs). We return to a more detailed discussion of this problem
in Section \ref{sec:pairwise_preference}.   

A third and last application is quantile calibration: we can use simple post hoc 
online gradient updates (analogous to those described above for debiasing) to
adjust predicted quantiles, so that they achieve finite-sample coverage
guarantees under arbitrary distribution shift. This was already studied in
\cite{gibbs2021adaptive, angelopoulos2023conformal}, albeit not under the
framework of gradient equilibrium. We show formally how this line of work
relates to gradient equilibrium in Section \ref{sec:examples}, and revisit it
through empirical examples in Section \ref{sec:quantile_tracking}.

\subsection{Comparison to regret}
\label{sec:regret}

Regret (including its various generalizations) is the most common metric used to  
analyze algorithms in the literature on online learning. See the related work
discussion in Section \ref{sec:related_work}. The \emph{regret} associated with
the sequence $\theta_t$, $t = 1,2,3,\dots$ at iteration $T$ is defined by:     
\[
\Regret_T = \sum_{t=1}^T \ell_t(\theta_t) - \inf_\theta \, \sum_{t=1}^T
\ell_t(\theta).     
\]
This measures the difference in loss incurred by the iterate sequence to the
loss of the best fixed parameter choice ``in hindsight.''  In this paper, we make
use of the following definition.

\begin{definition}
A sequence of iterates $\theta_t$, $t = 1,2,3,\dots$ satisfies \emph{no regret}
with respect to a sequence of loss functions $\ell_t$, $t = 1,2,3,\dots$
provided that 
\begin{equation}
\label{eq:no_regret}
\frac{\Regret_T }{T} = \frac{1}{T} \sum_{t=1}^T \ell_t(\theta_t) - \inf_\theta
\, \frac{1}{T} \sum_{t=1}^T \ell_t(\theta) \to 0, \quad \text{as $T \to
  \infty$}. 
\end{equation}
\end{definition}

The condition in \eqref{eq:no_regret} is usually called ``sublinear regret'' in
the online learning literature, as much of this literature is focused on 
quantifying the precise growth rate of \smash{$\Regret_T$} as a function of $T$
for certain online algorithms under certain conditions on the sequence $\ell_t$
(e.g., \smash{$\sqrt{T}$} for gradient descent on convex functions, or $\log{T}$
for strongly convex functions, and so on). We refer to condition
\eqref{eq:no_regret} as ``no regret'' for simplicity.    

\begin{figure}[b!]
\centering
\includegraphics[width=\textwidth]{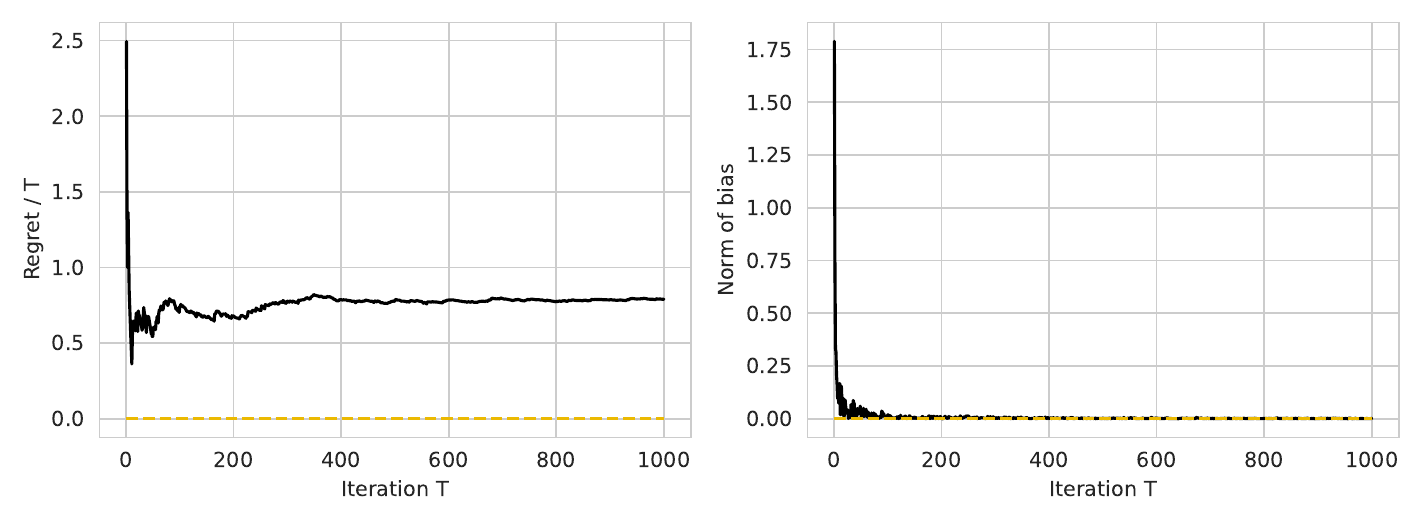}
\caption{\it Regret and bias for gradient descent on squared losses, with
  constant step sizes.}   
\label{fig:regret_and_bias}
\end{figure}

How do regret and gradient equilibrium compare? To glean some insight, 
Figure \ref{fig:regret_and_bias} displays the result of a simple experiment with 
squared losses, $\ell_t(\theta) = \frac{1}{2} (y_t - \theta)^2$, $t =
1,2,3,\dots$, where the data $y_t$, $t = 1,2,3,\dots$ are i.i.d.\ standard
Gaussian samples. We use gradient descent, initialized at $\theta_1 = 0$, to  
define the sequence $\theta_t$, $t = 2,3,\dots$. We use a constant step
size, $\eta = 0.2$, which is not typical for online optimization, and does not 
imply a no-regret guarantee in general; for strongly convex functions like
the squared losses in the current setting, a decaying step size schedule such as
$\eta_t = 1/(\mu t)$ would be typical (where $\mu$ is the strong convexity
constant), since this would lead to a sharp no-regret guarantee.      

The iterate sequence from gradient descent here does not appear to have no
regret, as we can see in the left panel of Figure \ref{fig:regret_and_bias} that
$\Regret_T / T$ does not vanish as $T \to \infty$. However, something
interesting happens to the magnitude of the average bias,
\smash{$|\frac{1}{T} \sum_{t=1}^T (\theta_t-y_t)|$}, plotted in the right panel
of the figure: this is driven to zero as $T$ grows. What this shows, in fact, is
that along the gradient descent sequence the average of the gradients 
\smash{$\frac{1}{T} \sum_{t=1}^T g_t(\theta_t)$} can be made small even if the
regret is large.  

It is not hard to see that, in general, gradient equilibrium
\eqref{eq:grad_eq} does not imply no regret \eqref{eq:no_regret}. For
example, the iterates $\theta_t$, $t = 1,2,3,\dots$ could ``bounce around'' the   
minimizer of \smash{$\bar\ell_T = \frac{1}{T} \sum_{t=1}^T \ell_t$} in such a
way that the average gradient tends to zero but the average loss is not minimal;  
just as in Figure \ref{fig:lrs_notimplies_nr} below. Further, if
\smash{$\bar\ell_T$} is nonconvex, then it could have stationary points (with
zero gradient) that are strict local minima or strict saddle points, and
$\theta_t$ could be bouncing around (or even converging to) such points, and in
doing so it could satisfy gradient equilibrium without attaining anywhere close
to minimal average loss. Conversely, no regret does not imply gradient
equilibrium in general, as we study next through examples. 

\subsubsection{Two illustrative examples}

To help elucidate the relationship between no regret (NR) and gradient
equilibrium (GEQ), we work through two illustrative examples. 

\paragraph{Example 1: absolute value losses.}

For the first example, let $\ell_t(\theta) = |\theta|$, for all $t =
1,2,3,\dots$. To see that NR does not imply GEQ, consider any sequence 
such that $\theta_t > 0$ for all $t$, and $\theta_t \to 0$. Then we have
$\Regret_T / T \to 0$ but \smash{$\frac{1}{T} \sum_{t=1}^T g_t(\theta_t) = 1$}
for all $T$, thus GEQ does not hold. This is depicted in Figure
\ref{fig:nr_notimplies_lrs}.     

For the opposite direction, to see that GEQ does not imply NR, consider any
sequence with $\theta_t > 0$ for odd $t$, and $\theta_t < 0$ for even $t$. (This 
behavior of ``bouncing around'' the minimum was discussed earlier, and the
current setting provides a concrete picture.) Then \smash{$\frac{1}{T}
|\sum_{t=1}^T g_t(\theta_t)| \leq \frac{1}{T} \to 0$}, but as long as
$|\theta_t|$ remains bounded away from zero, NR does not hold. This is depicted
in Figure \ref{fig:lrs_notimplies_nr}. 

\begin{figure}[b!]
\bigskip 
\centering
\begin{subfigure}{0.49\textwidth}
\includegraphics[width=\linewidth]{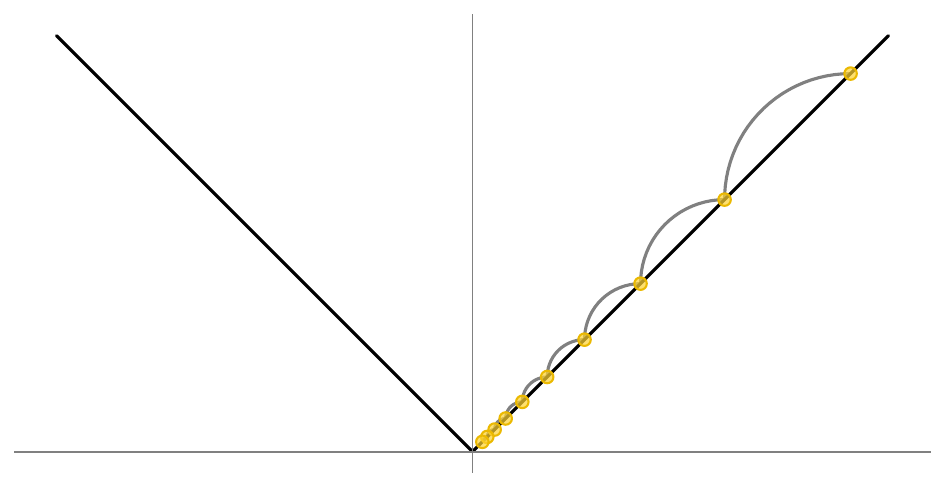}
\caption{NR $\notimplies$ GEQ}
\label{fig:nr_notimplies_lrs}
\end{subfigure} \hfill
\begin{subfigure}{0.49\textwidth}
\includegraphics[width=\linewidth]{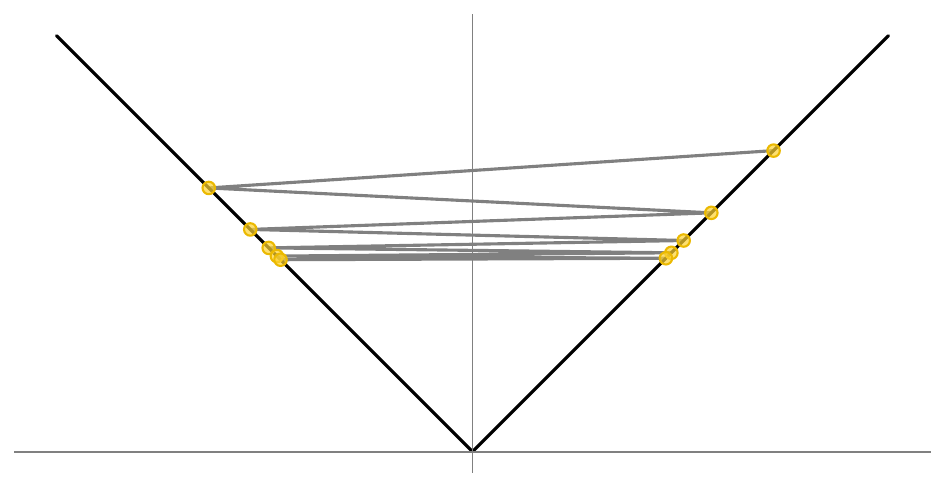}
\caption{GEQ $\notimplies$ NR}
\label{fig:lrs_notimplies_nr}
\end{subfigure}
\caption{\it Two examples with $\ell_t(\theta) = |\theta|$ which show that 
  neither NR nor GEQ necessarily implies the other. In each panel, the iterates
  start at  the upper right-most point, and the thin gray lines are simply as
  visual aid to demonstrate the order of the sequence.} 
\label{fig:abs_value}
\end{figure}

\paragraph{Example 2: squared losses.}

After the last example, illustrated in Figure \ref{fig:nr_notimplies_lrs}, it is
natural to ask whether this kind of behavior, where iterates can satisfy NR but
not GEQ, is limited to settings where the gradients are nonsmooth, as in the
absolute value loss. The answer is ``no'': as the next example shows, a
sequence of iterates can still satisfy NR for smooth and even strongly convex
loss functions, without attaining GEQ. Let \smash{$\ell_t(\theta) = \frac{1}{2}
  (y_t - \theta)^2$}, for $t = 1,2,3,\dots$, where  
\[
(y_1, y_2, y_3, \dots) = ( 
\underbrace{a, a, \dots, a}_{\text{$n$ times}},
\underbrace{b, b, \dots, b}_{\text{$m$ times}},
\underbrace{a, a, \dots, a}_{\text{$n$ times}}, 
\underbrace{b, b, \dots, b}_{\text{$m$ times}},
\dots),
\]
that is, a repeating pattern of $a$ for $n$ elements, $b$ for $m$ elements, $a$
for $n$ elements, and so on. Suppose that $a < 0 < b$ and $n > m$ are such that 
$na + mb = 0$. Suppose that $T$ is a multiple of $n+m$. Then 
\[
\inf_\theta \, \frac{1}{T} \sum_{t=1}^T (y_t - \theta)^2 = \frac{na^2 +
  mb^2}{n+m} = \alpha a^2 + \beta b^2,
\]
where $\alpha = n/(n+m)$ and $\beta = m/(n+m)$. Meanwhile, define 
\[
(\theta_1, \theta_2, \theta_3, \dots) = ( 
\underbrace{u, u, \dots, u}_{\text{$n$ times}}, 
\underbrace{v, v, \dots, v}_{\text{$m$ times}}, 
\underbrace{u, u, \dots, u}_{\text{$n$ times}}, 
\underbrace{v, v, \dots, v}_{\text{$n$ times}}, 
\dots),
\]
that is, a repeating pattern of $u$ for $n$ elements, $v$ for $m$ elements, $u$
for $n$ elements, and so on. The basic idea is to choose $u,v$ in such a way
that average loss of the sequence $\theta_t$ equals $\alpha a^2 + \beta b^2$,
but the average gradient remains positive and bounded away from zero (possible
due to the square growth of the loss functions versus the linear growth of their
gradients). To this end, we set $u = a$ and then choose $v > b$ by equating  
$\alpha a^2 + \beta b^2$ and \smash{$\frac{1}{T} \sum_{t=1}^T (y_t - \theta_t)^2
  = \beta (b - v)^2$}, which results in \smash{$v = b + \sqrt{\alpha a^2 / \beta
    + b^2}$}. This makes the average gradient:
\[
\frac{1}{T} \sum_{t=1}^T \theta_t = \alpha a + \beta v = 
\sqrt{\alpha \beta a^2 + \beta^2 b^2}.
\]
Hence, we have constructed a sequence with zero regret but with average
gradient, or equivalently average bias, equal to \smash{$\sqrt{\alpha \beta a^2
  + \beta^2b^2}$}, for arbitrarily large $T$.   

Zooming out from this particular construction, our next result shows that for
\emph{any} data sequence $y_t$ with nontrivial sample variance, we can always 
find an iterate sequence $\theta_t$ that has zero regret and average bias
bounded away from zero. The proof is very simple, and deferred until Appendix 
\ref{app:squared_loss_nr_lrs}. 

\begin{proposition}
\label{prop:squared_loss_nr_lrs}
For any sequence $y_t$, $t = 1,\dots,T$, denote its sample mean and variance
by \smash{$\bar{y}_T = \frac{1}{T} \sum_{t=1}^T y_t$} and \smash{$s_T^2 =
  \frac{1}{T} \sum_{t=1}^T (y_t - \bar{y}_T)^2$}. There exists a sequence
$\theta_t$, $t = 1,\dots,T$ such that    
\[
\frac{1}{T} \sum_{t=1}^T (y_t - \theta_t)^2 =s_T^2, \quad \text{and} \quad  
\bigg| \frac{1}{T} \sum_{t=1}^T \theta_t - \frac{1}{T} \sum_{t=1}^T y_t \bigg|
\geq s_T. 
\]
\end{proposition}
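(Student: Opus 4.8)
The plan is to give an explicit construction rather than an abstract existence argument. The key observation is that $\inf_\theta \frac{1}{T}\sum_{t=1}^T (y_t-\theta)^2 = s_T^2$, with the infimum attained at $\theta = \bar{y}_T$; hence any sequence whose average squared loss equals $s_T^2$ automatically has $\Regret_T = 0$. So the only real task is to find such a sequence whose running average is displaced from $\bar y_T$ by at least $s_T$. The natural candidate is the rigidly shifted sequence $\theta_t = y_t + s_T$ for all $t = 1,\dots,T$.

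First I would verify the loss identity: with this choice $y_t - \theta_t = -s_T$ for every $t$, so $(y_t-\theta_t)^2 = s_T^2$ identically, and therefore $\frac{1}{T}\sum_{t=1}^T (y_t-\theta_t)^2 = s_T^2$, which is the first claimed equality. Next I would compute the bias directly: $\frac{1}{T}\sum_{t=1}^T \theta_t - \frac{1}{T}\sum_{t=1}^T y_t = \frac{1}{T}\sum_{t=1}^T (y_t + s_T - y_t) = s_T$, so its absolute value equals $s_T$, which trivially meets the required lower bound. The mirror choice $\theta_t = y_t - s_T$ works identically, yielding bias $-s_T$.

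There is essentially no obstacle here — the ``hard part'' is just recognizing that one does not need a clever bouncing construction like the periodic $(a,\dots,a,b,\dots,b)$ pattern in the text: since the squared error $(y_t-\theta_t)^2$ depends only on the residual $y_t-\theta_t$, holding every residual fixed at $-s_T$ simultaneously pins the average loss to its minimum $s_T^2$ while forcing the average bias to be exactly the common shift $s_T$. The one remark worth adding is the degenerate case $s_T = 0$ (constant data), where the construction reduces to $\theta_t = y_t$ and both sides of the bias inequality are $0$, so the statement still holds.
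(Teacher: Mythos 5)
Your construction $\theta_t = y_t + s_T$ is exactly the one used in the paper's proof (Appendix \ref{app:squared_loss_nr_lrs}), and your verification of both the loss identity and the bias is correct. Nothing further is needed.
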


\subsubsection{The broader picture}

What did we learn from the last two examples? The main takeaway:
\begin{quote}\it
No regret does not imply gradient equilibrium, even for smooth and
strongly convex functions.
\end{quote}
This is an important point, and---together with the converse fact, that gradient
equilibrium does not imply no regret, even with strong assumptions on the loss
functions in question---supports the study of gradient equilibrium as a
standalone property of interest in online learning, beyond no regret. 

A careful look back at the last subsection actually suggests a second takeaway:      
\begin{quote}\it
Iterate convergence does not imply gradient equilibrium if the gradients are
nonsmooth.    
\end{quote}
Here by \emph{iterate convergence} we mean convergence of the iterate sequence 
to a minimizer of the average loss:
\begin{equation}
\label{eq:iterate_conv}
\inf \bigg\{ \|\theta_T - \theta^\star_T\|_2 : \text{$\theta^\star_T$ minimizes}
\; \bar\ell_T = \frac{1}{T} \sum_{t=1}^T \ell_t \bigg\} \to 0, \quad \text{as $T
  \to \infty$}. 
\end{equation}
This is a very strong condition, especially in the online setting, and as such
it is not typically studied in the online optimization literature. The point of
raising it here is not to suggest that it is interesting, but instead to
emphasize that even under a strong condition on the iterates such as
\eqref{eq:iterate_conv}, the GEQ property \eqref{eq:grad_eq} is not guaranteed 
to hold if the gradients are nonsmooth, as the example in Figure
\ref{fig:nr_notimplies_lrs} demonstrates.\footnote{It is not hard to see that
  some level of nonsmoothness is needed for iterate convergence to fail to imply
  gradient equilibrium. If the loss functions $\ell_t$ are smooth enough such
  that, say, \smash{$\nabla \bar\ell_T$} is $L$-Lipschitz (where the constant $L
  > 0$ does not depend on $T$), then we have \smash{$\|\nabla
    \bar\ell_T(\theta_T) \|_2 = \|\nabla \bar\ell_T(\theta_T) - \nabla
    \bar\ell_T(\theta^\star_T) \|_2 \leq L \|\theta_T - \theta^\star_T\|_2 \to
    0$.}}          

The discussion of regret-type properties in relation to gradient equilibrium 
in continued in Appendix \ref{app:no_move_regret}, where we introduce a 
condition we call \emph{no move regret}, which can be viewed as a conceptual 
stepping stone between NR and GEQ. 


\subsection{Related work}
\label{sec:related_work}

Online learning studies algorithms that process data sequentially, aiming to
maximize a cumulative measure of performance while making no assumptions
about putative mechanisms that generated the sequence. We begin by overviewing
the historical roots of the adversarial sequence model, which underlie both the
classical regret-based perspective on online learning and our gradient
equilibrium perspective. 

\subsubsection{Historical roots}

The study of algorithms for analyzing arbitrary sequences under no statistical
assumptions arose historically in separate threads in various fields over
several decades. One important thread was game theory. Blackwell, inspired by
von Neumann's minimax theorem for two-player, zero-sum games, asked whether a
similar result could be obtained for vector-valued payoffs \cite{Blackwell56}.
After realizing that no such theorem could be obtained for single-move games, he
asked instead what could be achieved in a repeated, non-zero-sum game, where the
evaluation of performance was expressed in terms of a time-averaged
vector-valued payoff.  Specifically, one player was conceived of as trying to
force a time-averaged payoff to approach a given set, while the competing player
was viewed as trying to prevent the time-averaged payoff from approaching that
set.  Blackwell derived conditions under which such ``approachability'' can be
achieved, and developed a simple randomized algorithm that a player could employ
to ensure approachability, even in the face of an adversarial sequence of
actions from the opposing player.  This paper developed time-averaged
performance measures, adversarial sequences, and vector-valued payoffs, all
elements which have formed the basis of various further developments in game
theory and (subsequently) online learning. Concurrent work by Hannan, on
repeated games, further refined the notion of a time-averaged performance
measure, by introducing the ``regret'' of a player via a comparison to an
oracle who could see the entire sequence of play \cite{Hannan57}.  Recent work
has established the equivalence of the approachability framework and the
regret-minimizing framework \cite{AbernethyEtal11}.

A different thread came from information theory and finance, as methods for
investment decisions and compression were developed that eschewed traditional
stochastic assumptions, and ideas such as hedging and loss minimization began to
take shape \cite{Kelly56, Ziv-Lempel, Cover91}.  

Online learning initially developed independently of this earlier literature;
it was focused on computational issues and on specific gradient-based and
multiplicative algorithms for prediction and decision-making~\cite{Vovk1990,
littlestone-warmuth, cesa-bianchi-etal}. The performance measures were of a
statistical flavor, with the canonical examples being squared error and zero-one
loss, but the focus was on theoretical guarantees on predicting labels or
selecting models that were obtained without making classical statistical
assumptions. Given this stance, connections to the earlier literature on
arbitrary, possibly adversarial data soon became apparent. In fact, when
Cesa-Bianchi and Lugosi wrote the first comprehensive book on online learning
\cite{cesa-bianchi-lugosi}, connections to Blackwell, Cover, Hannan, and Kelly
were explicit and well-developed.  

An additional crucial ingredient in the melting pot of online learning was
optimization theory, specifically the branch of optimization theory where the
focus was iterative, gradient-based algorithms \cite{kiefer-wolfowitz,
  nemirovski-yudin}.  An important unifying step was made by Zinkevich, who
considered optimization problems involving sums of convex functions, doing so
within the regret perspective of online learning \cite{zinkevich2003online}.
This opened the door to many further connections. Indeed, it turned out that the
algorithms studied in gradient-based optimization theory---most notably, the 
mirror descent algorithm of Nemirovski and Yudin---not only exhibit favorable
convergence in optimization-theoretic analysis but also have favorable regret
guarantees in an online-learning analysis.  Thus, recent textbooks have
emphasized the powerful connections between optimization and online learning
\cite{hazan2016introduction, Orabona}. The story has been enriched by
regularization; the two main algorithmic success stories in online
learning---mirror descent and follow the regularized leader
\cite{mcmahan2013ad}---both make essential use of regularized updates, 
which are needed to handle the adversarial nature of the problem.

Further connections emerged from statistics and economics in the form
of calibration \cite{foster-vohra}. Calibration can be viewed as a weak but
useful desideratum for a learning algorithm. A well-known example is weather
prediction: when a meteorologist says ``the chance of rain today is 70\%,'' how
do we evaluate such a prediction given that it either rains or it doesn't?  A
standard calibration method would define a grid on the interval $(0,1)$,
consider the subset of days in which the prediction falls in a particular bin,
and ask that the empirical proportion of rainy days for that subset is close to
the number at the center of the bin (say 0.7). It is relatively straightforward
to calibrate predictions in batch mode, at the end of a sequence of predictions,
but it is more useful in many problems to calibrate as the data
arrive. Moreover, the goal is generally to calibrate for the particular sequence
at hand. This motivates studying calibration within an online, regret-based
framework. An example of such a framework was presented by Foster
\cite{foster99}, based on a connection to approachability.


\subsubsection{Recent developments}

A closely related line of work to this manuscript is the literature on online
conformal prediction, introduced in \cite{gibbs2021adaptive}. Building on the
literature on conformal prediction \cite{vovk2005algorithmic}, in online
conformal we seek to construct a sequence of prediction sets that satisfy a
coverage guarantee on average over the sequence: 
\[
\frac{1}{T} \sum_{t=1}^T 1\{y_t \in C_t\} \to 1-\alpha, \quad \text{as $T \to
  \infty$}, 
\]
where $y_t$, $t = 1,2,3,\dots$ are ground-truth labels, and $1-\alpha$ is a
desired coverage level. It is well-known that the adaptive conformal inference
algorithm proposed by \cite{gibbs2021adaptive} is an instance of online gradient
descent with respect to the quantile loss. Many papers have built on this
algorithm, and built on its characterization as an online optimization method,
to prove coverage and uncertainty quantification guarantees
\cite{zaffran2022adaptive, feldman2022achieving, bhatnagar2023improved,
  angelopoulos2023conformal, lekeufack2024conformal}, regret-type guarantees
\cite{chen2023ipoc, bhatnagar2023improved, ge2024stochastic, gibbs2024conformal,  
  podkopaev2024adaptive, zhang2024importance}, convergence guarantees
\cite{angelopoulos2024online, zecchin2024localized}, and more.  

In a way, our paper builds on the online conformal prediction literature and
extends these ideas to general optimization problems. As we will show in Section 
\ref{sec:quantile_losses}, online conformal prediction is equivalent to finding
a sequence of quantiles that satisfy gradient equilibrium with respect to the
quantile loss. However, our paper takes this idea much farther, defining
gradient equilibrium as a general condition, analyzing it in general, and
deriving its implications for other families of losses beyond the quantile
loss. 

Another closely related line of work is that on multivalid conformal
\cite{gupta2021online, Bastani22, jung2023batch, deng2023happymap,
  gibbs2023conformal, blot2024automatically, noarov2023high}, and especially its
online variant as described in \cite{Bastani22}. In this line of work, it is
shown that conformal prediction is an intercept-only quantile regression 
problem, and that simultaneous coverage guarantees over overlapping groups can
be obtained by running a quantile regression on the vector of group indicators,
i.e., (possibly overlapping) subsets of the domain $\cX$. The literature on
multivalidity was inspired by multicalibration \cite{hebert-johnson-etal} and
multiaccuracy \cite{Kim19},  which provide technical tools towards mitigating
systematic biases in learning systems and have strong connections to the
literature on fairness in machine learning \cite{Barocas-Hardt-Narayanan}. 
The guarantees given by multivalid conformal algorithms parallel the
multigroup debiasing guarantees in Sections \ref{sec:glms_ortho_features},
\ref{sec:logistic_lasso}, and \ref{sec:squared_ridge}, and the multiaccuracy
guarantees in Section \ref{sec:multiaccuracy}. We view our work is
complementary: in some sense, it is broader in scope (since we study gradient
equilibrium in general), while our guarantees may be less developed and
coarser. We also focus on applying online gradient descent as our main workhorse 
algorithm, rather than creating bespoke iterative procedures.    

Farther afield, we remark that the concept of gradient equilibrium can be seen
as an online variant of an estimating equation; see \cite{godambe1991estimating,
  qin1994empirical} for classical references. Recall, our condition is of the form  
\[
\frac{1}{T} \sum_{t=1}^T g_t(\theta_t) \to 0, \quad \text{as $T \to \infty$}.
\]
which---viewing the average in time as an expectation---resembles the form of a
generic estimating equation. The gradient equilibrium implications that we
examine in this paper are often interpretable as online analogs of standard
first-order optimality guarantees in an M-estimation problem. The main
difference is that we are averaging over a sequence in a nonstochastic
adversarial setting, and in comparison the standard guarantees in M-estimation
or estmating equations are given in a stochastic setting in
expectation. Revisiting Table \ref{tab:grad_eq} will give the flavor of this
parallel---the squared loss leads to an unbiasedness statement, the quantile
loss leads to coverage, and so on.

Finally, in modern optimization-based online learning analyses, we note that it
is common to analyze the convergence of the average gradient norm (for example, 
Theorem 2.1 of \cite{ghadimi2013stochastic} or Proposition 2.6 of
\cite{hazan2017efficient}). Although this problem seems relevant at first
glance, there is an important difference: these references study the
\emph{average norm of the gradient}, while we study the \emph{norm of the 
  average gradient}. These are very different objects. In gradient equilibrium,
the individual gradients need not converge to zero. This allows our theory, as
we will see in the next section, to handle a much broader class of functions
than typical analyses. 

\section{Gradient descent}
\label{sec:grad_descent}

We now study how to achieve gradient equilibrium via \emph{online gradient
  descent}, which, given an initial point $\theta_1 \in \R^d$, produces iterates 
according to:      
\begin{equation}
\label{eq:grad_descent}
\theta_{t+1} = \theta_t - \eta_t g_t(\theta_t), \quad t = 1,2,3,\dots, 
\end{equation}
for a sequence $\eta_t > 0$, $t = 1,2,3,\dots$ of step sizes. Technically,
this is the online subgradient method, since, recall, we use $g_t(\theta_t)$ to
denote a generalized subgradient of $\ell_t$ at $\theta_t$. For simplicity, we
refer to the algorithm in \eqref{eq:grad_descent} as gradient descent (GD), and 
$g_t(\theta_t)$ as a gradient.   

In what follows, we assume that each loss $\ell_t$ is finite and
subdifferentiable on all of $\R^d$. Importantly, we do not assume
convexity. We also focus on constant step sizes, $\eta_t = \eta$, for all
$t$. Later, in Section \ref{sec:arbitrary_steps}, we allow for arbitrary step
sizes.    

\subsection{Bounded or slowly growing iterates}
\label{sec:slowly_growing}

We begin by deriving a simple but useful bound on the average gradient.

\begin{proposition}
\label{prop:gd_simple}
Consider gradient descent \eqref{eq:grad_descent}, with arbitrary initialization
$\theta_1 \in \R^d$, and constant step sizes $\eta_t = \eta > 0$, for all $t$.
The average gradient satisfies
\begin{equation}
\label{eq:gd_avg_grad_identity}
\frac{1}{T} \sum_{t=1}^T g_t(\theta_t) = \frac{\theta_1 - \theta_{T+1}}{\eta T}, 
\end{equation}
and therefore
\begin{equation}
\label{eq:gd_avg_grad_bound}
\bigg\| \frac{1}{T} \sum_{t=1}^T g_t(\theta_t) \bigg\|_2 \leq
\frac{\|\theta_1\|_2 + \|\theta_{T+1}\|_2}{\eta T}.   
\end{equation}
\end{proposition}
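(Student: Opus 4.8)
The plan is to establish the identity \eqref{eq:gd_avg_grad_identity} first, since the bound \eqref{eq:gd_avg_grad_bound} follows immediately from it by the triangle inequality. The key observation is that the gradient descent update \eqref{eq:grad_descent} can be rearranged to express each gradient in terms of consecutive iterates: $g_t(\theta_t) = (\theta_t - \theta_{t+1})/\eta$. This is the crux of the argument, and it is where the constant step size is used in an essential way.

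Concretely, I would sum this relation over $t = 1, \dots, T$:
\[
\sum_{t=1}^T g_t(\theta_t) = \frac{1}{\eta} \sum_{t=1}^T (\theta_t - \theta_{t+1}).
\]
The right-hand side is a telescoping sum, which collapses to $(\theta_1 - \theta_{T+1})/\eta$. Dividing by $T$ gives \eqref{eq:gd_avg_grad_identity}. Then, applying the triangle inequality to $\|\theta_1 - \theta_{T+1}\|_2 \leq \|\theta_1\|_2 + \|\theta_{T+1}\|_2$ and dividing by $\eta T$ yields \eqref{eq:gd_avg_grad_bound}.

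There is essentially no main obstacle here — the proof is a one-line telescoping argument, and the only thing to be careful about is that the step size is constant (if $\eta_t$ varied with $t$, the sum $\sum_t (\theta_t - \theta_{t+1})/\eta_t$ would not telescope cleanly, which is precisely why the arbitrary-step-size case is deferred to Section \ref{sec:arbitrary_steps}). I would also note in passing that this identity makes transparent the role of the iterates: gradient equilibrium holds as soon as $\|\theta_{T+1}\|_2 = o(T)$, i.e., the iterates grow sublinearly, which motivates the title of the subsection and the analysis to follow.
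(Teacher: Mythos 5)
Your proposal is correct and is essentially identical to the paper's proof: both rearrange the update as $\theta_t - \theta_{t+1} = \eta\, g_t(\theta_t)$, telescope the sum over $t = 1,\dots,T$, divide by $\eta T$, and conclude the bound by the triangle inequality. No gaps.
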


\begin{proof}
Rewrite the iteration \eqref{eq:grad_descent}, with $\eta_t = \eta$, as
$\theta_t - \theta_{t+1} = \eta g_t(\theta_t)$. Adding this up over $t =
1,\dots,T$, the left-hand side telescopes, yielding   
\[
\theta_1 - \theta_{T+1} = \eta \sum_{t=1}^T g_t(\theta_t).
\]
Dividing both sides by $\eta T$ proves \eqref{eq:gd_avg_grad_identity}. The
bound in \eqref{eq:gd_avg_grad_bound} follows by simply taking the norm of both
sides, and then applying the triangle inequality.
\end{proof}

Proposition \ref{prop:gd_simple} is entirely elementary, yet the end result
reveals an important property of gradient descent with constant step sizes:
this algorithm achieves gradient equilibrium whenever the iterates remain
bounded, since, if \smash{$\sup_{t \geq 1} \|\theta_t\|_2 \leq b$} for a
constant $b > 0$ (not depending on $T$), then by \eqref{eq:gd_avg_grad_bound},      
\[
\bigg\| \frac{1}{T} \sum_{t=1}^T g_t(\theta_t) \bigg\|_2 \leq \frac{2b}{T} \to
0, \quad \text{as $T \to \infty$}. 
\]
It is worth noting that the bound $b$ does not need to be known in order to 
run GD in the first place. It is also worth reiterating the contrast to regret:
in general, GD with constant step sizes does not deliver a no-regret guarantee
(as the example in Figure \ref{fig:regret_and_bias} illustrates).   

It turns out that boundedness of the GD iterates arises naturally with some loss
functions, and we will discuss this shortly. Meanwhile, a key realization is
that boundedness is not actually necessary for gradient equilibrium: from
\eqref{eq:gd_avg_grad_bound}, we see that we only need $\|\theta_{T+1}\|_2$ to
grow sublinearly in $T$. We use the term \emph{slowly growing} to refer to an
iterate sequence which satisfies this sublinearity condition, writing it as
\smash{$\|\theta_t\|_2 = o(t)$}. Notice that from
\eqref{eq:gd_avg_grad_identity}, gradient equilibrium itself implies $\|\theta_t
- \theta_1\|_2$, hence $\|\theta_t\|_2$, is slowly growing. The next result
summarizes, for completeness.

\begin{proposition}
\label{prop:gd_sublinear}
For gradient descent \eqref{eq:grad_descent} with constant step sizes, gradient
equilibrium \eqref{eq:grad_eq} holds if and only if the iterates are slowly
growing.  
\end{proposition}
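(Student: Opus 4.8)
The plan is to simply leverage the exact identity from Proposition \ref{prop:gd_simple}, namely
\[
\frac{1}{T} \sum_{t=1}^T g_t(\theta_t) = \frac{\theta_1 - \theta_{T+1}}{\eta T},
\]
which turns the claim into an elementary equivalence between two scalar sequences. First I would handle the ``if'' direction: assuming the iterates are slowly growing, i.e.\ $\|\theta_{T+1}\|_2 = o(T)$, apply the triangle-inequality bound \eqref{eq:gd_avg_grad_bound} to get
\[
\bigg\| \frac{1}{T} \sum_{t=1}^T g_t(\theta_t) \bigg\|_2 \leq
\frac{\|\theta_1\|_2 + \|\theta_{T+1}\|_2}{\eta T} \to 0,
\]
since $\|\theta_1\|_2$ is a fixed constant and $\|\theta_{T+1}\|_2 / T \to 0$ by the slowly-growing assumption. (A minor bookkeeping point: $\|\theta_t\|_2 = o(t)$ as $t\to\infty$ is the same as $\|\theta_{T+1}\|_2 = o(T)$ as $T\to\infty$, which I would note in passing.)

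For the ``only if'' direction, suppose gradient equilibrium \eqref{eq:grad_eq} holds. Rearranging the identity \eqref{eq:gd_avg_grad_identity} gives
\[
\theta_{T+1} = \theta_1 - \eta T \cdot \frac{1}{T} \sum_{t=1}^T g_t(\theta_t),
\]
so that
\[
\frac{\|\theta_{T+1}\|_2}{T} \leq \frac{\|\theta_1\|_2}{T} + \eta \bigg\| \frac{1}{T} \sum_{t=1}^T g_t(\theta_t) \bigg\|_2 \to 0,
\]
where the first term vanishes because $\|\theta_1\|_2$ is fixed and the second vanishes by \eqref{eq:grad_eq}. Hence $\|\theta_{T+1}\|_2 = o(T)$, i.e.\ the iterates are slowly growing. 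This also recovers the remark made in the text that gradient equilibrium forces $\|\theta_t - \theta_1\|_2$, and therefore $\|\theta_t\|_2$, to be slowly growing.

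There is essentially no obstacle here: both directions are immediate consequences of the telescoping identity already established, and the only thing to be slightly careful about is the translation between the $o(t)$ notation for the iterate sequence and the $o(T)$ rate appearing in the bound, plus the trivial observation that a fixed initial term $\theta_1$ contributes a $1/T$ term that washes out. The entire argument is a two-line manipulation in each direction.
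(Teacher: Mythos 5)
Your proposal is correct and follows exactly the route the paper takes: the ``if'' direction is the bound \eqref{eq:gd_avg_grad_bound}, and the ``only if'' direction is the rearranged telescoping identity \eqref{eq:gd_avg_grad_identity}, which is precisely the (implicit) argument the paper gives in the paragraph preceding the proposition. Nothing is missing.
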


While sublinearity of the iterate norm $\|\theta_t\|_2$ clearly arises from
\eqref{eq:gd_avg_grad_identity} and \eqref{eq:gd_avg_grad_bound} as a necessary and
sufficient condition for gradient equilibrium, it would be dissatisfying for the
analysis to end here. This is because we do not yet have an idea of which
problem settings (i.e., which sequences of loss functions $\ell_t$, $t =
1,2,3,\dots$) lead to such sublinear growth once we apply GD. To address this,
we now present a general condition on the loss sequence that controls the growth
of GD iterates.

\subsection{Restorative loss functions}
\label{sec:restorative}

For a parameter $h \geq 0$, and nonnegative function $\phi : \R^d \to \R_+$,   
we will say that a loss function $\ell : \R^d \to \R$ admits a
\emph{$(h,\phi)$-restorative negative gradient field} provided that   
\begin{equation}
\label{eq:restorative}
g(\theta)^\T \theta \geq \phi(\theta), \quad \text{for all $\|\theta\|_2 > h$,
  and all generalized subgradients $g(\theta)$ of $\ell$ at $\theta$}.   
\end{equation}
Informally, this condition says that if $\theta$ is far enough away from the
origin (a distance of at least $h$), then the negative gradient field of $\ell$
is antialigned with directions of growth, because $-g(\theta)^\T \theta \leq
-\phi(\theta) \leq 0$. This field hence imparts something of a restorative
``force,'' which stops the parameter from ``escaping to infinity'' too
quickly. The parameter $h$, which we call the \emph{horizon}, is the distance at
which this force activates; $\phi$, which we call the \emph{curvature function}, 
controls the strength of the restorative force. From here on, we will refer to a
loss $\ell$ as being $(h,\phi)$-restorative, with the understanding that this is
really a property of its negative gradient field. See Figure
\ref{fig:restorative} for an illustration.    

\begin{figure}[p]
\vspace{-20pt} 
\centering
\includegraphics[width=0.6\textwidth]{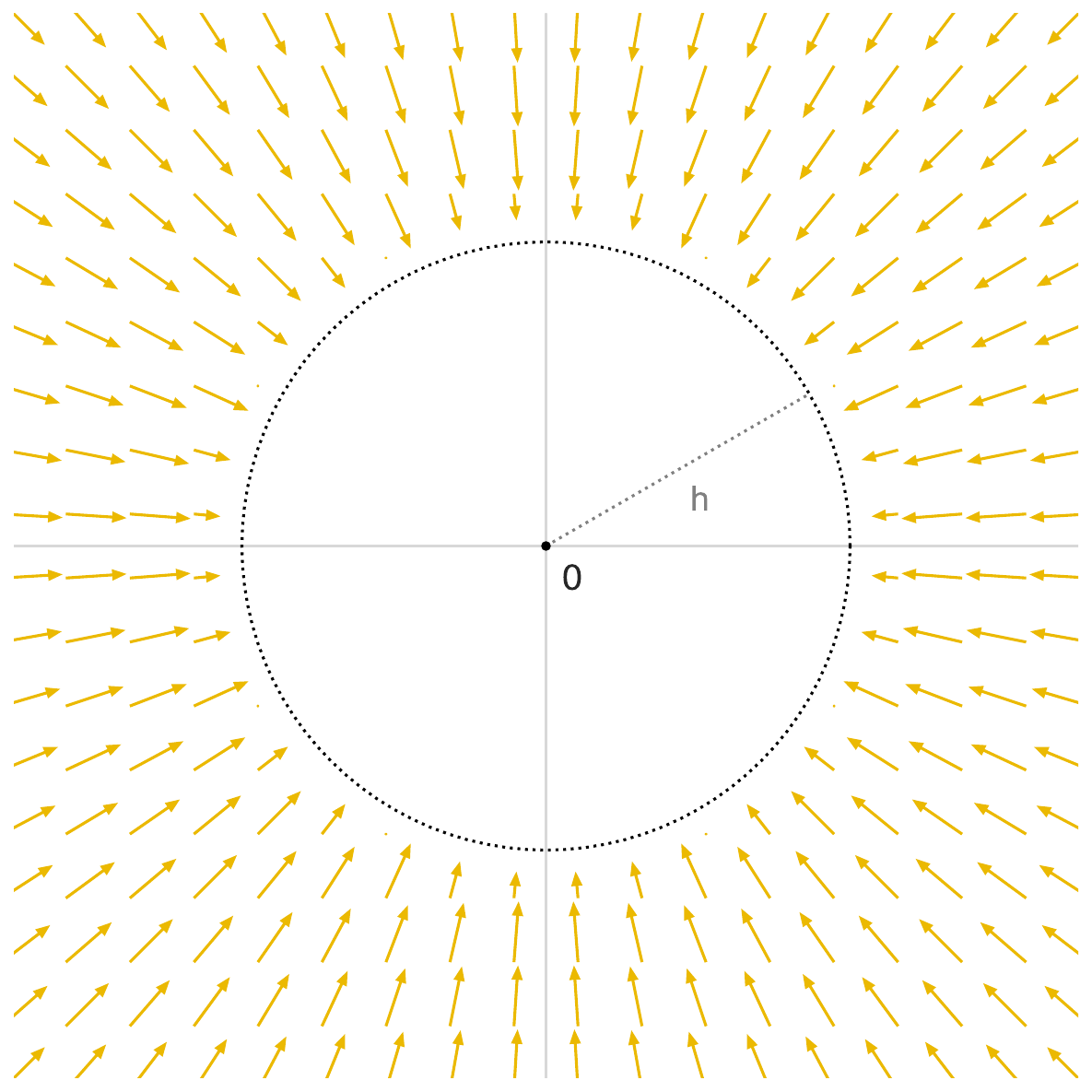} 
\caption{\it Illustration of a restorative field, where each gold arrow
  represents the negative gradient of $\ell_t$ at a particular point. Note that
  if we take this point to be $\theta_t$, then the gradient descent update would
  move $\theta_{t+1}$ in the direction of arrow. The gold arrows need to point 
  inwards outside of a radius $h$; within the radius, the field can be
  arbitrary, so it is not drawn. }   
\label{fig:restorative}

\bigskip\bigskip
\begin{subfigure}{0.475\textwidth}
\centering
\includegraphics[width=\textwidth]{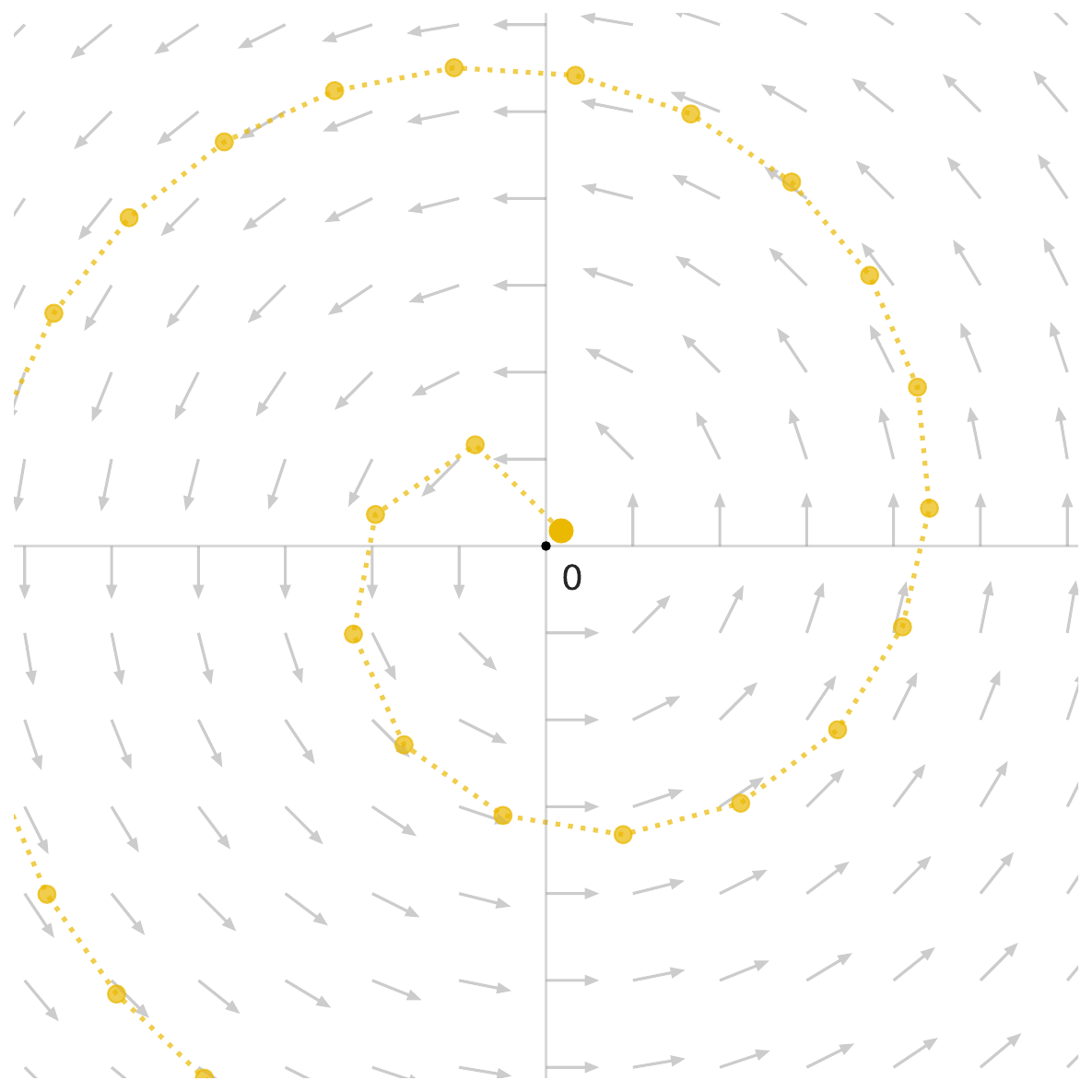}
\caption{Zero curvature}
\label{fig:zero_curvature}
\end{subfigure} \hfill
\begin{subfigure}{0.475\textwidth}
\centering
\includegraphics[width=\textwidth]{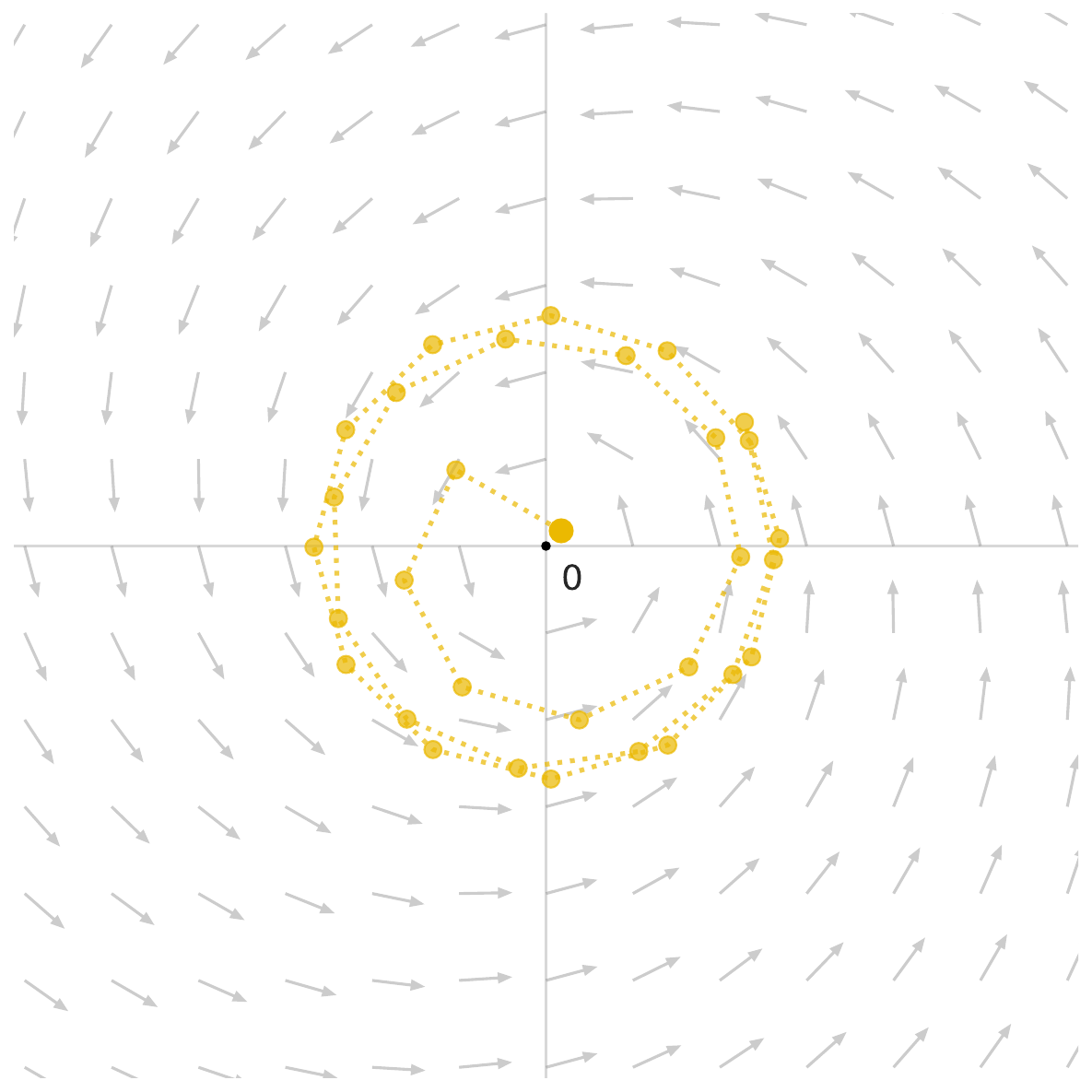}
\caption{Positive curvature}
\label{fig:pos_curvature}
\end{subfigure} 
\caption{\it Two example gradient descent trajectories, in gold, resulting from
  different negative gradient fields. In the left plot, the field has zero
  curvature, pointing at a $90^{\circ}$ angle from each line to the origin,
  causing the gradient descent iterates to spiral outwards. In the right plot,
  the field has a positive curvature of $15^{\circ}$, and this causes the
  iterates to remain bounded.}    
\label{fig:curvature}
\end{figure}

In what follows, we examine the growth of GD iterates for what we call a 
\emph{restorative loss sequence}. This is a sequence $\ell_t$, $t =
1,2,3,\dots$, such that
\begin{itemize}
\item each $\ell_t$ is $(h_t,\phi_t)$-restorative, for some horizon $h_t$
  and curvature function $\phi_t$;
\item the horizon sequence is sublinear, $h_t = o(t)$, and nondecreasing.       
\end{itemize}
It turns out that the univariate and multivariate cases are fundamentally 
different, in terms of the required curvature in the negative gradient field. We
therefore separate our investigation into cases. The proofs of all results below
are elementary, but deferred until the appendix to preserve the flow of ideas.   

\subsubsection{Univariate losses with weak curvature}

Our first result studies restorative univariate loss sequences with zero
curvature. Note that in the univariate case, $d = 1$, the restorative condition
\eqref{eq:restorative} with $\phi = 0$ reduces to    
\begin{equation} 
\label{eq:restorative_zero_curv_1d}
\sign(g(\theta)) = \sign(\theta), \quad \text{for all $|\theta| > h$, and all
  generalized subgradients $g(\theta)$ of $\ell$ at $\theta$}.    
\end{equation}
In other words, the negative gradient field points back toward the origin
beyond a horizon $h$. This property, along with a Lipschitz assumption on the
loss, ensures strong control of univariate GD iterates even under the weakest
possible curvature assumption, with $\phi = 0$. The proof of the next result is
in Appendix \ref{app:restorative_zero_curv_1d}. 

\begin{proposition}
\label{prop:restorative_zero_curv_1d} 
Let $d = 1$ and assume that each $\ell_t$ is $L$-Lipschitz, and
$(h_t,0)$-restorative. Assume also that $h_t$ is nondecreasing. Then gradient
descent \eqref{eq:grad_descent}, under arbitrary initialization $\theta_1 \in
\R$, and constant step sizes $\eta_t = \eta > 0$, for all $t$, satisfies     
\begin{equation}
\label{eq:gd_iterate_bound_zero_curv_1d}
|\theta_{T+1}| \leq \max\{ |\theta_1|, h_T \} + \eta L.
\end{equation}
In particular, if $h_t$ is sublinear, then the iterates are slowly growing. By
\eqref{eq:gd_avg_grad_bound}, they satisfy gradient equilibrium:       
\begin{equation}
\label{eq:gd_avg_grad_bound_zero_curv_1d}
\bigg| \frac{1}{T} \sum_{t=1}^T g_t(\theta_t) \bigg| \leq \frac{2
  |\theta_1|}{\eta T} + \frac{L}{T} + \frac{h_T}{\eta T} \to 0, \quad
\text{as $T \to \infty$}.  
\end{equation}
\end{proposition}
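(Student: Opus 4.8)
The plan is to track the evolution of $|\theta_t|$ across a single gradient descent step and argue that the sequence can never grow past the stated bound, by an inductive "it-stays-trapped" argument. Concretely, I would prove by induction on $t$ that $|\theta_t| \le \max\{|\theta_1|, h_t\} + \eta L$; since $h_t$ is nondecreasing, this immediately gives \eqref{eq:gd_iterate_bound_zero_curv_1d}. The base case $t=1$ is trivial since $|\theta_1| \le \max\{|\theta_1|, h_1\} + \eta L$. For the inductive step, I would split into two cases depending on where $\theta_t$ sits relative to the horizon $h_t$.

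First, the easy case: if $|\theta_t| \le h_t$, then since each $\ell_t$ is $L$-Lipschitz we have $|g_t(\theta_t)| \le L$, so the update $\theta_{t+1} = \theta_t - \eta g_t(\theta_t)$ gives $|\theta_{t+1}| \le |\theta_t| + \eta L \le h_t + \eta L \le h_{t+1} + \eta L \le \max\{|\theta_1|, h_{t+1}\} + \eta L$, using monotonicity of $h_t$. Second, the interesting case: if $|\theta_t| > h_t$, then the restorative condition \eqref{eq:restorative_zero_curv_1d} applies, so $\sign(g_t(\theta_t)) = \sign(\theta_t)$, meaning the gradient step $-\eta g_t(\theta_t)$ pushes $\theta_t$ toward the origin. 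The key sub-claim here is that a step of size at most $\eta L$ directed toward the origin cannot increase $|\theta_t|$: either it lands on the same side of the origin, in which case $|\theta_{t+1}| \le |\theta_t|$, or it overshoots the origin, in which case $|\theta_{t+1}| \le \eta L - |\theta_t| < \eta L$ (here using $|g_t(\theta_t)| \le L$ again to bound the overshoot). In both subcases $|\theta_{t+1}| \le \max\{|\theta_t|, \eta L\}$, and by the inductive hypothesis $|\theta_t| \le \max\{|\theta_1|, h_t\} + \eta L \le \max\{|\theta_1|, h_{t+1}\} + \eta L$, closing the induction.

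With \eqref{eq:gd_iterate_bound_zero_curv_1d} established, the rest is routine: if $h_t = o(t)$ then $|\theta_{T+1}| = o(T)$, i.e. the iterates are slowly growing, and plugging the bound $|\theta_{T+1}| \le \max\{|\theta_1|, h_T\} + \eta L \le |\theta_1| + h_T + \eta L$ into \eqref{eq:gd_avg_grad_bound} from Proposition \ref{prop:gd_simple} yields
\[
\bigg| \frac{1}{T} \sum_{t=1}^T g_t(\theta_t) \bigg| \le \frac{|\theta_1| + |\theta_{T+1}|}{\eta T} \le \frac{2|\theta_1|}{\eta T} + \frac{L}{T} + \frac{h_T}{\eta T},
\]
and each term vanishes as $T \to \infty$ (the last because $h_T = o(T)$), giving \eqref{eq:gd_avg_grad_bound_zero_curv_1d}.

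The main obstacle — really the only place that requires care — is the case analysis in the inductive step when $|\theta_t| > h_t$: one has to correctly handle the possibility that the gradient step overshoots the origin and land on a clean bound that still respects the induction. The one-dimensionality is essential here, since "the step points toward the origin" is only a well-defined sign condition when $d=1$; this is exactly why the multivariate case (treated separately in the paper) genuinely needs positive curvature.
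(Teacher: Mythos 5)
Your proof is correct and follows essentially the same route as the paper's: induction on the iterate magnitude, splitting on whether $|\theta_t|$ exceeds the horizon, using $L$-Lipschitzness in the first case and the sign (restorative) condition plus an $\eta L$ overshoot bound in the second, then concluding via \eqref{eq:gd_avg_grad_bound}. The only cosmetic difference is that your stated invariant pairs $\theta_t$ with $h_t$ rather than the lag-one bound in \eqref{eq:gd_iterate_bound_zero_curv_1d}, but your case analysis actually delivers $|\theta_{t+1}|\leq \max\{|\theta_1|,h_t\}+\eta L$ directly, so the claimed bound follows exactly.
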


When the horizon sequence is constant, $h_t = h$ for all $t$, the result in
\eqref{eq:gd_iterate_bound_zero_curv_1d} shows that the iterates remain bounded,
and \eqref{eq:gd_avg_grad_bound_zero_curv_1d} shows that the average gradient
diminishes at the rate $1/T$. We note that this effectively generalizes results
from \cite{gibbs2021adaptive, angelopoulos2023conformal} on quantile losses,
which, under a boundedness assumption on the data, satisfy the property
\eqref{eq:restorative_zero_curv_1d} for a global constant $h > 0$. We return to
this setting in Section \ref{sec:quantile_losses}.

\subsubsection{Multivariate losses with weak curvature}

When $d \geq 2$, the story is quite different under zero curvature in a
restorative negative gradient field, as the next result shows. Its proof is
given in Appendix \ref{app:restorative_zero_curv}.  

\begin{proposition}
\label{prop:restorative_zero_curv} 
Assume that each $\ell_t$ is $L$-Lipschitz and $(h_t,0)$-restorative. Then
gradient descent \eqref{eq:grad_descent}, with arbitrary initialization
$\theta_1 \in \R^d$, and constant step sizes $\eta_t = \eta > 0$, for all $t$,
satisfies        
\begin{equation}
\label{eq:gd_iterate_bound_zero_curv}
\|\theta_{T+1}\|_2 \leq \sqrt{\|\theta_1\|_2^2 + \eta^2 L^2 T + 2 \eta L
  \sum_{t=1}^T h_t}.  
\end{equation}
In particular, if $h_t$ is sublinear and nondecreasing, then the iterates are
slowly growing. By \eqref{eq:gd_avg_grad_bound}, they satisfy gradient
equilibrium:        
\begin{equation}
\label{eq:gd_avg_grad_bound_zero_curv}
\bigg\| \frac{1}{T} \sum_{t=1}^T g_t(\theta_t) \bigg\|_2 \leq \frac{2
  \|\theta_1\|_2}{\eta T} + \sqrt{\frac{L^2}{T} + \frac{2 L h_T}{\eta T}} \to 0,
\quad \text{as $T \to \infty$}.  
\end{equation}
\end{proposition}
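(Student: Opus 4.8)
The plan is to track the squared norm $\|\theta_{t+1}\|_2^2$ along the gradient descent recursion and show it cannot grow faster than linearly in $t$. Expanding the update $\theta_{t+1} = \theta_t - \eta g_t(\theta_t)$ gives the standard identity
\[
\|\theta_{t+1}\|_2^2 = \|\theta_t\|_2^2 - 2\eta\, g_t(\theta_t)^\T \theta_t + \eta^2 \|g_t(\theta_t)\|_2^2.
\]
The last term is bounded by $\eta^2 L^2$ using the $L$-Lipschitz assumption (which forces $\|g_t(\theta_t)\|_2 \le L$). The crux is to control the cross term $-2\eta\, g_t(\theta_t)^\T \theta_t$ from above. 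Here the restorative property enters: if $\|\theta_t\|_2 > h_t$, then \eqref{eq:restorative} with $\phi = 0$ gives $g_t(\theta_t)^\T \theta_t \ge 0$, so the cross term is $\le 0$ and contributes nothing. If instead $\|\theta_t\|_2 \le h_t$, we bound $-2\eta\, g_t(\theta_t)^\T \theta_t \le 2\eta \|g_t(\theta_t)\|_2 \|\theta_t\|_2 \le 2\eta L h_t$ by Cauchy--Schwarz and Lipschitzness. In either case, $-2\eta\, g_t(\theta_t)^\T \theta_t \le 2\eta L h_t$, so we obtain the one-step inequality
\[
\|\theta_{t+1}\|_2^2 \le \|\theta_t\|_2^2 + \eta^2 L^2 + 2\eta L h_t.
\]

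Summing this telescoping inequality over $t = 1, \dots, T$ yields
\[
\|\theta_{T+1}\|_2^2 \le \|\theta_1\|_2^2 + \eta^2 L^2 T + 2\eta L \sum_{t=1}^T h_t,
\]
which is exactly \eqref{eq:gd_iterate_bound_zero_curv} after taking square roots. For the slowly growing conclusion, if $h_t$ is nondecreasing and sublinear then $\sum_{t=1}^T h_t \le T h_T = o(T^2)$, and the right-hand side of the bound is $o(T^2)$, so $\|\theta_{T+1}\|_2 = o(T)$. Plugging into \eqref{eq:gd_avg_grad_bound} from Proposition \ref{prop:gd_simple} gives
\[
\bigg\| \frac{1}{T} \sum_{t=1}^T g_t(\theta_t) \bigg\|_2 \le \frac{\|\theta_1\|_2 + \|\theta_{T+1}\|_2}{\eta T} \le \frac{2\|\theta_1\|_2}{\eta T} + \frac{1}{\eta T}\sqrt{\eta^2 L^2 T + 2\eta L T h_T},
\]
where I used $h_t \le h_T$ in the sum and the elementary bound $\sqrt{a+b} \le \sqrt{a} + \sqrt{b}$ applied to separate the $\|\theta_1\|_2^2$ term. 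Simplifying the last term gives $\sqrt{L^2/T + 2Lh_T/(\eta T)}$, recovering \eqref{eq:gd_avg_grad_bound_zero_curv}, and this tends to zero since $h_T = o(T)$.

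The only genuine subtlety — and the reason the multivariate case differs from the univariate one — is that the cross-term argument, when $\|\theta_t\|_2 \le h_t$, is lossy: we allow a full step of size $\eta L$ that can point outward, so the squared norm can increase by $\Theta(\eta L h_t)$ at each such step, and there is no mechanism under $\phi = 0$ to guarantee these increases cancel rather than accumulate. This is why we end up with a $\sqrt{T}$-type growth in \eqref{eq:gd_iterate_bound_zero_curv} rather than the bounded-iterate conclusion of Proposition \ref{prop:restorative_zero_curv_1d}; in one dimension, sign-based cancellation is automatic (the iterate cannot cross the origin without passing through $[-h,h]$ and the signs align), but in $d \ge 2$ the iterate can orbit the origin indefinitely, as Figure \ref{fig:curvature} illustrates. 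No further work is needed to handle this — it simply reflects the weaker bound being claimed — but it is worth noting that one cannot hope to prove boundedness here without a positive curvature assumption.
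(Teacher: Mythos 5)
Your proof is correct and follows essentially the same route as the paper's: expand the squared norm of the update, bound the gradient term by $\eta^2 L^2$ via Lipschitzness, control the cross term by $2\eta L h_t$ via the two-case split (restorative when $\|\theta_t\|_2 > h_t$, Cauchy--Schwarz otherwise), sum, and then combine with \eqref{eq:gd_avg_grad_bound} using $h_t \leq h_T$ and $\sqrt{a+b} \leq \sqrt{a}+\sqrt{b}$. The closing remarks on why $\sqrt{T}$ growth is unavoidable under zero curvature in $d \geq 2$ match the discussion following the proposition in the paper.
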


Note that Proposition \ref{prop:restorative_zero_curv} can never guarantee
bounded iterates. The upper bound \eqref{eq:gd_iterate_bound_zero_curv} on
$\|\theta_{T+1}\|_2$ grows at best at the rate \smash{$\sqrt{T}$}, and hence the
bound on the average gradient in \eqref{eq:gd_avg_grad_bound_zero_curv} grows at
best at the rate \smash{$1/\sqrt{T}$}. These rates are achieved by a constant
horizon sequence, $h_t = h$ for all $t$.

Meanwhile, it is not difficult to argue that Proposition
\ref{prop:restorative_zero_curv} cannot be improved in terms of its dependence
on $T$ when $d \geq 2$, under the stated assumptions. Suppose that
\smash{$g_t(\theta_t)^\T \theta_t = 0$} and $\|g_t(\theta_t)\|_2 = L$, for all
$t$ (this can be satisfied when $d \geq 2$, but not when $d =  1$). Then we   
have  
\begin{align*}
\|\theta_{T+1}\|_2^2 
&= \|\theta_T\|_2^2 + \eta^2 \|g_T(\theta_T)\|_2^2 - 2 \eta 
  g_T(\theta_T)^\T \theta_T \\ 
&= \|\theta_T\|_2^2 + \eta^2 L^2 \\
&=  \|\theta_1\|_2^2 + \eta^2 L^2 T,
\end{align*}
thus $\|\theta_{T+1}\|_2$ grows precisely at the rate \smash{$\sqrt{T}$}. 
Figure \ref{fig:zero_curvature} gives an illustration. Hence we see that, in
order to maintain bounded iterates in multiple dimensions, we need to strengthen
the curvature assumption.   

\subsubsection{Multivariate losses with strong curvature}

Our next result considers restorative loss sequences under a positive but
constant curvature assumption. Its proof is given in Appendix
\ref{app:restorative_pos_curv}.    

\begin{proposition}
\label{prop:restorative_pos_curv} 
Assume that each $\ell_t$ is $L$-Lipschitz and $(h_t,\phi_t)$-restorative, 
where the curvature is a sufficiently large positive constant:
\begin{equation}
\label{eq:pos_curvature}
\phi_t(\theta) \geq \frac{\eta L^2}{2}, \quad \text{for all $\|\theta\|_2 >
  h_t$}.   
\end{equation}
Assume also that $h_t$ is nondecreasing. Then gradient descent
\eqref{eq:grad_descent}, with arbitrary initialization $\theta_1 \in \R^d$, and
constant step sizes $\eta_t = \eta > 0$, for all $t$, satisfies   
\begin{equation}
\label{eq:gd_iterate_bound_pos_curv}
\|\theta_{T+1}\|_2 \leq \max\{ \|\theta_1\|_2, h_T \} + \eta L. 
\end{equation}
Thus if $h_t$ is sublinear, then the iterates are slowly growing. By
\eqref{eq:gd_avg_grad_bound}, they satisfy gradient equilibrium:     
\begin{equation}
\label{eq:gd_avg_grad_bound_pos_curv}
\bigg\| \frac{1}{T} \sum_{t=1}^T g_t(\theta_t) \bigg\|_2 \leq \frac{2
  \|\theta_1\|_2}{\eta T} + \frac{L}{T} + \frac{h_T}{\eta T} \to 0, \quad
\text{as $T \to \infty$}.  
\end{equation}
\end{proposition}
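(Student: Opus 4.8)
The plan is to control the iterate norms by a one-step contraction-or-bounded-jump argument, and then convert this into gradient equilibrium via Proposition~\ref{prop:gd_simple}. First I would expand the squared norm of the update~\eqref{eq:grad_descent}, with $\eta_t = \eta$:
\[
\|\theta_{t+1}\|_2^2 = \|\theta_t\|_2^2 - 2\eta\, g_t(\theta_t)^\T \theta_t + \eta^2 \|g_t(\theta_t)\|_2^2 .
\]
Lipschitzness of $\ell_t$ gives $\|g_t(\theta_t)\|_2 \le L$, so the last term is at most $\eta^2 L^2$. I would then split on the position of $\theta_t$. If $\|\theta_t\|_2 > h_t$, the $(h_t,\phi_t)$-restorative property~\eqref{eq:restorative} together with the curvature lower bound~\eqref{eq:pos_curvature} gives $g_t(\theta_t)^\T\theta_t \ge \phi_t(\theta_t) \ge \eta L^2/2$, and substituting yields $\|\theta_{t+1}\|_2^2 \le \|\theta_t\|_2^2 - \eta^2 L^2 + \eta^2 L^2 = \|\theta_t\|_2^2$; the constant $\eta L^2/2$ in~\eqref{eq:pos_curvature} is precisely what is needed to cancel the gradient-magnitude term, which is why the bound does not improve on~\eqref{eq:gd_iterate_bound_zero_curv_1d}. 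If instead $\|\theta_t\|_2 \le h_t$, the triangle inequality and Lipschitzness give the crude bound $\|\theta_{t+1}\|_2 \le \|\theta_t\|_2 + \eta L \le h_t + \eta L$.

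With these two one-step facts I would run an induction to prove $\|\theta_{t+1}\|_2 \le \max\{\|\theta_1\|_2, h_t\} + \eta L$ for every $t \ge 1$, which is exactly~\eqref{eq:gd_iterate_bound_pos_curv} at $t = T$. Writing $M_t = \max\{\|\theta_1\|_2, h_t\} + \eta L$ and noting that $M_t$ is nondecreasing because $h_t$ is, the base case $t=1$ is immediate from the two one-step facts applied at $\theta_1$. For the inductive step, suppose $\|\theta_t\|_2 \le M_{t-1}$: if $\|\theta_t\|_2 > h_t$ then the first case gives $\|\theta_{t+1}\|_2 \le \|\theta_t\|_2 \le M_{t-1} \le M_t$, and if $\|\theta_t\|_2 \le h_t$ then the second case gives $\|\theta_{t+1}\|_2 \le h_t + \eta L \le M_t$. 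Either way the hypothesis is maintained.

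Finally, if $h_t = o(t)$ then $M_T = o(T)$, so the iterates are slowly growing; substituting $\|\theta_{T+1}\|_2 \le \max\{\|\theta_1\|_2, h_T\} + \eta L \le \|\theta_1\|_2 + h_T + \eta L$ into the bound~\eqref{eq:gd_avg_grad_bound} of Proposition~\ref{prop:gd_simple} gives~\eqref{eq:gd_avg_grad_bound_pos_curv}, which vanishes as $T \to \infty$.

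I do not expect a genuine obstacle here, since the argument is entirely elementary. The only point that requires care is arranging the induction correctly: the one-step case split compares $\|\theta_t\|_2$ against the \emph{current} horizon $h_t$, whereas the bound carried forward from the previous step is phrased with $h_{t-1}$; monotonicity of $h_t$, and hence of $M_t$, is exactly what reconciles the two.
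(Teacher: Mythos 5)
Your proposal is correct and follows essentially the same route as the paper's proof: an induction with the same case split on $\|\theta_t\|_2$ versus $h_t$, using the triangle inequality plus Lipschitzness in the small-norm case and the squared-norm expansion with the curvature bound \eqref{eq:pos_curvature} in the large-norm case, then feeding the iterate bound into \eqref{eq:gd_avg_grad_bound}. The only cosmetic difference is that you phrase the large-norm case as an exact one-step contraction $\|\theta_{t+1}\|_2 \le \|\theta_t\|_2$ before invoking the inductive hypothesis, whereas the paper substitutes the hypothesis first and then drops the nonpositive term; these are the same estimate.
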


We note that Proposition \ref{prop:restorative_pos_curv} produces bounds of
the same form as in Proposition \ref{prop:restorative_zero_curv_1d}, but it does
so under the minimum positive curvature assumption \eqref{eq:pos_curvature}. For
a constant horizon sequence, $h_t = h$ for all $t$, the upper bound on 
$\|\theta_{T+1}\|_2$ in \eqref{eq:gd_iterate_bound_pos_curv} is of constant
order, and the average gradient bound \eqref{eq:gd_avg_grad_bound_pos_curv}
scales at the rate $1/T$. Figure \ref{fig:pos_curvature} gives an illustration.

Our next result weakens the Lipschitz condition on the loss, while assuming  
even stronger curvature.  Its proof is given in Appendix
\ref{app:restorative_quad_curv}.

\begin{proposition}
\label{prop:restorative_quad_curv} 
Assume that each $\ell_t$ is $L_t$-Lipschitz on $\{ \theta \in \R^d :
\|\theta\|_2 \leq h_t \}$, equivalently,  
\begin{equation}
\label{eq:local_lipschitz_bound}
\|g_t(\theta)\|_2 \leq L_t, \quad \text{for all $\|\theta\|_2 \leq h_t$},  
\end{equation}
and assume that $\ell_t$ is $(h_t,\phi_t)$-restorative, where the curvature is a
sufficiently large quadratic in the gradient:   
\begin{equation}
\label{eq:quad_curvature}
\phi_t(\theta) \geq \frac{\eta}{2} \|g_t(\theta)\|_2^2, \quad \text{for all 
  $\|\theta\|_2 > h_t$}.  
\end{equation}
Assume also that $h_t,L_t$ are nondecreasing. Then gradient descent
\eqref{eq:grad_descent}, with arbitrary initialization $\theta_1 \in \R^d$, and
constant step sizes $\eta_t = \eta > 0$, for all $t$, satisfies      
\begin{equation}
\label{eq:gd_iterate_bound_quad_curv}
\|\theta_{T+1}\|_2 \leq \max\{ \|\theta_1\|_2, h_T \} + \eta L_T. 
\end{equation}
Thus if $h_t,L_t$ are sublinear, then the iterates are slowly growing. By
\eqref{eq:gd_avg_grad_bound}, they satisfy gradient equilibrium:       
\begin{equation}
\label{eq:gd_avg_grad_bound_quad_curv}
\bigg\| \frac{1}{T} \sum_{t=1}^T g_t(\theta_t) \bigg\|_2 \leq \frac{2
  \|\theta_1\|_2}{\eta T} + \frac{L_T}{T} + \frac{h_T}{\eta T} \to 0, \quad
\text{as $T \to \infty$}.  
\end{equation}
\end{proposition}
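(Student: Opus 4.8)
The plan is to mimic the one-step norm-control argument behind the earlier restorative propositions, but now exploit the stronger quadratic curvature bound \eqref{eq:quad_curvature} to cancel the squared-gradient term in the gradient descent recursion. Starting from $\theta_{t+1} = \theta_t - \eta g_t(\theta_t)$, I would expand
\[
\|\theta_{t+1}\|_2^2 = \|\theta_t\|_2^2 - 2\eta\, g_t(\theta_t)^\T \theta_t + \eta^2 \|g_t(\theta_t)\|_2^2,
\]
and split into two cases according to whether $\theta_t$ lies outside or inside the horizon ball of radius $h_t$.

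If $\|\theta_t\|_2 > h_t$, then $\ell_t$ being $(h_t,\phi_t)$-restorative gives $g_t(\theta_t)^\T \theta_t \geq \phi_t(\theta_t)$, and the curvature assumption \eqref{eq:quad_curvature} gives $\phi_t(\theta_t) \geq \frac{\eta}{2}\|g_t(\theta_t)\|_2^2$; plugging these into the expansion makes the sum of the last two terms nonpositive, so $\|\theta_{t+1}\|_2 \leq \|\theta_t\|_2$, i.e., the iterate norm cannot increase on such a step. If instead $\|\theta_t\|_2 \leq h_t$, the local Lipschitz bound \eqref{eq:local_lipschitz_bound} gives $\|g_t(\theta_t)\|_2 \leq L_t$, so by the triangle inequality $\|\theta_{t+1}\|_2 \leq \|\theta_t\|_2 + \eta L_t \leq h_t + \eta L_t$.

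With this dichotomy, I would establish by induction on $t$ that $\|\theta_{t+1}\|_2 \leq \max\{\|\theta_1\|_2,\, h_t + \eta L_t\}$, which is no larger than the claimed bound $\max\{\|\theta_1\|_2, h_t\} + \eta L_t$ since $\eta L_t \geq 0$. The base case $t = 1$ is exactly the two cases above. For the inductive step: in the ``outside'' case the iterate norm does not grow, so the inductive hypothesis together with monotonicity of $h_t$ and $L_t$ gives the bound; in the ``inside'' case the direct estimate $\|\theta_{t+1}\|_2 \leq h_t + \eta L_t$ already suffices. Finally, substituting \eqref{eq:gd_iterate_bound_quad_curv} into the general bound \eqref{eq:gd_avg_grad_bound} and using $\max\{\|\theta_1\|_2, h_T\} \leq \|\theta_1\|_2 + h_T$ yields \eqref{eq:gd_avg_grad_bound_quad_curv}; sublinearity of $h_t$ and $L_t$ then makes the right-hand side vanish as $T \to \infty$.

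I do not expect a genuine obstacle: the argument is entirely elementary, the one real idea being that the step size in \eqref{eq:quad_curvature} is tuned so the curvature exactly dominates the $\eta^2\|g_t(\theta_t)\|_2^2$ term. The only thing requiring a little care is the bookkeeping in the induction---one must track the possible ``overshoot'' on the step where an iterate crosses from inside to outside the horizon ball, which is precisely what the additive $\eta L_t$ absorbs, and the monotonicity of $h_t, L_t$ is what lets the per-step bound carry forward without being re-derived at later times.
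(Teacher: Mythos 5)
Your proposal is correct and follows essentially the same route as the paper's proof: an induction with a case split on whether $\|\theta_t\|_2$ exceeds $h_t$, using local Lipschitzness inside the horizon ball and the quadratic curvature \eqref{eq:quad_curvature} to make $-2\eta\, g_t(\theta_t)^\T\theta_t + \eta^2\|g_t(\theta_t)\|_2^2 \leq 0$ outside it, then feeding the iterate bound into \eqref{eq:gd_avg_grad_bound}. The only (cosmetic) difference is that you phrase the outside-the-horizon case as ``the norm does not increase'' and carry the inductive bound forward, whereas the paper plugs the inductive hypothesis into the expanded square before adding the nonpositive increment; the ingredients and conclusion are identical.
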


Like Proposition \ref{prop:restorative_pos_curv}, this result gives bounded
iterates \eqref{eq:gd_iterate_bound_quad_curv}, and the average gradient
\eqref{eq:gd_avg_grad_bound_quad_curv} vanishes at the rate $1/T$, for a
constant horizon sequence, $h_t = h$ for all $t$. The difference is that we have
only assumed local Lipschitzness \eqref{eq:local_lipschitz_bound}, along with
stronger curvature \eqref{eq:quad_curvature} in the negative gradient field.

\subsection{Examples of restorative losses}
\label{sec:examples}

We work through several examples of restorative losses. 

\subsubsection{Quantile losses}
\label{sec:quantile_losses}

First, consider $\ell_t(\theta) = \rho_\tau(y_t - \theta)$, where $\rho_\tau$
denotes the quantile loss at level $\tau \in [0,1]$, i.e., $\rho_\tau(u) = \tau 
|u|$ for $u \geq 0$ and $(1-\tau) |u|$ for $u < 0$. We will show that this loss  
exhibits the restorative condition \eqref{eq:restorative_zero_curv_1d} with zero
curvature. For $\theta \not= y_t$, the loss $\ell_t$ is differentiable at
$\theta$ with gradient   
\[
g_t(\theta) = 
\begin{cases}
-\tau & \text{if $\theta < y_t$}, \\
1-\tau & \text{if $\theta > y_t$}.
\end{cases}
\]
We can see that \eqref{eq:restorative_zero_curv_1d} is met, provided that we
choose a horizon $h \geq |y_t|$. Note also that $\ell_t$ is Lipschitz with
constant $L = \max\{\tau, 1-\tau\} \leq 1$. The next result summarizes the
conclusion for quantile loss. Its proof is in Appendix
\ref{app:quantile_loss_zero_curv}.  

\begin{corollary}
\label{cor:quantile_loss_zero_curv}
Let $\ell_t(\theta) = \rho_\tau(y_t - \theta)$. Then $\ell_t$ satisfies the
$(h_t,0)$-restorative condition \eqref{eq:restorative_zero_curv_1d} for any $h_t
\geq |y_t|$. Hence supposing each $|y_t| \leq b_t$, where $b_t$ is sublinear and 
nondecreasing, we can set $h_t = b_t$, and by
\eqref{eq:gd_avg_grad_bound_zero_curv_1d} (and assuming we take $g_t(\theta_t) =
-\tau$ whenever $\theta_t = y_t$), 
\begin{equation}
\label{eq:quantile_loss_avg_grad_bound}
\bigg| \frac{1}{T} \sum_{t=1}^T 1\{y_t \leq \theta_t\} - \tau \bigg|  
\leq \frac{2 |\theta_1| + \eta + b_T}{\eta T}.
\end{equation}
\end{corollary}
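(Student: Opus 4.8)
The plan is to derive Corollary~\ref{cor:quantile_loss_zero_curv} by specializing Proposition~\ref{prop:restorative_zero_curv_1d} to the quantile loss, so the work divides into: (i) checking that $\ell_t(\theta) = \rho_\tau(y_t - \theta)$ meets the hypotheses of that proposition with $L = 1$ and horizon $h_t = b_t$; and (ii) rewriting the resulting average-gradient bound in terms of the coverage indicator.

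For (i), I would first record the subgradient, as in the surrounding text: $g_t(\theta) = -\tau$ for $\theta < y_t$, $g_t(\theta) = 1-\tau$ for $\theta > y_t$, and the subdifferential at $\theta = y_t$ is the interval $[-\tau, 1-\tau]$. Lipschitzness with constant $L = \max\{\tau, 1-\tau\} \leq 1$ is then immediate, since every generalized subgradient of $\ell_t$ lies in $[-\tau,1-\tau] \subseteq [-1,1]$. For the $(h_t,0)$-restorative property \eqref{eq:restorative_zero_curv_1d} with any $h_t \geq |y_t|$: if $\theta > h_t \geq |y_t| \geq y_t$ then $\theta > y_t$, so $g_t(\theta) = 1-\tau \geq 0$ and $g_t(\theta)\,\theta \geq 0$; symmetrically, if $\theta < -h_t \leq -|y_t| \leq y_t$ then $g_t(\theta) = -\tau \leq 0$ and again $g_t(\theta)\,\theta \geq 0$ (note we never need the restorative inequality at the nonsmooth point $\theta = y_t$, since $y_t \in [-h_t,h_t]$). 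Taking $h_t = b_t$, which is sublinear and nondecreasing by assumption, all hypotheses of Proposition~\ref{prop:restorative_zero_curv_1d} hold, so \eqref{eq:gd_avg_grad_bound_zero_curv_1d} gives $\big| \frac{1}{T}\sum_{t=1}^T g_t(\theta_t) \big| \leq \frac{2|\theta_1|}{\eta T} + \frac{L}{T} + \frac{h_T}{\eta T} \leq \frac{2|\theta_1|}{\eta T} + \frac{1}{T} + \frac{b_T}{\eta T} = \frac{2|\theta_1| + \eta + b_T}{\eta T}$.

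For (ii), the only remaining ingredient is the identity $g_t(\theta_t) = 1\{y_t \leq \theta_t\} - \tau$ for every $t$, where the subgradient at $\theta_t = y_t$ is fixed by the stated tie-breaking convention, chosen precisely so this holds — this is the one spot where one must check the tie-break is consistent with the $\leq$ appearing in the coverage event. Indeed, for $\theta_t > y_t$ the identity reads $1-\tau = 1-\tau$ and for $\theta_t < y_t$ it reads $-\tau = -\tau$. Averaging this identity over $t = 1,\dots,T$ turns $\frac{1}{T}\sum_{t=1}^T g_t(\theta_t)$ into $\frac{1}{T}\sum_{t=1}^T 1\{y_t \leq \theta_t\} - \tau$, and substituting into the bound from (i) yields \eqref{eq:quantile_loss_avg_grad_bound}. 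There is no genuinely hard step here — the statement is a direct corollary — and the only subtlety worth flagging is this bookkeeping at the nonsmooth point, together with the one-line sign check that $h_t \geq |y_t|$ already suffices for restorativeness.
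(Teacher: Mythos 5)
Your proposal is correct and follows essentially the same route as the paper: the verification of Lipschitzness and the $(h_t,0)$-restorative property with $h_t \geq |y_t|$ matches the discussion preceding the corollary, and the appendix proof is exactly your step (ii), rewriting $\frac{1}{T}\sum_{t=1}^T g_t(\theta_t)$ as $\frac{1}{T}\sum_{t=1}^T 1\{y_t \leq \theta_t\} - \tau$ and invoking \eqref{eq:gd_avg_grad_bound_zero_curv_1d}. One small caveat (shared with the paper's own write-up, so not a gap in your argument relative to it): with the literally stated convention $g_t(\theta_t) = -\tau$ at ties, the identity actually yields the strict indicator $1\{y_t < \theta_t\} - \tau$, and the choice consistent with the $\leq$ event in \eqref{eq:quantile_loss_avg_grad_bound} is $g_t(\theta_t) = 1-\tau$ at $\theta_t = y_t$; either convention gives the same bound up to the tie terms, but your claim that the stated convention is ``chosen precisely so this holds'' is not quite right as written.
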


The result in Corollary \ref{cor:quantile_loss_zero_curv} reproduces an online
conformal prediction result from \cite{angelopoulos2023conformal}. These authors
consider a setting in which each $y_t$ is replaced by a score, denoted $s_t = 
s(x_t,y_t)$, where $s$ is a score function, $x_t$ is a feature, and $y_t$ is a
response. The learned parameter $\theta_t$ is used to build a prediction set via      
\[
C_t(x_t) = \{ y : s(x_t, y) \leq \theta_t \},
\]
for the (unseen) response $y_t$ at $t$. Because $y_t \in C_t(x_t) \iff s_t \leq
\theta_t$, note that \eqref{eq:quantile_loss_avg_grad_bound} applied to the
scores $s_t$, $t = 1,2,3,\dots$ (which we assume are bounded in magnitude by
$b$) translates into the following guarantee:
\begin{equation}
\label{eq:quantile_loss_coverage_bound}
\bigg| \frac{1}{T} \sum_{t=1}^T 1\{y_t \in C_t(x_t) \} - \tau \bigg| \leq
\frac{\eta + b}{\eta T},
\end{equation}
where we set $\theta_1 = 0$ for simplicity. This problem setting will be
studied empirically in Section \ref{sec:quantile_tracking}.  

\subsubsection{Squared losses}
\label{sec:squared_losses}

Moving on to squared loss, \smash{$\ell_t(\theta) = \frac{1}{2} (y_t -
  \theta)^2$}, we will show that this loss satisfies the restorative property 
\eqref{eq:restorative} with quadratic curvature \eqref{eq:quad_curvature}, the
strongest kind of curvature. For such $\ell_t$, this condition becomes           
\[
\theta (\theta - y_t) \geq \frac{\eta}{2} (y_t - \theta)^2, \quad \text{for
  $|\theta| > h_t$}. 
\]
Roughly speaking, we can always satisfy this condition for large enough
$|\theta|$, and $\eta \leq 2$: if (say) $\theta_t = 2y_t > 0$, then the
left-hand side is $2y_t \cdot y_t = 2y_t^2$ but the right-hand side is only
$(\eta / 2) y_t^2 \leq y_t^2$. The next result makes this idea precise. Its
proof is given in Appendix \ref{app:squared_loss_quad_curv}.    

\begin{corollary}
\label{cor:squared_loss_quad_curv}
Let \smash{$\ell_t(\theta) = \frac{1}{2} (y_t - \theta)^2$}. Fixing any $\delta
\in (0,1)$, $\ell_t$ satisfies the $(h_t,\phi_t)$-restorative condition
\eqref{eq:restorative} with quadratic curvature \eqref{eq:quad_curvature}, for  
any horizon $h_t \geq |y_t|/\delta$ and step size $\eta \leq 2(1-\delta) /
(1+\delta)^2$. Furthermore, suppose each $|y_t| \leq b_t$. Then the local 
Lipschitz condition \eqref{eq:local_lipschitz_bound} is met with $L_t =
h_t+b_t$. If $b_t$ is sublinear and nondecreasing, then for any $\delta \in
(0,1)$, we can choose $\eta \leq 2(1-\delta)/(1+\delta)^2$ and $h_t =
b_t/\delta$, and by \eqref{eq:gd_avg_grad_bound_quad_curv},
\begin{equation}
\label{eq:squared_loss_avg_grad_bound}
\bigg| \frac{1}{T} \sum_{t=1}^T \theta_t - \frac{1}{T} \sum_{t=1}^T y_t \bigg| 
\leq \frac{2 |\theta_1| + b_T \eta (1 + 1/\delta) + b_T/\delta}{\eta T}.
\end{equation}
\end{corollary}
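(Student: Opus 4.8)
The plan is to verify the three hypotheses of Proposition~\ref{prop:restorative_quad_curv} for the squared loss and then read off its conclusion \eqref{eq:gd_avg_grad_bound_quad_curv}. Since $\ell_t(\theta)=\tfrac12(y_t-\theta)^2$ is differentiable with $g_t(\theta)=\theta-y_t$, the whole argument reduces to elementary one-dimensional inequalities; I expect no genuine obstacle beyond bookkeeping, with the one place deserving care being the quadratic-curvature check.

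The main step is the restorative-with-quadratic-curvature condition. Fix $\delta\in(0,1)$ and take $|\theta|>h_t$ with $h_t\geq|y_t|/\delta$, so $|y_t|<\delta|\theta|$. Treating $\theta>0$ (the case $\theta<0$ follows from the symmetry $\theta\mapsto-\theta$, $y_t\mapsto-y_t$), one has the two-sided bound $(1-\delta)\theta<\theta-y_t<(1+\delta)\theta$, hence $g_t(\theta)^\T\theta=\theta(\theta-y_t)\geq(1-\delta)\theta^2$ while $\tfrac{\eta}{2}\|g_t(\theta)\|_2^2=\tfrac{\eta}{2}(\theta-y_t)^2\leq\tfrac{\eta}{2}(1+\delta)^2\theta^2$. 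Choosing $\eta\leq 2(1-\delta)/(1+\delta)^2$ forces $(1-\delta)\geq\tfrac{\eta}{2}(1+\delta)^2$, so $g_t(\theta)^\T\theta\geq\tfrac{\eta}{2}\|g_t(\theta)\|_2^2$; taking $\phi_t(\theta)=\tfrac{\eta}{2}(y_t-\theta)^2$ then verifies both \eqref{eq:restorative} and \eqref{eq:quad_curvature}, the latter with equality. For the local Lipschitz bound \eqref{eq:local_lipschitz_bound}, whenever $|\theta|\leq h_t$ and $|y_t|\leq b_t$ we get $|g_t(\theta)|=|\theta-y_t|\leq h_t+b_t$, so $L_t=h_t+b_t$ works; with the choice $h_t=b_t/\delta$ this becomes $L_t=b_t(1+1/\delta)$. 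Since $h_t$ and $L_t$ are nonnegative scalar combinations of $b_t$, they are sublinear and nondecreasing whenever $b_t$ is, so every hypothesis of Proposition~\ref{prop:restorative_quad_curv} is in force.

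Finally I would apply \eqref{eq:gd_avg_grad_bound_quad_curv}. Because $g_t(\theta_t)=\theta_t-y_t$, its left-hand side is exactly $\bigl|\tfrac1T\sum_{t=1}^T\theta_t-\tfrac1T\sum_{t=1}^Ty_t\bigr|$, and its right-hand side equals $\tfrac{2|\theta_1|}{\eta T}+\tfrac{L_T}{T}+\tfrac{h_T}{\eta T}$; putting this over the common denominator $\eta T$ and substituting $h_T=b_T/\delta$ and $L_T=b_T(1+1/\delta)$ produces the numerator $2|\theta_1|+b_T\eta(1+1/\delta)+b_T/\delta$, which is precisely \eqref{eq:squared_loss_avg_grad_bound}.
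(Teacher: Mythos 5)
Your proof is correct and follows essentially the same route as the paper: verify the quadratic-curvature restorative condition and local Lipschitzness for the squared loss, then invoke Proposition~\ref{prop:restorative_quad_curv} via \eqref{eq:gd_avg_grad_bound_quad_curv} with $h_T=b_T/\delta$ and $L_T=b_T(1+1/\delta)$. The only difference is cosmetic: you check the curvature condition by the direct pointwise bound $(1-\delta)|\theta|\leq|\theta-y_t|\leq(1+\delta)|\theta|$ from $|y_t|<\delta|\theta|$, whereas the paper extremizes $|\theta|(|\theta|-b_t)/(b_t+|\theta|)^2$ over $|\theta|\geq h_t$, arriving at the same thresholds.
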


Corollary \ref{cor:squared_loss_quad_curv} leads to guarantees for a scheme
where we use GD to debias the output of a prediction model. In this setting,
each $y_t$ is replaced by a residual $y_t - f_t(x_t)$, where $f_t$ is a
black-box predictor, $x_t$ is a feature, and $y_t$ is a response. We use the
parameter $\theta_t$ (learned by gradient descent) to augment each prediction,  
via $f_t(x_t) + \theta_t$. Note that \eqref{eq:squared_loss_avg_grad_bound} 
applied to the residuals $y_t - f_t(x_t)$, $t = 1,2,3,\dots$ (which we assume
are bounded in magnitude by $b$) translates into the following guarantee: 
\begin{equation}
\label{eq:squared_loss_bias_bound}
\bigg| \frac{1}{T} \sum_{t=1}^T (f_t(x_t) + \theta_t) - \frac{1}{T} \sum_{t=1}^T
y_t \bigg| \leq \frac{b \eta (1 + 1/\delta) + b/\delta}{\eta T},
\end{equation}
where we set $\theta_1 = 0$ for simplicity. This problem setting will be studied
empirically in Section \ref{sec:simple_debiasing}.  

\subsubsection{Logistic losses}
\label{sec:logistic_losses}

Next we study a generalized logistic loss, $\ell_t(\theta) = -y_t \theta +
(b-a) \log(1 + e^\theta) + a \theta$, for arbitrary values $a<b$. This
generalization accommodates responses $y_t$ lying in $[a,b]$ (for the usual
logistic loss, we would set $a = 0$ and $b = 1$). We will show that this loss
satisfies the restorative condition \eqref{eq:restorative_zero_curv_1d} with
zero curvature. Note that $\ell_t$ has a gradient at $\theta$ of 
\[
g_t(\theta) = -y_t + (b-a) \frac{e^\theta}{1 + e^\theta} + a.
\]
We can see that $\ell_t$ is Lipschitz with constant $L = b-a$, and yet, it is
clear that condition \eqref{eq:restorative_zero_curv_1d} cannot be met when $y_t
= a$ or $y_t = b$. On the other hand, if $y_t$ is bounded away from $a$ and $b$,
then \eqref{eq:restorative_zero_curv_1d} can be satisfied, as the next
proposition shows. Its proof is in Appendix \ref{app:logistic_loss_zero_curv}.     

\begin{corollary}
\label{cor:logistic_loss_zero_curv}
Let $\ell_t(\theta) = -y_t \theta + (b-a) \log(1 + e^\theta) + a \theta$, where
$a<b$. Then $\ell_t$ satisfies the $(h_t,0)$-restorative condition
\eqref{eq:restorative_zero_curv_1d} for any   
\[
h_t \geq \max\bigg\{ \log \frac{y_t-a}{b-y_t}, \, \log \frac{b-y_t}{y_t-a}
\bigg\}. 
\]
(Note $h_t$ is infinite when $y_t = a$ or $y_t = b$.) Thus, supposing each $y_t
\in [a+\epsilon_t, b-\epsilon_t]$, where $\epsilon_t \in (0, (b-a)/2)$, is
subexponentially vanishing in the sense that $1/\epsilon_t = \exp(o(t))$, and is 
nonincreasing, we can set        
\[
h_t = \log \frac{b-a}{2\epsilon_t}.
\]
This will be sublinear and nondecreasing, and by
\eqref{eq:gd_avg_grad_bound_zero_curv_1d},     
\begin{equation}
\label{eq:logistic_loss_avg_grad_bound}
\bigg| \frac{1}{T} \sum_{t=1}^T \bigg( (b-a) \frac{e^{\theta_t}}{1 +
  e^{\theta_t}} + a \bigg) - \frac{1}{T} \sum_{t=1}^T y_t\bigg| \leq \frac{2 
  |\theta_1| + (b-a) \eta  + \log((b-a)/(2\epsilon_t))}{\eta T}. 
\end{equation}
\end{corollary}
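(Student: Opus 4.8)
The plan is to verify the hypotheses of Proposition~\ref{prop:restorative_zero_curv_1d} for this univariate loss sequence and then read off the stated bound from \eqref{eq:gd_avg_grad_bound_zero_curv_1d}. First I would record the gradient $g_t(\theta) = (b-a)\sigma(\theta) + a - y_t$, writing $\sigma(\theta) = e^\theta/(1+e^\theta)$, and note two elementary facts: $\sigma$ is strictly increasing with range $(0,1)$, so $g_t$ is strictly increasing; and $(b-a)\sigma(\theta)\in(0,b-a)$ while $y_t-a\in[0,b-a]$ (since $y_t\in[a,b]$), so $|g_t(\theta)|\le b-a$, giving the Lipschitz constant $L=b-a$.

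Next I would establish the $(h_t,0)$-restorative condition \eqref{eq:restorative_zero_curv_1d}, i.e.\ $\sign(g_t(\theta))=\sign(\theta)$ for $|\theta|>h_t$. Because $g_t$ is strictly increasing it has a unique sign-change point $\theta_t^\star$, found by solving $(b-a)\sigma(\theta_t^\star)=y_t-a$; inverting the sigmoid gives $\theta_t^\star=\log\frac{y_t-a}{b-y_t}$, and hence $g_t(\theta)>0\iff\theta>\theta_t^\star$ and $g_t(\theta)<0\iff\theta<\theta_t^\star$. The requirement that $g_t(\theta)>0$ for every $\theta>h_t$ then forces $h_t\ge\theta_t^\star$, and the requirement that $g_t(\theta)<0$ for every $\theta<-h_t$ forces $h_t\ge-\theta_t^\star$; combined with $h_t\ge 0$, these amount exactly to $h_t\ge\max\{\log\frac{y_t-a}{b-y_t},\,\log\frac{b-y_t}{y_t-a}\}$, which is the claimed horizon (and is infinite precisely when $y_t\in\{a,b\}$, as noted).

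For the quantitative claim, assuming $y_t\in[a+\epsilon_t,b-\epsilon_t]$ I would bound $|\theta_t^\star|=|\log(y_t-a)-\log(b-y_t)|$ using that both $y_t-a$ and $b-y_t$ lie in $[\epsilon_t,\,b-a-\epsilon_t]$, which gives $|\theta_t^\star|\le\log\frac{b-a-\epsilon_t}{\epsilon_t}\le\log(b-a)+\log(1/\epsilon_t)$; since $1/\epsilon_t=\exp(o(t))$ this is $o(t)$, hence sublinear, and it is nondecreasing because $\epsilon_t$ is nonincreasing, so the horizon sequence can be taken sublinear and nondecreasing as Proposition~\ref{prop:restorative_zero_curv_1d} requires. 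Applying \eqref{eq:gd_avg_grad_bound_zero_curv_1d} with $L=b-a$, and rewriting $\frac1T\sum_{t=1}^T g_t(\theta_t)=\frac1T\sum_{t=1}^T\big((b-a)\frac{e^{\theta_t}}{1+e^{\theta_t}}+a\big)-\frac1T\sum_{t=1}^T y_t$ before collecting terms over the common denominator $\eta T$, then yields the stated bound. I do not expect a real obstacle here: the only step needing care is the monotonicity/sign-change analysis that pins down $\theta_t^\star$ and confirms that $h_t\ge|\theta_t^\star|$ is precisely what \eqref{eq:restorative_zero_curv_1d} demands; the remainder is substitution into Proposition~\ref{prop:restorative_zero_curv_1d} together with the routine interval estimate for $|\theta_t^\star|$.
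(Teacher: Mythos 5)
Your argument follows essentially the same route as the paper: identify the unique sign-change point of the increasing gradient $g_t(\theta)=(b-a)\sigma(\theta)+a-y_t$ by inverting the sigmoid (the paper phrases this as the map $\theta\mapsto(b-a)e^\theta/(1+e^\theta)+a$ being an increasing bijection onto $[a,b]$ with inverse $u\mapsto\log((u-a)/(b-u))$), take $L=b-a$, and feed the resulting horizon into Proposition~\ref{prop:restorative_zero_curv_1d} via \eqref{eq:gd_avg_grad_bound_zero_curv_1d}. The one point of divergence is the last interval estimate: your (correct, and in fact sharp) bound on the threshold is $|\theta_t^\star|\le\log\frac{b-a-\epsilon_t}{\epsilon_t}$, and this does \emph{not} imply the corollary's choice $h_t=\log\frac{b-a}{2\epsilon_t}$, since $\log\frac{b-a-\epsilon_t}{\epsilon_t}>\log\frac{b-a}{2\epsilon_t}$ precisely when $\epsilon_t<(b-a)/2$ (e.g.\ $a=0$, $b=1$, $\epsilon_t=0.1$, $y_t=0.9$ gives threshold $\log 9$ but $\log\frac{b-a}{2\epsilon_t}=\log 5$). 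So your closing claim that the estimate ``yields the stated bound'' with $\log((b-a)/(2\epsilon_t))$ in the numerator is not quite right as written: carried through honestly, your horizon gives the same bound but with $\log((b-a-\epsilon_T)/\epsilon_T)$ (or the looser $\log((b-a)/\epsilon_T)$) in place of $\log((b-a)/(2\epsilon_T))$. This is a constant-level discrepancy that your more careful computation actually exposes in the corollary's own stated horizon; the sublinearity, nondecreasingness, and the asymptotic conclusion are unaffected, so the substance of your proof is fine—just state the bound with the constant your estimate actually delivers rather than asserting the paper's form verbatim.
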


Analogous to the case of squared loss, Corollary
\ref{cor:logistic_loss_zero_curv} leads to guarantees for a scheme where we use 
GD to debias the output of a probabilistic classifer. In this setting, we
replace $y_t$ in the corollary by $y_t - p_t(x_t)$, for a black-box
probabilistic classifer $p_t$, feature $x_t$, and binary response $y_t \in
\{0,1\}$. The learned parameter $\theta_t$ is used to augment the prediction via
\smash{$p_t(x_t) + 2 e^{\theta_t} / (1 + e^{\theta_t}) - 1$}. Notice now that
\eqref{eq:logistic_loss_avg_grad_bound} applied to the residuals $y_t -
p_t(x_t)$, $t = 1,2,3,\dots$ (we take $a = -1$ and $b = 1$, and assume the
residuals lie in $[-1+\epsilon, 1-\epsilon]$, which can be accomplished by
restricting $p_t(x_t)$ to lie in $[\epsilon, 1-\epsilon]$) translates into the
following guarantee:
\begin{equation}
\label{eq:logistic_loss_bias_bound}
\bigg| \frac{1}{T} \sum_{t=1}^T \bigg( p_t(x_t) + \frac{2 e^{\theta_t}}{1 + 
  e^{\theta_t}} -1 \bigg) - \frac{1}{T} \sum_{t=1}^T y_t \bigg| \leq \frac{2
  \eta + \log(1/\epsilon)}{\eta T},
\end{equation}
where we set $\theta_1 = 0$ for simplicity. This problem setting will be
studied empirically in Section \ref{sec:simple_debiasing}.

\subsubsection{GLMs with orthogonal features}
\label{sec:glms_ortho_features}

Lastly, we study a generalized linear model (GLM) loss, $\ell_t(\theta) = -y_t
x_t^\T \theta + \psi(x_t^\T \theta)$, where $x_t$ is a feature, $y_t$ is a
response, and $\psi$ is the cumulant generating function in the underlying
exponential family, e.g., \smash{$\psi(u) = \frac{1}{2} u^2$} for linear
regression and $\psi(u) = (b-a) \log(1 + e^u) + a u$ for generalized logistic
regression. The GD updates \eqref{eq:grad_descent} for a GLM, assuming as usual
constant step sizes $\eta_t = \eta > 0$, for all $t$, are    
\begin{equation}
\label{eq:grad_descent_glm}
\theta_{t+1} = \theta_t + \eta x_t (y_t - \psi'(x_t^\T \theta_t)), \quad t =
1,2,3,\dots,  
\end{equation}
where $\psi'$ denotes the derivative of $\psi$. Something interesting occurs in
\eqref{eq:grad_descent_glm} when the features are orthogonal. Let $u_j$, $j =
1,\dots,d$ be an orthonormal basis for $\R^d$, and suppose that  
\[
x_t \in \{u_j\}_{j=1}^d, \quad t = 1,2,3,\dots,
\] 
i.e., the features only take values in this orthonormal set. We can then
reparametrize the GD iterates by 
\[
\theta_t = \sum_{j=1}^d \vartheta_{tj} u_j, \quad t = 1,2,3,\dots,
\]
where $\vartheta_{tj} = u_j^\T \theta_t$, for $j = 1,\dots,d$. Taking an inner
product on each side of \eqref{eq:grad_descent_glm} with $x_t$ gives     
\begin{equation}
\label{eq:grad_descent_decoupled}
\vartheta_{t+1, j_t} = \vartheta_{t, j_t} + \eta (y_t - \psi'(\vartheta_{t,
  j_t})), \quad t = 1,2,3,\dots,  
\end{equation}
where $j_t \in \{1,\dots,d\}$ is the index such that \smash{$x_t =
  u_{j_t}$}. Thus, gradient descent for GLM losses in the current setting
actually \emph{decouples} into $d$ separate GD processes, one per coordinate in
the new parametrization.    

These processes are truly decoupled in the sense that each coordinate
$\vartheta_{tj}$ is only updated when $x_t = u_j$, and not at any other
$t$. Hence to analyze GD in the current setting, we can simply analyze the GD
iterations for each coordinate separately, effectively reducing the analysis to
that of univariate GLMs, which was given in Corollaries
\ref{cor:squared_loss_quad_curv} and \ref{cor:logistic_loss_zero_curv}. The next  
result provides details, and its proof is given in Appendix
\ref{app:glm_loss_ortho_features}.   

\begin{corollary}
\label{cor:glm_loss_ortho_features} 
Let $\ell_t(\theta) = -y_t x_t^\T \theta + \psi(x_t^\T \theta)$. Let $u_j$,  
$j = 1,\dots,d$ be an orthonormal basis $\R^d$, and suppose that each
\smash{$x_t \in \{u_j\}_{j=1}^d$}. Denote $I_j(T) = \{t \leq T : j_t = j \}$,
and $T_j = |I_j(T)|$, for $j = 1,\dots,d$.  

\begin{enumerate}[label=(\alph*)] 
\item If \smash{$\psi(u) = \frac{1}{2} u^2$} (linear regression), and $|y_t|
  \leq b_t$, where $b_t$ is sublinear and nondecreasing, then for any $\delta
  \in (0,1)$, and $\eta \leq 2(1-\delta)/(1+\delta)^2$, we have for each $j = 
  1,\dots,d$,   
  \begin{equation}
  \label{eq:glm_loss_linear_avg_grad_bound}
  \bigg| \frac{1}{T_j} \sum_{t \in I_j(T)} \vartheta_{tj} - \frac{1}{T_j}
  \sum_{t \in I_j(T)} y_t \bigg| \leq \frac{2 |\vartheta_{1j}| + b_T \eta (1 +
    1/\delta) + b_T/\delta}{\eta T_j}.
  \end{equation} 

\item If $\psi(u) = (b-a) \log(1 + e^u) + au$ (generalized logistic),
  and $y_t \in [a+\epsilon_t, b-\epsilon_t]$, where $\epsilon_t \in (0,
  (b-a)/2)$, is subexponentially vanishing in the sense that $1/\epsilon_t =
  \exp(o(t))$, and is nonincreasing, then we have for each $j = 1,\dots,d$,    
  \begin{equation}
  \label{eq:glm_loss_logistic_avg_grad_bound}
  \bigg| \frac{1}{T_j} \sum_{t \in I_j(T)} \bigg( (b-a)
  \frac{e^{\vartheta_{tj}}}{1 + e^{\vartheta_{tj}}} + a \bigg) - \frac{1}{T_j}
  \sum_{t \in I_j(T)} y_t \bigg| \leq \frac{2 |\vartheta_{1j}| + (b-a) \eta  +
    \log((b-a)/(2\epsilon_t))}{\eta T_j}.  
  \end{equation} 
\end{enumerate}
\end{corollary}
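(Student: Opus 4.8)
The plan is to exploit the decoupling in \eqref{eq:grad_descent_decoupled}: with features in an orthonormal set, the $d$-dimensional gradient descent dynamics split into $d$ independent one-dimensional gradient descent processes, one for each coordinate $\vartheta_{\cdot j}$, and each of these is exactly gradient descent on a univariate GLM loss. Once that reduction is in place, part~(a) follows from Corollary~\ref{cor:squared_loss_quad_curv} and part~(b) from Corollary~\ref{cor:logistic_loss_zero_curv}, each applied to an appropriately reindexed data subsequence.

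First I would make the reduction precise. Fix a coordinate $j$ and list the update times $I_j(T) = \{s_1 < s_2 < \cdots < s_{T_j}\}$. Since $\vartheta_{\cdot j}$ is frozen at every time outside $I_j(T)$, we have $\vartheta_{s_1 j} = \vartheta_{1j}$ and $\vartheta_{s_k j} = \vartheta_{s_{k-1}+1,\, j}$ for $k \ge 2$, so by \eqref{eq:grad_descent_decoupled} the sequence $\tilde\vartheta_k := \vartheta_{s_k j}$, $k = 1,\dots,T_j$, together with its terminal post-update value $\tilde\vartheta_{T_j+1}$, is precisely the iterate sequence of one-dimensional gradient descent with constant step size $\eta$ and initialization $\vartheta_{1j}$, run on the univariate loss sequence $\ell^{(j)}_k(\vartheta) = -y_{s_k}\vartheta + \psi(\vartheta)$, $k = 1,\dots,T_j$. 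Moreover, because $\vartheta_{tj}$ is constant on each block of consecutive times between successive updates, $\frac{1}{T_j}\sum_{t \in I_j(T)}\vartheta_{tj} = \frac{1}{T_j}\sum_{k=1}^{T_j}\tilde\vartheta_k$ and $\frac{1}{T_j}\sum_{t \in I_j(T)}y_t = \frac{1}{T_j}\sum_{k=1}^{T_j}y_{s_k}$. Applying Proposition~\ref{prop:gd_simple} to this univariate process then gives the telescoping identity $\frac{1}{T_j}\sum_{k=1}^{T_j}(\psi'(\tilde\vartheta_k) - y_{s_k}) = (\vartheta_{1j} - \tilde\vartheta_{T_j+1})/(\eta T_j)$, so everything reduces to bounding the terminal iterate $|\tilde\vartheta_{T_j+1}|$.

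For part~(a), $\psi(u) = \frac{1}{2}u^2$ and $\ell^{(j)}_k$ differs from $\frac{1}{2}(y_{s_k}-\vartheta)^2$ only by an additive constant, so the two have identical gradients and hence identical iterates and identical restorative structure; the analysis behind Corollary~\ref{cor:squared_loss_quad_curv} then applies with horizon sequence $h_k = b_{s_k}/\delta$ and local-Lipschitz sequence $L_k = b_{s_k}(1+1/\delta)$ --- both nondecreasing in $k$ because $b_t$ is nondecreasing and $s_1 < s_2 < \cdots$ --- and the iterate bound \eqref{eq:gd_iterate_bound_quad_curv} gives $|\tilde\vartheta_{T_j+1}| \le \max\{|\vartheta_{1j}|,\, b_{s_{T_j}}/\delta\} + \eta b_{s_{T_j}}(1+1/\delta)$. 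Combining this with the telescoping identity and using $b_{s_{T_j}} \le b_T$ (valid since $s_{T_j} \le T$ and $b_t$ is nondecreasing) yields \eqref{eq:glm_loss_linear_avg_grad_bound}. For part~(b), $\ell^{(j)}_k(\vartheta) = -y_{s_k}\vartheta + (b-a)\log(1+e^\vartheta) + a\vartheta$ is exactly the generalized logistic loss of Corollary~\ref{cor:logistic_loss_zero_curv}, which is $(h_k,0)$-restorative for $h_k = \log\frac{b-a}{2\epsilon_{s_k}}$ (nondecreasing since $\epsilon_t$ is nonincreasing and $s_k$ increasing) and $(b-a)$-Lipschitz, so the iterate bound \eqref{eq:gd_iterate_bound_zero_curv_1d} gives $|\tilde\vartheta_{T_j+1}| \le \max\{|\vartheta_{1j}|,\, \log\frac{b-a}{2\epsilon_{s_{T_j}}}\} + \eta(b-a)$; combining with the telescoping identity and $\epsilon_{s_{T_j}} \ge \epsilon_T$ yields \eqref{eq:glm_loss_logistic_avg_grad_bound}.

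The only real content is the first step --- verifying that the reindexed subsequence is a genuine one-dimensional gradient descent run and that the two averages over $I_j(T)$ equal the averages over the $T_j$ reindexed steps. The main place to be careful is to use the univariate bounds in their finite-horizon form, noting they depend only on the terminal (and nondecreasing) horizon and Lipschitz constants rather than on asymptotic sublinearity, so that they survive reindexing onto the subsequence, and then to translate the terminal index $s_{T_j} \le T$ cleanly back into a bound in terms of $b_T$ (resp.\ $\epsilon_T$). Beyond that, all the estimation has already been carried out in the univariate corollaries, so no further analytic obstacle remains.
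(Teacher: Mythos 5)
Your proof is correct and takes essentially the same route as the paper: exploit the decoupling in \eqref{eq:grad_descent_decoupled} to view each coordinate as a reindexed univariate gradient descent run, then invoke Corollary \ref{cor:squared_loss_quad_curv} for part (a) and Corollary \ref{cor:logistic_loss_zero_curv} for part (b). Your added care about the reindexed averages, the additive-constant equivalence with the squared loss, and the monotonicity of $b_{s_k}$ and $\epsilon_{s_k}$ under the subsequence only makes explicit what the paper's proof leaves implicit.
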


An important special case of Corollary \ref{cor:glm_loss_ortho_features} is when
the features are indicators of group membership. This leads to guarantees
precisely as in \eqref{eq:squared_loss_bias_bound} and
\eqref{eq:logistic_loss_bias_bound} for each group, which we call
\emph{multigroup debiasing}. Given a black-box predictor $f_t$, feature $x_t$,
and response $y_t$, we can implement the GD iterations
\eqref{eq:grad_descent_glm} with \smash{$\psi(u) = \frac{1}{2} u^2$}, the 
residual $y_t - f_t(x_t)$ in place of $y_t$, and a group indicator vector $z_t
\in \R^d$ in place of $x_t$. This has $z_{tj} = 1$ if observation $t$ belongs to
group $j$ and 0 otherwise. Assuming that the groups are disjoint
(nonoverlapping), the desired orthogonality condition is met, and
\eqref{eq:glm_loss_linear_avg_grad_bound} implies (assuming also 
that each residual $y_t - f_t(x_t)$ is bounded in magnitude by $b$):  
\begin{equation}
\label{eq:squared_loss_groupwise_bias_bound}
\bigg| \frac{1}{T_j} \sum_{t \in I_j(T)}  (f_t(x_t) + z_t^\T \theta_t) -
\frac{1}{T_j} \sum_{t \in I_j(T)} y_t \bigg| \leq \frac{b \eta (1 + 1/\delta) +
  b/\delta}{\eta T_j},  
\end{equation} 
for each $j = 1,\dots,d$, where we set $\theta_1 = 0$ for
simplicity. Similarly, given a black-box probabilistic classifier $p_t$,
applying \eqref{eq:grad_descent_glm} with $\psi(u) = (b-a) \log(1 + e^u) + au$,
the residual $y_t - p_t(x_t)$ in place of $y_t$, and $z_t$ in place of $x_t$, 
implies (taking $a = -1$ and $b = 1$, and assuming each residual $y_t -
p_t(x_t)$ lies in $[-1+\epsilon, 1-\epsilon]$):
\begin{equation}
\label{eq:logistic_loss_groupwise_bias_bound}
\bigg| \frac{1}{T_j} \sum_{t \in I_j(T)} \bigg( p_t(x_t) +
\frac{2 e^{z_t^\T \theta}}{1 + e^{z_t^\T \theta}} -1 \bigg) - \frac{1}{T_j}  
\sum_{t \in I_j(T)} y_t \bigg| \leq \frac{2 \eta  + \log(1/\epsilon)}{\eta T_j},    
\end{equation} 
for each $j = 1,\dots,d$, where again we set $\theta_1 = 0$ for 
simplicity. The multigroup debiasing problem setting will be studied empirically
in Section \ref{sec:multigroup_debiasing}.     

\subsection{Connections to traditional conditions}

As a final part of this section on gradient descent theory, we investigate
connections between the restorative conditions used in the propositions and more
traditional conditions used in optimization and adjacent fields. Copied here for 
convenience, the key assumptions are as follows: 
\begin{alignat}{3}
\label{eq:zero_curvature2}
&\text{Proposition \ref{prop:restorative_zero_curv}}: \quad 
&&g_t(\theta)^\T \theta \geq 0, \quad 
&&\text{for all $\|\theta\|_2 > h_t$}, \\ 
\label{eq:pos_curvature2}
&\text{Proposition \ref{prop:restorative_pos_curv}}: \quad 
&&g_t(\theta)^\T \theta \geq \tfrac{\eta L^2}{2}, \quad &&
\text{for all $\|\theta\|_2 > h_t$}, \\  
\label{eq:quad_curvature2}
&\text{Proposition \ref{prop:restorative_quad_curv}}: \quad 
&&g_t(\theta)^\T \theta \geq \tfrac{\eta}{2} \|g_t(\theta)\|_2^2, \quad
&&\text{for all $\|\theta\|_2 > h_t$},   
\end{alignat}
These are reminiscent of the following conditions arising in convex analysis and 
monotone operator theory:        
\begin{alignat}{3}
\label{eq:monotone}
&\text{Monotonicity}: \quad
&&(g_t(\theta) - g_t(\theta'))^\T (\theta - \theta') \geq 0, 
\quad &&\text{for all $\theta,\theta'$}, \\
\label{eq:strongly_monotone}
&\text{Strong monotonicity}: \quad
&&(g_t(\theta) - g_t(\theta'))^\T (\theta - \theta') \geq
\alpha \|\theta - \theta'\|_2^2, \quad &&\text{for all $\theta,\theta'$}, \\
\label{eq:co_coercive}
&\text{Co-coercivity}: \quad
&&(g_t(\theta) - g_t(\theta'))^\T (\theta - \theta') \geq
\mu \|g_t(\theta) - g_t(\theta')\|_2^2, \quad &&\text{for
  all $\theta,\theta'$},    
\end{alignat}
where $\alpha,\mu > 0$ are constants. A standard result in convex analysis is
that, for differentiable $\ell_t$, monotonicity of the gradient
\eqref{eq:monotone} is equivalent to convexity of $\ell_t$, whereas strong
monotonicity \eqref{eq:strongly_monotone} is equivalent to strong convexity of   
$\ell_t$. Another important result is the following: for convex $\ell_t$,
co-coercivity of the gradient \eqref{eq:co_coercive} is equivalent to
Lipschitzness of $g_t$ \cite{baillon1977quelques}. This result is often referred
to as the \emph{Baillon-Haddad theorem} in the literature on monotone operator
theory \cite{bauschke2009baillon}.               

Immediately, we can see that conditions
\eqref{eq:zero_curvature2}--\eqref{eq:quad_curvature2} only assume lower
bounds on gradient inner products when $\|\theta\|_2 > h_t$, and not on the 
whole space, as in \eqref{eq:monotone}--\eqref{eq:co_coercive}. Thus in no way 
do Propositions
\ref{prop:restorative_zero_curv}--\ref{prop:restorative_quad_curv} require,  
e.g., convexity or strong convexity of $\ell_t$. On the other hand, the
curvature conditions \eqref{eq:zero_curvature2}--\eqref{eq:quad_curvature2} can
be seen as weaker, \emph{restricted} versions of
\eqref{eq:monotone}--\eqref{eq:co_coercive}, where ``restricted'' refers to the
fact that the conditions are only assumed to hold when $\|\theta\|_2 > h_t$. The 
next proposition, whose proof is given in Appendix
\ref{app:restricted_strongly_monotone}, makes this connection precise.  

\begin{proposition}
\label{prop:restricted_strongly_monotone}
Assume the restricted $\alpha_t$-strongly monotone condition:  
\begin{equation}
\label{eq:restricted_strongly_monotone}
(g_t(\theta) - g_t(0))^\T \theta \geq \alpha_t \|\theta\|_2^2, \quad \text{for 
  all $\|\theta\|_2 > h_t$, and all generalized subgradients $g_t(\theta)$}.    
\end{equation}
Note that this condition is met when $\ell_t$ is $\alpha_t$-strongly convex; but
\eqref{eq:restricted_strongly_monotone} is much weaker than strong convexity in
general. Let $b_t \geq \|g_t(0)\|_2$. Then the following holds.      

\begin{enumerate}[label=(\alph*)]
\item For any $h_t \geq b_t / \alpha_t$, $\ell_t$ satisfies the restorative
  condition with zero curvature \eqref{eq:zero_curvature2}.

\item For any $h_t \geq b_t / \alpha_t + \sqrt{\eta/2} \cdot L / \alpha_t$,
  $\ell_t$ satisfies the restorative condition with positive curvature
  \eqref{eq:pos_curvature2}.
  
\item Let us additionally assume restricted $\beta_t$-co-coercivity and
  restricted $\beta_t$-smoothness conditions: 
  \begin{alignat}{2}
  \label{eq:restricted_co_coercive}
  \hspace{-5pt}
  \beta_t (g_t(\theta) - g_t(0))^\T \theta &\geq \|g_t(\theta) - g_t(0)\|_2^2,
  \quad &&\text{for all $\|\theta\|_2 > h_t$, and all generalized subgradients
    $g_t(\theta)$}, \\   
  \label{eq:restricted_grad_lipschitz}
  \|g_t(\theta) - g_t(0)\|_2 &\leq \beta_t \|\theta\|_2, \quad &&\text{for all  
  $\|\theta\|_2 > h_t$, and all generalized subgradients $g_t(\theta)$}. 
  \end{alignat}
  Note that these conditions are simultaneously met when $\ell_t$ is
  differentiable and convex with $\beta_t$-Lipschitz gradient; but
  \eqref{eq:restricted_co_coercive} and \eqref{eq:restricted_grad_lipschitz} are
  much weaker conditions in general. Then for any $\eta \leq 2/\beta_t$, and
  for any    
  \[
  h_t \geq \frac{b_t (1 + \eta\beta_t+ \sqrt{\eta/2})}{\alpha_t (1 -
    \eta\beta_t/2)},   
  \]
  $\ell_t$ satisfies the restorative condition with quadratic curvature
  \eqref{eq:quad_curvature2}.
\end{enumerate}
\end{proposition}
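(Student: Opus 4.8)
The plan is to derive all three parts from one elementary estimate. Write $g_t(\theta) = \bigl(g_t(\theta) - g_t(0)\bigr) + g_t(0)$ and set $A := \bigl(g_t(\theta) - g_t(0)\bigr)^\T\theta$. Then, by Cauchy--Schwarz and $\|g_t(0)\|_2 \le b_t$, we have $g_t(\theta)^\T\theta = A + g_t(0)^\T\theta \ge A - b_t\|\theta\|_2$, while the restricted strong monotonicity hypothesis \eqref{eq:restricted_strongly_monotone} gives $A \ge \alpha_t\|\theta\|_2^2$ whenever $\|\theta\|_2 > h_t$. Combining,
\[
g_t(\theta)^\T\theta \;\ge\; \alpha_t\|\theta\|_2^2 - b_t\|\theta\|_2 \;=\; \|\theta\|_2\bigl(\alpha_t\|\theta\|_2 - b_t\bigr), \qquad \|\theta\|_2 > h_t.
\]
This already proves part (a): the right-hand side is nonnegative once $\|\theta\|_2 \ge b_t/\alpha_t$, so $h_t \ge b_t/\alpha_t$ forces \eqref{eq:zero_curvature2}.

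For part (b) I would continue from the same bound and ask when $\|\theta\|_2(\alpha_t\|\theta\|_2 - b_t) \ge \eta L^2/2$. Writing $r = \|\theta\|_2$, this is the scalar quadratic inequality $\alpha_t r^2 - b_t r - \eta L^2/2 \ge 0$; I would lower-bound the left side on the region $r \ge b_t/\alpha_t + s$ (there $r \ge s$ and $\alpha_t r - b_t \ge \alpha_t s$, so the product is $\ge \alpha_t s^2$) and choose $s$ with $\alpha_t s^2 \ge \eta L^2/2$, i.e.\ $s = L\sqrt{\eta/(2\alpha_t)}$. One then checks that $b_t/\alpha_t + s$ is no larger than the horizon $h_t \ge b_t/\alpha_t + \sqrt{\eta/2}\,L/\alpha_t$ stated in the proposition, which yields \eqref{eq:pos_curvature2}.

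For part (c) I would additionally control $\|g_t(\theta)\|_2^2$. By the triangle inequality, $\|g_t(\theta)\|_2^2 \le \|g_t(\theta) - g_t(0)\|_2^2 + 2b_t\|g_t(\theta) - g_t(0)\|_2 + b_t^2$; restricted co-coercivity \eqref{eq:restricted_co_coercive} bounds the first term by $\beta_t A$, and restricted smoothness \eqref{eq:restricted_grad_lipschitz} bounds $\|g_t(\theta) - g_t(0)\|_2 \le \beta_t\|\theta\|_2$. Substituting these, along with $g_t(\theta)^\T\theta \ge A - b_t\|\theta\|_2$, into the target inequality $g_t(\theta)^\T\theta \ge \tfrac{\eta}{2}\|g_t(\theta)\|_2^2$ and collecting the $A$-terms reduces it to
\[
A\Bigl(1 - \tfrac{\eta\beta_t}{2}\Bigr) \;\ge\; b_t\|\theta\|_2\,(1 + \eta\beta_t) + \tfrac{\eta}{2}b_t^2 .
\]
Since $\eta \le 2/\beta_t$ the factor $1 - \eta\beta_t/2$ is nonnegative, so I may insert $A \ge \alpha_t\|\theta\|_2^2$ and once more reduce to a scalar quadratic inequality in $r = \|\theta\|_2$; the same ``offset plus slack'' device then shows it holds once $r$ exceeds the stated threshold $h_t \ge b_t(1 + \eta\beta_t + \sqrt{\eta/2})/(\alpha_t(1 - \eta\beta_t/2))$, giving \eqref{eq:quad_curvature2}. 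The parenthetical remarks in the statement follow immediately: $\alpha_t$-strong convexity implies \eqref{eq:restricted_strongly_monotone} by taking $\theta' = 0$ in the global monotonicity inequality, and convexity with $\beta_t$-Lipschitz gradient implies \eqref{eq:restricted_co_coercive}--\eqref{eq:restricted_grad_lipschitz} similarly, via the Baillon--Haddad theorem.

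The whole argument is elementary; the only real work is the constant bookkeeping in parts (b) and (c), and this is the main obstacle I anticipate. One must take care that $1 - \eta\beta_t/2 > 0$ (which is why the horizon bound in (c) is finite only when $\eta < 2/\beta_t$) so that dividing through is legitimate, and the crude root estimates must be carried out just tightly enough to land on the exact horizon expressions quoted in the proposition, using monotonicity of the quadratics and, where needed, $\sqrt{x+y}\le\sqrt{x}+\sqrt{y}$ (with a mild normalization of $\alpha_t$ if one wants the stated constants verbatim).
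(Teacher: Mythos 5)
Your proposal is correct and follows essentially the same route as the paper: split off $g_t(0)$ via Cauchy--Schwarz, feed in the restricted monotonicity/co-coercivity/smoothness bounds, and reduce each part to a scalar quadratic inequality in $\|\theta\|_2$ (your quadratic in part (c), $\alpha_t(1-\eta\beta_t/2)r^2 - b_t(1+\eta\beta_t)r - \eta b_t^2/2 \ge 0$, is exactly the paper's). The constant mismatch you flag --- your thresholds carry $\sqrt{\alpha_t}$ (resp.\ $\sqrt{\alpha_t(1-\eta\beta_t/2)}$) in the denominator of the slack term where the statement has $\alpha_t$ (resp.\ $\alpha_t(1-\eta\beta_t/2)$) --- is shared by the paper's own proof, whose bound on the larger root omits the quadratic's leading coefficient from the discriminant, so your argument lands on the stated horizons under the same implicit normalization and introduces no new gap.
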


We reiterate that the conditions used in Proposition
\ref{prop:restricted_strongly_monotone} are implied by standard conditions in
optimization: \eqref{eq:restricted_strongly_monotone} is implied by strong
convexity and \eqref{eq:restricted_co_coercive},
\eqref{eq:restricted_grad_lipschitz} are implied by Lipschitzness of the
gradient; however, the conditions in the proposition are weaker in general.  

\begin{remark}
As a sanity check (and investigation into the sharpness of the results in
Proposition \ref{prop:restricted_strongly_monotone} in terms of their
dependence on problem parameters, such as $b_t$ and $\eta$), we can ask what the 
proposition says about squared loss, \smash{$\ell_t(\theta) = \frac{1}{2} (y_t -  
  \theta)^2$}. First, note that the gradient at the origin is $g_t(0) = y_t$, so 
we can take $b_t$ to be a bound on $|y_t|$. Second, it is easy to check that
$\ell_t$ is strongly convex with parameter $\alpha_t = 1$, and has a Lipschitz
gradient with parameter $\beta_t = 1$. Thus we can apply part (c) of
Proposition \ref{prop:restricted_strongly_monotone}, which says that the
restorative condition with quadratic curvature holds when $\eta \leq 2$ and 
\[
h_t \geq \frac{b_t (1 + \eta + \sqrt{\eta/2})}{1-\eta/2}.  
\]
Fix any $\delta \in (0,1)$; we can take $\eta \leq 2(1-\delta)/(1+\delta)^2$, 
and calculate $1-\eta/2 = \delta(3+\delta) / (1+\delta)^2 \geq \delta /
(1+\delta)$, which means $1/(1-\eta/2) \leq (1+\delta) / \delta$. Using this 
along with the simple upper bounds $\delta \leq 1$ and $\eta \leq 2$, we see   
that a sufficient condition for the above display is  
\[
h_t \geq \frac{b_t (1 + \eta + \sqrt{\eta/2}) (1+\delta)}{\delta} =
\frac{8b_t}{\delta}.  
\]
In other words, Proposition \ref{prop:restricted_strongly_monotone} part (c) 
implies that if $h_t \geq 8b_t/\delta$, then $\ell_t$ satisfies the restorative
condition with quadratic curvature. We can see that this is less sharp than the
direct analysis for squared loss from Corollary \ref{cor:squared_loss_quad_curv} 
by a constant factor of 8, but admits the same precise dependence on $b_t$ and
$\eta$ (via $\delta$).  
\end{remark}

\begin{remark}
Meanwhile, for quantile and logistic losses the results in Proposition
\ref{prop:restricted_strongly_monotone} cannot be applied. This is because
neither loss satisfies the restricted strong monotonicity property in 
\eqref{eq:restricted_strongly_monotone}. Nevertheless, as we saw via direct  
analyses in Corollaries \ref{cor:quantile_loss_zero_curv} and
\ref{cor:logistic_loss_zero_curv}, useful results can still be derived in these
cases, which emphasizes the fact that restricted strong monotonicity (or
strong convexity, as a simpler sufficient condition) is certainly not the only
avenue for establishing restorativeness of the negative gradient field.    
\end{remark}

\section{Regularization}
\label{sec:regularization}

We consider regularization applied to the loss sequence $\ell_t$, $t =
1,2,3,\dots$. At a high level, we will show that proper regularization can be  
used to modify the loss sequence so that it satisfies restorative properties
(when the original sequence does not), and therefore is amenable to applying 
gradient descent in order to achieve gradient equilibrium. Importantly, we
also show that this translates back to a statement about the original
equilibrium condition of interest (with respect to the original sequence), since 
regularization perturbs each gradient by a controlled amount. 

To fix notation, let $r$ be a regularizer, assumed to be finite and
subdifferentiable on all of $\R^d$. As in our analysis of gradient descent, in
the last section, we assume the same of each loss $\ell_t$. Define the
regularized loss sequence \smash{$\tilde\ell_t = \ell_t + r$, $t =
  1,2,3,\dots$}. Two common choices of regularizers are:     
\begin{itemize}
\item $r(\theta) = \lambda \|\theta\|_1$, often called a \emph{lasso} penalty
  \cite{tibshirani1996regression}; and 
\item $r(\theta) = \lambda \|\theta\|_2^2$, often called a \emph{ridge} penalty
  \cite{hoerl1970ridge}. 
\end{itemize}
In either case $\lambda \geq 0$ is a tuning parameter which controls the
strength of regularization. We first examine gradient equilibrium for a general 
regularized sequence, and then we analyze the lasso and ridge cases.   

\subsection{Gradient equilibrium revisited}

We investigate gradient equilibrium for \smash{$\tilde\ell_t$}, $t =
1,2,3,\dots$, and relate it back to the original sequence. A key fact is that
subgradients of \smash{$\tilde\ell_t$} can be related to those of $\ell_t$: our
generalized notion of subgradients allows us to take, as a subgradient 
\smash{$\tilde{g}_t(\theta)$} of \smash{$\tilde\ell_t$} at $\theta$,            
\begin{equation}
\label{eq:subgrad_decomp_reg}
\tilde{g}_t(\theta) = g_t(\theta) + g_r(\theta),
\end{equation}
where $g_t(\theta)$ is a subgradient of $\ell_t$ at $\theta$, and
$g_r(\theta)$ denotes a subgradient of $r$ at $\theta$. (Appendix
\ref{app:gen_subgrad} gives a proof of this fact.) Thus, for a sequence of  
subgradients, \smash{$\tilde{g}_t(\theta)$}, $t = 1,2,3\dots$, which are chosen
to satisfy \eqref{eq:subgrad_decomp_reg}, 
\[
\frac{1}{T} \sum_{t=1}^T \tilde{g}_t(\theta_t) = \frac{1}{T} \sum_{t=1}^T
g_t(\theta_t) + \frac{1}{T} \sum_{t=1}^T g_r(\theta_t).
\]
Rearranging, and using the triangle inequality, we get
\begin{equation}
\label{eq:original_avg_grad_bound}
\bigg\| \frac{1}{T} \sum_{t=1}^T g_t(\theta_t) \bigg\|_2 \leq \bigg\|
\frac{1}{T} \sum_{t=1}^T \tilde{g}_t(\theta_t) \bigg\|_2 + \frac{1}{T}
\sum_{t=1}^T \| g_r(\theta_t) \|_2. 
\end{equation}
If we apply gradient descent to \smash{$\tilde\ell_t$}, $t = 1,2,3,\dots$ then
first term can be made small provided that $r$ endows the regularized sequence
with the appropriate restorative properties. The second term can be made small
by a judicious choice of the amount of regularization (which is controlled by
$\lambda$ in the lasso and ridge penalties). The next two subsections follow
this general template.

\subsection{Logistic losses with lasso penalties}
\label{sec:logistic_lasso}

We consider generalized logistic losses with arbitrary features, $\ell_t(\theta)
= -y_t x_t^\T \theta + (b-a) \log(1 + e^{x_t^\T \theta}) + a x_t^\T \theta$,
where $x_t$ is a feature, and $y_t$ is a response lying in $[a,b]$. For
arbitrary $x_t \in \R^d$, this loss need not satisfy the restorative condition
\eqref{eq:restorative} for any $\phi(\theta) \geq 0$, even when $y_t$ is bounded
away from the endpoints $a,b$ of the given range. This is because    
\begin{equation}
\label{eq:logistic_grad_inner_prod}
g(\theta)^\T \theta = \bigg( (b-a) \frac{e^{x_t^\T \theta}}{1 + e^{x_t^\T
    \theta}} + a - y_t \bigg) x_t^\T \theta,
\end{equation}
and the leading factor $(b-a) e^{x_t^\T \theta} / (1 + e^{x_t^\T \theta}) + a -
y_t$ can be negative when $x_t^\T \theta$ is small and positive. 

An important realization is that the gradient inner product in 
\eqref{eq:logistic_grad_inner_prod} cannot be arbitrarily small. Next we give a 
finite lower bound on its infimum. The proof is given in Appendix
\ref{app:logistic_grad_inner_prod}.   

\begin{lemma}
\label{lem:logistic_grad_inner_prod}
Let $g(u) = (b-a) e^u / (1 + e^u) + a - y$, where $y \in [a,b]$, for $a<b$. Then 
\[
\inf_u \, g(u)^\T u \geq -0.279 (b-a).
\]
\end{lemma}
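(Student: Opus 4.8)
The plan is to reduce the claim to a one--dimensional calculus problem. Since $g$ and $u$ are scalars here, $g(u)^\T u = g(u)\,u$. Write $\sigma(u) = e^u/(1+e^u)$ for the sigmoid and parametrize $y = a + (b-a)q$ with $q \in [0,1]$ (legitimate since $y \in [a,b]$); then $g(u) = (b-a)\bigl(\sigma(u) - q\bigr)$, so $g(u)\,u = (b-a)\,(\sigma(u)-q)\,u$. Hence it suffices to show $\inf_{u \in \R,\, q \in [0,1]} (\sigma(u) - q)\, u \ge -0.279$, and in particular the factor $b-a$ drops out.

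Next I would minimize over $q$ for each fixed $u$. The map $q \mapsto (\sigma(u) - q)\,u$ is affine in $q$ with slope $-u$, so its minimum over $[0,1]$ is attained at $q = 1$ when $u > 0$ and at $q = 0$ when $u < 0$. Using $1 - \sigma(u) = 1/(1+e^u)$ in the first case, and the substitution $u = -w$ together with $\sigma(-w) = 1/(1+e^w)$ in the second, both cases collapse to the single expression $-h(|u|)$, where $h(w) := w/(1+e^w)$ (the case $u = 0$ gives $0 = -h(0)$). Therefore $\inf_{u,q}(\sigma(u)-q)\,u = -\sup_{w \ge 0} h(w)$, and the lemma becomes equivalent to the bound $\sup_{w\ge0} h(w) \le 0.279$.

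Finally I would bound $\sup_{w\ge 0} h(w)$. One computes $h'(w) = \bigl(1 - e^w(w-1)\bigr)/(1+e^w)^2$, which is positive on $[0,1]$ and, since $w \mapsto e^w(w-1)$ is strictly increasing on $(1,\infty)$, changes sign exactly once, at the unique $w^\star > 1$ solving $e^{w^\star}(w^\star - 1) = 1$; together with $h(0)=0$ and $h(w)\to 0$ as $w\to\infty$, this makes $w^\star$ the global maximizer. At that point $1 + e^{w^\star} = 1 + 1/(w^\star-1) = w^\star/(w^\star-1)$, so the maximum value telescopes neatly to $h(w^\star) = w^\star - 1$. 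It then remains to pin down $w^\star$: evaluating at $w = 1.279$ gives $e^{1.279}\cdot 0.279 > 1$ (a short numerical check), hence $h'(1.279) < 0$, which forces $w^\star < 1.279$ and so $\sup_{w\ge0} h(w) = w^\star - 1 < 0.279$. Combined with the reduction above, this proves the lemma.

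The only delicate point is this last step: the constant $0.279$ is essentially tight (the true maximum is $\approx 0.2785$), so the numerical verification that $e^{1.279}\cdot 0.279 > 1$ must be carried out carefully. But it is a single explicit inequality, and the strict monotonicity of $e^w(w-1)$ on $(1,\infty)$ means one evaluation suffices to sandwich $w^\star$ from above, so there is no real obstacle.
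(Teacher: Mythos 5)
Your proposal is correct and follows essentially the same route as the paper's proof: reduce to the scalar case, observe that the objective is affine in $y$ so the minimum over $y\in[a,b]$ occurs at an endpoint, use the symmetry $u \mapsto -u$ to collapse to a single one-variable minimization, and locate the critical point via the same transcendental equation ($e^{w}(w-1)=1$, equivalently $u = 1+e^{-u}$). The one refinement you add is the identity $h(w^\star) = w^\star - 1$ at the critical point, which lets you certify the constant $0.279$ by a single explicit inequality ($e^{1.279}\cdot 0.279 > 1$) rather than the paper's appeal to bisection, and you handle general $a<b$ explicitly via $y = a+(b-a)q$ instead of saying it "follows similarly"; both are sound and arguably tidier.
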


The finite infimum in Lemma \ref{lem:logistic_grad_inner_prod} suggests that an
appropriate level of $\ell_1$ regularization can ``overwhelm'' a possibly
negative gradient inner product in order to ensure suitable curvature, and thus
restorativeness, for the regularized loss. The proof of the next result is given
in Appendix \ref{app:logistic_lasso_pos_curv}. 

\begin{proposition}
\label{prop:logistic_lasso_pos_curv}
Let \smash{$\tilde\ell_t(\theta) = -y_t x_t^\T \theta + (b-a) \log(1 + e^{x_t^\T
    \theta}) + a x_t^\T \theta + \lambda \|\theta\|_1$}, for arbitrary $\lambda
> 0$, and assume $\|x_t\|_2 \leq c$, and $y \in [a,b]$. Then
\smash{$\tilde\ell_t$} is Lipschitz continuous with parameter \smash{$L = (b-a) 
  c + \lambda \sqrt{d}$}, and satisfies the $(h_t,\phi_t)$-restorative condition 
\eqref{eq:restorative} with positive curvature \eqref{eq:pos_curvature} for any         
\[
h_t \geq \frac{0.279 (b-a) + \eta (c (b-a) + \lambda \sqrt{d})^2/2}{\lambda}.   
\]
Taking $h_t$ to match this lower bound, we have from
\eqref{eq:gd_avg_grad_bound_pos_curv} that gradient descent 
\eqref{eq:grad_descent}, with \smash{$\tilde{g}_t$} in place of $g_t$ and
constant step sizes $\eta_t = \eta > 0$, for all $t$, satisfies   
\begin{equation}
\label{eq:logistic_lasso_avg_grad_bound}
\bigg\| \frac{1}{T} \sum_{t=1}^T \bigg( (b-a) \frac{e^{x_t^\T\theta_t}}{1 + 
  e^{x_t^\T\theta_t}} + a - y_t \bigg) x_t \bigg\|_2 \leq \frac{C_1(\lambda)}
{\eta T} + \frac{C_2(\lambda)}{\lambda \eta T} + \lambda \sqrt{d}, 
\end{equation}
where \smash{$C_1(\lambda) = 2 \|\theta_1\|_2 + (c (b-a) + \lambda 
  \sqrt{d}) / \eta$}, and \smash{$C_2(\lambda) = 0.279 (b-a) + \eta (c
  (b-a) + \lambda \sqrt{d})^2 / 2$}.   
\end{proposition}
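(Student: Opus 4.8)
The plan is to verify the two claimed properties of $\tilde\ell_t$ — Lipschitzness and the restorative condition with positive curvature — and then simply invoke Proposition \ref{prop:restorative_pos_curv} together with the bound \eqref{eq:gd_avg_grad_bound_pos_curv}, translating the resulting statement back into the stated form. The Lipschitz bound is immediate: a subgradient of $\tilde\ell_t$ has the form $\tilde g_t(\theta) = ((b-a)e^{x_t^\T\theta}/(1+e^{x_t^\T\theta}) + a - y_t)\, x_t + g_r(\theta)$ where $g_r(\theta)$ is a subgradient of $\lambda\|\cdot\|_1$; since $y_t\in[a,b]$ the leading scalar factor lies in $[-(b-a),(b-a)]$ in absolute value (actually in $[a-y_t, b-y_t]$, of magnitude at most $b-a$), so the first term has norm at most $(b-a)\|x_t\|_2 \le (b-a)c$, and $\|g_r(\theta)\|_2 \le \lambda\sqrt d$ since each coordinate of a subgradient of the $\ell_1$ norm is in $[-\lambda,\lambda]$. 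The triangle inequality gives $\|\tilde g_t(\theta)\|_2 \le (b-a)c + \lambda\sqrt d =: L$.

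Next I would establish the restorative inequality. Write $s = x_t^\T\theta$ and let $g(s) = (b-a)e^s/(1+e^s) + a - y_t$ be the scalar GLM residual factor, so that $\tilde g_t(\theta)^\T\theta = g(s)\, s + g_r(\theta)^\T\theta$. For the $\ell_1$ term, a key observation is that $g_r(\theta)^\T\theta = \lambda\|\theta\|_1$ exactly, for any valid subgradient of $\lambda\|\theta\|_1$ at $\theta$ (on coordinates where $\theta_j\neq 0$ the subgradient equals $\lambda\,\mathrm{sign}(\theta_j)$, contributing $\lambda|\theta_j|$; on coordinates where $\theta_j = 0$ the product is $0$). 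Hence
\[
\tilde g_t(\theta)^\T\theta = g(s)\,s + \lambda\|\theta\|_1 \ge -0.279(b-a) + \lambda\|\theta\|_2,
\]
using Lemma \ref{lem:logistic_grad_inner_prod} to lower-bound $g(s)\,s$ and $\|\theta\|_1 \ge \|\theta\|_2$. Now, to conclude the $(h_t,\phi_t)$-restorative condition with positive curvature \eqref{eq:pos_curvature}, I need $\tilde g_t(\theta)^\T\theta \ge \eta L^2/2$ whenever $\|\theta\|_2 > h_t$. From the displayed lower bound, it suffices that $-0.279(b-a) + \lambda h_t \ge \eta L^2/2$, i.e. $h_t \ge (0.279(b-a) + \eta L^2/2)/\lambda$, which is exactly the stated threshold once we substitute $L = c(b-a) + \lambda\sqrt d$. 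So $\phi_t$ can be taken to be the constant $\eta L^2/2$, and the hypothesis \eqref{eq:pos_curvature} of Proposition \ref{prop:restorative_pos_curv} holds (with $h_t$ trivially nondecreasing since it is constant in $t$).

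Finally, I would apply Proposition \ref{prop:restorative_pos_curv}: running GD \eqref{eq:grad_descent} on $\tilde\ell_t$ with the subgradients $\tilde g_t$ yields, via \eqref{eq:gd_avg_grad_bound_pos_curv},
\[
\Bigl\| \tfrac1T\sum_{t=1}^T \tilde g_t(\theta_t) \Bigr\|_2 \le \frac{2\|\theta_1\|_2}{\eta T} + \frac{L}{T} + \frac{h_T}{\eta T}.
\]
To get the stated bound, I use the decomposition \eqref{eq:original_avg_grad_bound}: the left side of \eqref{eq:logistic_lasso_avg_grad_bound} is $\|\tfrac1T\sum_t g_t(\theta_t)\|_2$, which is at most $\|\tfrac1T\sum_t\tilde g_t(\theta_t)\|_2 + \tfrac1T\sum_t\|g_r(\theta_t)\|_2 \le \|\tfrac1T\sum_t\tilde g_t(\theta_t)\|_2 + \lambda\sqrt d$. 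Substituting $L = c(b-a)+\lambda\sqrt d$ and $h_T = (0.279(b-a) + \eta L^2/2)/\lambda = C_2(\lambda)/\lambda$, and collecting the $O(1/(\eta T))$ terms into $C_1(\lambda) = 2\|\theta_1\|_2 + (c(b-a)+\lambda\sqrt d)/\eta$ (note $L/T = ((c(b-a)+\lambda\sqrt d)/\eta)\cdot(\eta/T)$, absorbed into the $C_1/(\eta T)$ term), gives precisely \eqref{eq:logistic_lasso_avg_grad_bound}. The only mildly delicate point is bookkeeping the constants so that $C_1,C_2$ come out exactly as claimed and confirming that the $L/T$ term is correctly folded into $C_1(\lambda)/(\eta T)$; none of this is hard, but it is the step where an arithmetic slip is most likely. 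Everything else is a direct assembly of the already-established lemma and proposition.
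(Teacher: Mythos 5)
Your proposal is correct and follows essentially the same route as the paper's own proof: the same subgradient computation for the Lipschitz constant, the same use of Lemma \ref{lem:logistic_grad_inner_prod} plus the exact identity $g_r(\theta)^\T\theta=\lambda\|\theta\|_1$ and $\|\theta\|_1\ge\|\theta\|_2$ to verify the positive-curvature restorative condition, and the same assembly via \eqref{eq:gd_avg_grad_bound_pos_curv} and \eqref{eq:original_avg_grad_bound} with $\|s_t\|_2\le\sqrt d$. The one bookkeeping point you flag (folding $L/T$ into $C_1(\lambda)/(\eta T)$ with $C_1$ containing $L/\eta$ rather than $\eta L$) is an exact-match issue already present in the paper's stated constants, not a gap in your argument.
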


\begin{remark}
In this proposition, note we do not need $y_t$ to be bounded away from $a,b$,
the endpoints of its range (as we did in the unregularized, univariate logistic
theory in Corollary \ref{cor:logistic_loss_zero_curv}, or the theory for
orthogonal features in Corollary
\ref{cor:glm_loss_ortho_features}). Furthermore, a lasso penalty is chosen
(instead of, say, a ridge penalty) here so that the regularized loss retains the
Lipschitz property of the original logistic loss, which allows us to apply the
positive curvature result from Proposition \ref{prop:restorative_pos_curv} (and
allows the step size $\eta > 0$ to be arbitrary).      
\end{remark}

Proposition \ref{prop:logistic_lasso_pos_curv} can be used to derive guarantees
on a post-processing scheme for a black-box probabilistic classifier, where we
use gradient descent to decorrelate the errors with any features of
interest. Precisely, let $p_t(x_t)$ be a probabilistic prediction of a binary
response $y_t \in \{0,1\}$, based on a feature $x_t$. Replace $y_t$ in the
proposition by $y_t - p_t(x_t)$, and replace $x_t$ by another feature $z_t \in
\R^d$. To adjust predictions, we use: 
\begin{equation}
\label{eq:logistic_loss_adj_pred}
p_t(x_t) + \frac{2 e^{z_t^\T \theta_t}}{1 + e^{z_t^\T \theta_t}} - 1, \quad t =
1,2,3,\dots, 
\end{equation}
where $\theta_t$, $t = 1,2,3,\dots$ is obtained by gradient descent on the   
lasso-penalized logistic loss sequence (and we set $a = -1$ and $b = 1$ in the
language of the proposition). If each $\|z_t\|_2 \leq c$, then the result in  
\eqref{eq:logistic_lasso_avg_grad_bound} says:
\begin{equation}
\label{eq:logistic_lasso_covariance_bound}
\bigg\| \frac{1}{T} \sum_{t=1}^T \bigg( p_t(x_t) + \frac{2 e^{z_t^\T
  \theta_t}}{1 + e^{z_t^\T \theta_t}} - 1 - y_t \bigg) z_t \bigg\|_2 \leq
\frac{2c + \lambda \sqrt{d}}{T} + \frac{1.116 + \eta (2c + \lambda
  \sqrt{d})^2}{2 \lambda \eta T} + \lambda \sqrt{d},
\end{equation}
where we set $\theta_1 = 0$ for simplicity. The bound in
\eqref{eq:logistic_lasso_covariance_bound} can be made small by taking
$\lambda$ to be small and $T$ to be large. For example, taking \smash{$\lambda =
  1 / \sqrt{T}$} to approximately balance the second and third terms gives 
\[
\bigg\| \frac{1}{T} \sum_{t=1}^T \bigg( p_t(x_t) + \frac{2 e^{z_t^\T
    \theta_t}}{1 + e^{z_t^\T \theta_t}} - 1 - y_t \bigg) z_t \bigg\|_2 \leq 
O\bigg( \frac{1}{\sqrt{T}} \bigg),
\]
where we have simplified the bound to highlight the dependence on $T$ alone. 

As an example, we can apply this post-processing strategy in a setting in which
$z_t$ indicates membership with respect to a set of overlapping groups. In this
setting (with overlap), the features will not be orthogonal and so the previous
theory (Corollary \ref{cor:glm_loss_ortho_features} part (b)) does not
apply. In Sections \ref{sec:multigroup_debiasing} and
\ref{sec:pairwise_preference}, we study empirical examples with overlapping
groups and regularization. We must emphasize, however, that the post-processing
strategy described above accommodates a much broader set of applications, as
it allows the user to specify the secondary feature set $z_t$ arbitrarily. We
return to this point in Section \ref{sec:multiaccuracy}.

\subsection{Squared losses with ridge penalties}
\label{sec:squared_ridge}

We now consider squared losses with arbitrary features, \smash{$\ell_t(\theta) =  
\frac{1}{2} (y_t - x_t^\T \theta)^2$}, where $x_t$ is a feature, and $y_t$ is a
response with $|y_t| \leq b_t$. For arbitrary $x_t \in \R^d$, this loss need not
satisfy the restorative property \eqref{eq:restorative} for any $\phi(\theta)
\geq 0$. This is because   
\begin{equation}
\label{eq:squared_grad_inner_prod}
g(\theta)^\T \theta = (x_t^T \theta - y_t) x_t^\T \theta,
\end{equation}
and the leading factor $x_t^T \theta - y_t$ can be negative when $x_t^\T \theta$
is small and positive.  

Similar to the earlier logistic setting, a key realization is that the gradient
inner product \eqref{eq:squared_grad_inner_prod} cannot be arbitrarily small,
and is subject to a finite lower bound. The proof of the next result is in
Appendix \ref{app:squared_grad_inner_prod}.    

\begin{lemma}
\label{lem:squared_grad_inner_prod}
Let $g(u) = u - y$, where $|y| \leq b$. Then for arbitrary $a_1 > a_2$, 
\[
\inf_u \, \Big\{ a_1 g(u)^\T u - a_2 (u-y)^2 \Big\} \geq - \frac{(a_1 - 2 a_2)^2
  b^2}{4 (a_1 - a_2)} - a_2 b^2,  
\]
\end{lemma}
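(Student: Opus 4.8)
The plan is to collapse the whole problem into minimizing a single explicit univariate quadratic. The first step is to expand the objective using $g(u) = u - y$:
\[
a_1 g(u)^\T u - a_2 (u-y)^2 = a_1(u^2 - yu) - a_2(u^2 - 2yu + y^2) = (a_1 - a_2) u^2 - (a_1 - 2a_2) y u - a_2 y^2 .
\]
Since $a_1 > a_2$, the coefficient of $u^2$ is strictly positive, so as a function of $u \in \R$ this quadratic attains its minimum at the vertex, and the standard formula for the minimal value of $Au^2 + Bu + C$ (namely $C - B^2/(4A)$) gives
\[
\inf_u \, \big\{ a_1 g(u)^\T u - a_2 (u-y)^2 \big\} = -a_2 y^2 - \frac{(a_1 - 2a_2)^2 y^2}{4(a_1 - a_2)} .
\]

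The second step is to bound this over $y$ with $|y| \le b$. Writing the right-hand side as $-c\,y^2$ with $c = a_2 + \tfrac{(a_1 - 2a_2)^2}{4(a_1 - a_2)}$, combining the fractions yields
\[
c = \frac{4a_2(a_1 - a_2) + (a_1 - 2a_2)^2}{4(a_1 - a_2)} = \frac{a_1^2}{4(a_1 - a_2)} \ge 0 .
\]
Hence $-c\,y^2$ is nonincreasing in $y^2$ on $[0, b^2]$, so it is at least $-c\,b^2 = -a_2 b^2 - \tfrac{(a_1 - 2a_2)^2 b^2}{4(a_1 - a_2)}$, which is exactly the claimed bound.

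This argument is entirely elementary and I do not anticipate a genuine obstacle; the only point that needs care is the sign bookkeeping in the last step. One must verify that the effective coefficient $c$ of $y^2$ is nonnegative \emph{even when $a_2 < 0$} (the lemma only assumes $a_1 > a_2$, not $a_2 \ge 0$), so that replacing $y^2$ by its upper bound $b^2$ legitimately decreases the expression. The simplification $c = a_1^2 / \big(4(a_1 - a_2)\big)$ settles this cleanly, and incidentally shows the stated bound is in fact an equality once one also optimizes over $|y| \le b$.
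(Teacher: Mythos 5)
Your proof is correct and follows essentially the same route as the paper: expand the objective into the convex quadratic $(a_1-a_2)u^2 - (a_1-2a_2)yu - a_2y^2$, minimize over $u$ at the vertex, and then pass from $y^2$ to $b^2$. Your extra observation that the combined coefficient of $y^2$ equals $a_1^2/(4(a_1-a_2)) \geq 0$, so the final substitution $y^2 \le b^2$ is legitimate even when $a_2 < 0$, is a small but genuine refinement of the paper's one-line closing step, which only justifies the bound term by term when $a_2 \geq 0$.
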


Again, in a similar vein to our earlier  analysis of logistic regression, the
finite infimum in Lemma \ref{lem:squared_grad_inner_prod} means that we can use
an appropriate level of regularization to ``overwhelm'' a potentially negative
gradient inner product, thus ensuring proper curvature and restorativeness for
the regularized loss. (The generalized form of the criterion, involving $a_1,a_2
\geq 0$, will allow us to verify quadratic curvature.) The proof of the next
result is given in Appendix \ref{app:squared_ridge_quad_curv}.      

\begin{proposition}
\label{prop:squared_ridge_quad_curv}
Let \smash{$\tilde\ell_t(\theta) = \frac{1}{2}(y_t - x_t^\T \theta)^2 +
\frac{\lambda}{2} \|\theta\|_2^2$}, for arbitrary $\lambda > 0$, and assume
$|y_t| \leq b_t$ and $\|x_t\|_2 \leq c_t$ for sublinear and nondecreasing
sequences $b_t, c_t$. Then \smash{$\tilde\ell_t$} is Lipschitz continuous on $\{
\theta \in \R^d : \|\theta\|_2 \leq h_t \}$ with parameter $L_t = (c_t^2 +
\lambda) h_t + b_t c_t$, and satisfies the $(h_t,\phi_t)$-restorative condition
\eqref{eq:restorative} with quadratic curvature \eqref{eq:quad_curvature} for
any $\eta < 1/(\lambda + c_t^2 / 2)$, and     
\[
h_t^2 \geq \frac{C_{0t}(\lambda)}{\lambda},  
\]
where $C_{0t}(\lambda) = ((1 - \lambda \eta - \eta c_t^2)^2 b_t^2 / (4 (1 -
\lambda \eta - \eta c_t^2 / 2)) + \eta c_t^2 b_t^2 / 2) / (1 - \lambda \eta /
2)$. Taking $h_t$ to match this lower bound, we have from
\eqref{eq:gd_avg_grad_bound_quad_curv} that gradient descent
\eqref{eq:grad_descent}, with \smash{$\tilde{g}_t$} in place of $g_t$ and
constant step sizes $\eta_t = \eta > 0$, for all $t$, satisfies   
\begin{equation}
\label{eq:squared_ridge_avg_grad_bound}
\bigg\| \frac{1}{T} \sum_{t=1}^T (x_t^T \theta_t - y_t) x_t \bigg\|_2 \leq 
\frac{C_{1T}(\lambda)}{\eta T} + \frac{C_{2T}(\lambda)}{\sqrt\lambda \eta T} + 
C_{3T}(\lambda), 
\end{equation}
where $C_{1T}(\lambda) = 2 \|\theta_1\|_2 + \eta b_T c_T$,
$C_{2T}(\lambda) = \sqrt{C_{0T}(\lambda)}(\eta (c_T^2 + \lambda) + 1)$, and
$C_{3T}(\lambda) = \sqrt{\lambda} C_{2T}(\lambda) + \lambda (\|\theta_1\|_2 +
\eta b_T c_T)$.  
\end{proposition}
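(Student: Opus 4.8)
The plan is to apply the regularization template of \eqref{eq:original_avg_grad_bound}: show the ridge-penalized loss $\tilde\ell_t$ is locally Lipschitz and admits a restorative negative gradient field with quadratic curvature, invoke Proposition \ref{prop:restorative_quad_curv} to control the average of the \emph{regularized} gradients $\tilde{g}_t(\theta_t)$, and then subtract off the ridge contribution. Here $\tilde{g}_t(\theta) = (x_t^\T\theta - y_t)x_t + \lambda\theta$. For $\|\theta\|_2 \le h_t$, the triangle inequality with $|y_t| \le b_t$ and $\|x_t\|_2 \le c_t$ gives $\|\tilde{g}_t(\theta)\|_2 \le (c_t h_t + b_t)c_t + \lambda h_t = (c_t^2 + \lambda)h_t + b_t c_t$, which is the local Lipschitz bound \eqref{eq:local_lipschitz_bound} with the stated $L_t$.

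For the quadratic curvature, set $u = x_t^\T\theta$, so that $\tilde{g}_t(\theta)^\T\theta = (u - y_t)u + \lambda\|\theta\|_2^2$ and $\|\tilde{g}_t(\theta)\|_2^2 = (u - y_t)^2\|x_t\|_2^2 + 2\lambda(u - y_t)u + \lambda^2\|\theta\|_2^2$; subtracting and using $\|x_t\|_2^2 \le c_t^2$ to lower-bound the nonpositive middle term,
\[
\tilde{g}_t(\theta)^\T\theta - \tfrac{\eta}{2}\|\tilde{g}_t(\theta)\|_2^2 \;\ge\; (1 - \eta\lambda)(u - y_t)u - \tfrac{\eta c_t^2}{2}(u - y_t)^2 + \lambda\big(1 - \tfrac{\eta\lambda}{2}\big)\|\theta\|_2^2 .
\]
Now apply Lemma \ref{lem:squared_grad_inner_prod} to the first two terms with $a_1 = 1 - \eta\lambda$ and $a_2 = \eta c_t^2/2$: the hypothesis $\eta < 1/(\lambda + c_t^2/2)$ is precisely $a_1 - a_2 = 1 - \eta\lambda - \eta c_t^2/2 > 0$ (and also makes $1 - \eta\lambda/2 > 0$), and the Lemma's bound, after substituting $a_1 - 2a_2 = 1 - \eta\lambda - \eta c_t^2$, evaluates to exactly $-C_{0t}(\lambda)(1 - \eta\lambda/2)$ by the definition of $C_{0t}(\lambda)$. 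Hence for $\|\theta\|_2 > h_t$ with $h_t^2 \ge C_{0t}(\lambda)/\lambda$, the right-hand side above is at least $\lambda(1 - \eta\lambda/2)(\|\theta\|_2^2 - C_{0t}(\lambda)/\lambda) \ge 0$, so taking $\phi_t(\theta) = \frac{\eta}{2}\|\tilde{g}_t(\theta)\|_2^2$ establishes the $(h_t,\phi_t)$-restorative condition \eqref{eq:restorative} with quadratic curvature \eqref{eq:quad_curvature} (the latter holding with equality).

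Since $b_t,c_t$ are nondecreasing, the choice $h_t = \sqrt{C_{0t}(\lambda)/\lambda}$ (enlarged to a nondecreasing majorant if needed) and the resulting $L_t$ are nondecreasing, so Proposition \ref{prop:restorative_quad_curv} applies to $\tilde\ell_t$ with $\tilde{g}_t$ in place of $g_t$: it bounds the iterates by $\|\theta_t\|_2 \le \|\theta_1\|_2 + h_T + \eta L_T$ for $t \le T$ via \eqref{eq:gd_iterate_bound_quad_curv} and monotonicity, and gives $\|\frac{1}{T}\sum_{t=1}^T \tilde{g}_t(\theta_t)\|_2 \le \frac{2\|\theta_1\|_2}{\eta T} + \frac{L_T}{T} + \frac{h_T}{\eta T}$ via \eqref{eq:gd_avg_grad_bound_quad_curv}. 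Because $\tilde{g}_t(\theta_t) - (x_t^\T\theta_t - y_t)x_t = \lambda\theta_t$, the triangle inequality \eqref{eq:original_avg_grad_bound} yields
\[
\bigg\|\frac{1}{T}\sum_{t=1}^T(x_t^\T\theta_t - y_t)x_t\bigg\|_2 \;\le\; \frac{2\|\theta_1\|_2}{\eta T} + \frac{L_T}{T} + \frac{h_T}{\eta T} + \lambda\big(\|\theta_1\|_2 + h_T + \eta L_T\big).
\]
Substituting $h_T = \sqrt{C_{0T}(\lambda)}/\sqrt\lambda$ and $L_T = (c_T^2 + \lambda)h_T + b_T c_T$ and regrouping by order in $T$ and $\lambda$ — the $\|\theta_1\|_2$ and $b_Tc_T$ pieces into $C_{1T}(\lambda)/(\eta T)$, the two $h_T/T$ pieces (from $h_T/(\eta T)$ and from $L_T/T$) into $C_{2T}(\lambda)/(\sqrt\lambda\,\eta T)$, and the non-vanishing $\lambda\|\theta_1\|_2 + \lambda h_T + \lambda\eta L_T$ into $C_{3T}(\lambda) = \sqrt\lambda\,C_{2T}(\lambda) + \lambda(\|\theta_1\|_2 + \eta b_T c_T)$ using $\lambda h_T = \sqrt\lambda\sqrt{C_{0T}(\lambda)}$ — produces exactly \eqref{eq:squared_ridge_avg_grad_bound}.

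The one genuinely delicate step is the curvature verification: the parameters $a_1,a_2$ fed into Lemma \ref{lem:squared_grad_inner_prod} must be chosen so that the Lemma's applicability condition $a_1 > a_2$ coincides with the step-size hypothesis and so that the resulting constant matches the (forced but unlovely) definition of $C_{0t}(\lambda)$; everything downstream is a direct application of the earlier propositions plus careful algebra. A minor technical point is ensuring $\eta$ remains admissible and $h_t,L_t$ nondecreasing for all $t$, which is automatic when $c_t$ is uniformly bounded and otherwise needs a slight adjustment.
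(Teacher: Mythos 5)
Your proposal is correct and follows essentially the same route as the paper's proof: the same local Lipschitz computation, the same application of Lemma \ref{lem:squared_grad_inner_prod} with $a_1 = 1-\lambda\eta$, $a_2 = \eta c_t^2/2$ to verify the quadratic-curvature restorative condition, and the same combination of \eqref{eq:gd_avg_grad_bound_quad_curv}, the iterate bound \eqref{eq:gd_iterate_bound_quad_curv}, and \eqref{eq:original_avg_grad_bound} to subtract the ridge term and assemble $C_{1T},C_{2T},C_{3T}$. The only differences are cosmetic (bounding $\max\{\|\theta_1\|_2,h_T\}$ by the sum a step earlier, and explicitly flagging the monotonicity/admissibility caveat that the paper leaves implicit).
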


\begin{remark}
The use of a ridge penalty (instead of, say, a lasso penalty) is important here
since it leads to quadratic curvature, which allows us to apply the result from
Proposition \ref{prop:restorative_quad_curv}. This type of curvature is needed
because the squared loss is not Lipschitz, only locally Lipschitz, so other
results (in particular, Proposition \ref{prop:restorative_pos_curv}) cannot be
applied. Moreover, we note that relying on strong convexity alone to establish 
restorativeness (as in Proposition \ref{prop:restricted_strongly_monotone})
would be too blunt to deliver the result in
\eqref{eq:squared_ridge_avg_grad_bound}. Deriving this requires a more
specialized analysis, which relies on the fact that the gradient inner product  
\eqref{eq:squared_grad_inner_prod} for squared loss has a finite infimum. It is  
possible that a similar analysis could be extended to cover all GLMs, but we do
not pursue this.  
\end{remark}

Proposition \ref{prop:squared_ridge_quad_curv} can be used to derive guarantees 
on a scheme which post-processes a black-box prediction model, using gradient
descent to decorrelate the errors with any given features of interest. Let
$f_t(x_t)$ be the prediction of a response $y_t$, based on a feature $x_t$. 
Replace $y_t$ in the proposition by the residual $y_t - f_t(x_t)$, and $x_t$ by
a secondary feature $z_t \in \R^d$. We adjust predictions using:   
\begin{equation}
\label{eq:squared_loss_adj_pred}
f_t(x_t) + \theta_t, \quad t = 1,2,3,\dots,
\end{equation}
where $\theta_t$, $t = 1,2,3,\dots$ is obtained by gradient descent on the 
ridge-penalized squared loss sequence. If each $|y_t - f_t(x_t)| \leq b$ and
$\|z_t\|_2 \leq c$, then the result in \eqref{eq:squared_ridge_avg_grad_bound}
says:     
\begin{equation}
\label{eq:squared_ridge_covariance_bound}
\bigg\| \frac{1}{T} \sum_{t=1}^T (f_t(x_t) + z_t^T \theta_t - y_t) z_t \bigg\|_2
\leq \frac{bc}{T} + \frac{\sqrt{C(\lambda)}}{\sqrt\lambda \eta T} +
\sqrt{\lambda C(\lambda)} + \lambda \eta bc,  
\end{equation}
where $C(\lambda) = [((1 - \lambda \eta - \eta c^2)^2 b^2 / (4 (1 - \lambda \eta
- \eta c^2 / 2)) + \eta c^2 b^2 / 2) / (1 - \lambda \eta / 2)] \cdot (\eta (c^2
+  \lambda) + 1)$, and we set $\theta_1 = 0$ for simplicity. The bound in 
\eqref{eq:squared_ridge_covariance_bound} can be made small by taking $\lambda$ 
to be small and $T$ to be large. For example, taking $\lambda = 1/T$ to
approximately balance the third and fourth terms gives   
\[
\bigg\| \frac{1}{T} \sum_{t=1}^T (f_t(x_t) + z_t^T \theta_t - y_t) z_t \bigg\|_2 
\leq O\bigg( \frac{1}{\sqrt{T}} \bigg),
\]
where we have simplified the bound to highlight the dependence on $T$. 

As in the logistic setting, we note that we can apply such a post-processing
scheme to achieve groupwise debiased predictions in a setting with overlapping
groups. This is important as it expands our current set of available tools---the
features will not be orthogonal (due to overlap), so the previous theory
(Corollary \ref{cor:glm_loss_ortho_features} part (a)) cannot be applied.
However, the result in \eqref{eq:squared_ridge_covariance_bound} is considerably
broader, because the secondary feature $z_t$ can be specified arbitrarily. Next,
we discuss implications for multiaccuracy.

\subsection{Implications for multiaccuracy}
\label{sec:multiaccuracy}

We show how the decorrelation results 
\eqref{eq:logistic_lasso_covariance_bound},
\eqref{eq:squared_ridge_covariance_bound} can be used to derive an online method
with multiaccuracy guarantees. Let $\cF$ be a linear, finite-dimensional class
of real-valued functions on the space of features. We can write any function $F
\in \cF$ as  
\[
F(x) = \sum_{j=1}^d \alpha_j b_j(x), 
\]
for coefficients $\alpha_j \in \R$, $j = 1,\dots,d$ and basis functions $b_j$, 
$j = 1,\dots,d$. Abbreviate the basis expansion at $x$ by $F(x) = b(x)^\T
\alpha$. With $z_t = b(x_t)$ for $t = 1,2,3,\dots$, consider the logistic 
post-processing scheme following Proposition \ref{prop:logistic_lasso_pos_curv},
where we adjust the predictions from a black-box probabilistic classifier $p_t$
as in \eqref{eq:logistic_loss_adj_pred}. Observe  
\begin{align*}
\bigg| \frac{1}{T} \sum_{t=1}^T \bigg( p_t(x_t) + \frac{2 e^{z_t^\T \theta_t}}{1
  + e^{z_t^\T \theta_t}} - 1 - y_t \bigg) F(x_t) \bigg| 
&= \bigg| \frac{1}{T} \sum_{t=1}^T \bigg( p_t(x_t) + 
\frac{2 e^{z_t^\T \theta_t}}{1 + e^{z_t^\T \theta_t}} - 1 - y_t \bigg)
z_t^\T \alpha \bigg| \\
&\leq \bigg\| \frac{1}{T} \sum_{t=1}^T \bigg( p_t(x_t) + 
\frac{2 e^{z_t^\T \theta_t}}{1+ e^{z_t^\T  \theta_t}} - 1 - y_t \bigg) z_t
\bigg\|_2 \|\alpha\|_2, 
\end{align*}
where the first line uses $F(x_t) = z_t^\T \alpha$ for each $t$, and the second
line uses Cauchy-Schwarz. Thus, introducing the norm $\|F\| = \|\alpha\|_2$ on
$\cF$, and assuming $\|b(x)\|_2 \leq c$ for all $x$, the result in
\eqref{eq:logistic_lasso_covariance_bound} implies              
\begin{equation}
\label{eq:logistic_lasso_multiaccuracy_bound}
\sup_{F \in \cF, \, \|F\| \leq 1} \, \bigg| \frac{1}{T} \sum_{t=1}^T \bigg(
p_t(x_t) + \frac{2 e^{z_t^\T \theta_t}}{1 + e^{z_t^\T \theta_t}} - 1 - y_t
\bigg) F(x_t) \bigg| \leq \frac{2c + \lambda \sqrt{d}}{T} + \frac{1.116 + \eta
  (2c + \lambda \sqrt{d})^2}{2 \lambda \eta T} + \lambda \sqrt{d}.
\end{equation}
Similarly, if we apply the squared loss post-processing scheme following
Proposition \ref{prop:squared_ridge_quad_curv}, where we adjust the predictions
from a black box-model $f_t$ as in \eqref{eq:squared_loss_adj_pred}, then the
result in \eqref{eq:squared_ridge_covariance_bound} implies
\begin{equation}
\label{eq:squared_ridge_multiaccuracy_bound}
\sup_{F \in \cF, \, \|F\| \leq 1} \, \bigg| \frac{1}{T} \sum_{t=1}^T
(f_t(x_t) + z_t^T \theta_t - y_t) F(x_t) \bigg| \leq \frac{bc}{T} +
\frac{\sqrt{C(\lambda)}}{\sqrt\lambda \eta T} + \sqrt{\lambda C(\lambda)} + 
\lambda \eta bc,    
\end{equation}
where as before $C(\lambda) = [((1 - \lambda \eta - \eta c^2)^2 b^2 / (4 (1 -
\lambda \eta - \eta c^2 / 2)) + \eta c^2 b^2 / 2) / (1 - \lambda \eta / 2)]
\cdot (\eta (c^2 + \lambda) + 1)$.

Each of \eqref{eq:logistic_lasso_multiaccuracy_bound}, 
\eqref{eq:squared_ridge_multiaccuracy_bound} are online multiaccuracy bounds.
They apply to a relatively restricted function class $\cF$ (linear and 
finite-dimensional), but they make no assumptions about the data or the
original predictors $p_t$ or $f_t$ whatsoever (other than boundedness), i.e.,
they apply deterministically. The post-processing scheme based on gradient
descent is also very simple to implement practically. See Section
\ref{sec:related_work} for a discussion of related work in the context of
multiaccuracy.   

\section{Arbitrary step sizes}
\label{sec:arbitrary_steps}

We extend our theory to the case of decaying step sizes. As in the last two
sections, we assume that each loss $\ell_t$ is finite and subdifferentiable on
all of $\R^d$. Core to our extension is a modification of Proposition
\ref{prop:gd_simple}.  

\begin{proposition}
\label{prop:gd_simple_lrt}
Consider gradient descent \eqref{eq:grad_descent}, with arbitrary initialization
$\theta_1 \in \R^d$, and abitrary positive step sizes $\eta_t > 0$, $t =
1,2,3,\dots$. For convenience, let $\eta_0^{-1} = 0$, and define 
\[
\Delta_t = \eta_t^{-1}-\eta_{t-1}^{-1}, \quad t = 1,2,3,\dots,
\]    
The average gradient satisfies
\begin{equation}
\label{eq:gd_avg_grad_identity_lrt}
\frac{1}{T} \sum_{t=1}^T g_t(\theta_t) =  \frac{1}{T} \sum_{t=1}^T (\theta_t - 
\theta_{T+1}) \Delta_t,
\end{equation}
and therefore
\begin{equation}
\label{eq:gd_avg_grad_bound_lrt}
\bigg\| \frac{1}{T} \sum_{t=1}^T g_t(\theta_t) \bigg\|_2 \leq \frac{2}{T} \Big(
\max_{t \leq T+1} \, \|\theta_t \|_2 \Big) \sum_{t=1}^T |\Delta_t|.
\end{equation}
When the step sizes $\eta_t$, $t = 1,2,3,\dots$ are nonincreasing,  
\begin{equation}
\label{eq:gd_avg_grad_bound_dec}
\bigg\| \frac{1}{T} \sum_{t=1}^T g_t(\theta_t) \bigg\|_2 \leq \frac{2}{\eta_T T}
\Big( \max_{t \leq T+1} \, \|\theta_t \|_2 \Big).
\end{equation}
\end{proposition}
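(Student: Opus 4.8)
The plan is to mimic the telescoping argument behind Proposition \ref{prop:gd_simple}, but inserting a summation-by-parts (Abel summation) step to accommodate the varying step sizes. First I would rewrite the update \eqref{eq:grad_descent} as $g_t(\theta_t) = \eta_t^{-1}(\theta_t - \theta_{t+1})$ and sum over $t = 1,\dots,T$, giving $\sum_{t=1}^T g_t(\theta_t) = \sum_{t=1}^T \eta_t^{-1}\theta_t - \sum_{t=1}^T \eta_t^{-1}\theta_{t+1}$. Shifting the index in the second sum and collecting terms yields $\sum_{t=1}^T g_t(\theta_t) = \eta_1^{-1}\theta_1 + \sum_{t=2}^T(\eta_t^{-1}-\eta_{t-1}^{-1})\theta_t - \eta_T^{-1}\theta_{T+1}$; under the convention $\eta_0^{-1}=0$ the first term equals $\Delta_1\theta_1$, so this reads $\sum_{t=1}^T g_t(\theta_t) = \sum_{t=1}^T \Delta_t \theta_t - \eta_T^{-1}\theta_{T+1}$.

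Next I would use the key telescoping identity $\sum_{t=1}^T \Delta_t = \eta_T^{-1} - \eta_0^{-1} = \eta_T^{-1}$, which lets me rewrite the boundary term $\eta_T^{-1}\theta_{T+1}$ as $\big(\sum_{t=1}^T \Delta_t\big)\theta_{T+1}$ and pull it inside the sum. This produces exactly $\frac{1}{T}\sum_{t=1}^T g_t(\theta_t) = \frac{1}{T}\sum_{t=1}^T (\theta_t - \theta_{T+1})\Delta_t$, which is \eqref{eq:gd_avg_grad_identity_lrt}. For \eqref{eq:gd_avg_grad_bound_lrt} I would take norms of this identity and apply the triangle inequality twice: once to pull the sum out of the norm, and once to split $\|\theta_t - \theta_{T+1}\|_2 \leq \|\theta_t\|_2 + \|\theta_{T+1}\|_2$, then bound each iterate norm by $\max_{t \leq T+1}\|\theta_t\|_2$, which factors out of the sum and leaves $\frac{2}{T}\big(\max_{t\leq T+1}\|\theta_t\|_2\big)\sum_{t=1}^T|\Delta_t|$.

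Finally, for \eqref{eq:gd_avg_grad_bound_dec} I would observe that when $\eta_t$ is nonincreasing, $\eta_t^{-1}$ is nondecreasing, so each $\Delta_t \geq 0$; hence $\sum_{t=1}^T |\Delta_t| = \sum_{t=1}^T \Delta_t = \eta_T^{-1}$ by the same telescoping identity, and substituting this into \eqref{eq:gd_avg_grad_bound_lrt} gives the stated bound $\frac{2}{\eta_T T}\max_{t\leq T+1}\|\theta_t\|_2$.

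I do not expect a genuine obstacle here; the result is elementary. The only places warranting care are the index-shift bookkeeping in the summation-by-parts step and, relatedly, the role of the $\eta_0^{-1}=0$ convention: it is what makes the isolated $\theta_1$ term conform to the general $\Delta_t\theta_t$ pattern and what makes $\sum_t \Delta_t$ telescope cleanly to $\eta_T^{-1}$. The other small thing to verify is the sign handling in the nonincreasing case, so that $|\Delta_t| = \Delta_t$; everything else is the triangle inequality.
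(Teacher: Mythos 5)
Your proof is correct, and it is essentially the same argument as the paper's: the paper inserts $\eta_t\sum_{s\leq t}\Delta_s = 1$ and swaps the double sum, using the partial telescope $\theta_s - \theta_{T+1} = \sum_{t=s}^T \eta_t g_t(\theta_t)$, which is exactly the Abel-summation manipulation you carry out directly on $\sum_t \eta_t^{-1}(\theta_t-\theta_{t+1})$ before absorbing the boundary term via $\sum_{t=1}^T\Delta_t = \eta_T^{-1}$. The triangle-inequality bound and the observation that $\sum_{t=1}^T|\Delta_t| = \eta_T^{-1}$ for nonincreasing step sizes match the paper's treatment.
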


\begin{proof}
Rewrite the iteration \eqref{eq:grad_descent} as $\theta_t - \theta_{t+1} =
\eta_t g_t(\theta_t)$. Adding this up over $t = s,\dots,T$, the left-hand side
telescopes, yielding
\[
\theta_s - \theta_{T+1} = \sum_{t = s}^T \eta_t g_t(\theta_t)
\]
Next, for all $t$, we have 
\[
\sum_{s=1}^t \Delta_s = \eta_t^{-1}.
\]
This allows us to write 
\begin{align*}
\frac{1}{T} \sum_{t=1}^T g_t(\theta_t) 
&= \frac{1}{T} \sum_{t=1}^T \sum_{s=1}^t \Delta_s \eta_t g_t(\theta_t) \\ 
&= \frac{1}{T} \sum_{s=1}^T \Delta_s  \sum_{t=s}^T \eta_t g_t(\theta_t) \\ 
&= \frac{1}{T} \sum_{s=1}^T \Delta_s (\theta_s - \theta_{T+1}).
\end{align*}
This proves \eqref{eq:gd_avg_grad_identity_lrt}. Furthermore, writing $\Delta = 
(\Delta_1,\dots,\Delta_T) \in \R^d$, 
\begin{align*}
\bigg\| \frac{1}{T} \sum_{t=1}^T g_t(\theta_t) \bigg\|_2
&\leq \frac{1}{T} \sum_{s=1}^T |\Delta_s| \|\theta_s - \theta_{T+1}\|_2 \\
&\leq \frac{1}{T} \|\Delta\|_1 \Big( \max_{s \leq T} \, \|\theta_s -
  \theta_{T+1}\|_2 \Big) \\
&\leq \frac{2}{T} \|\Delta\|_1 \Big( \max_{s \leq T+1} \, \|\theta_s \|_2
  \Big),
\end{align*}
which proves \eqref{eq:gd_avg_grad_bound_lrt}. Lastly, when the step sizes are 
nonincreasing, 
\[
\|\Delta\|_1 = \eta_1^{-1} + \sum_{t=1}^T (\eta_t^{-1} - \eta_{t-1}^{-1}) =
\eta_T^{-1},
\]
which verifies \eqref{eq:gd_avg_grad_bound_dec}, and completes the proof. 
\end{proof}

Observe that the key term in either of \eqref{eq:gd_avg_grad_bound_lrt},
\eqref{eq:gd_avg_grad_bound_dec} is the maximum iterate norm \smash{$\max_{t 
\leq T+1} \|\theta_t\|_2$}. The next result summarizes when these bounds lead to
gradient equilibrium.

\begin{proposition}
\label{prop:gd_sublinear_lrt}
For gradient descent \eqref{eq:grad_descent} with arbitrary positive
sizes, gradient equilibrium \eqref{eq:grad_eq} holds if the iterates satisfy
\smash{$\max_{t \leq T+1} \|\theta_t\|_2 = o(T / \sum_{t=1}^T |\Delta_t|)$}. For
nonincreasing step sizes, this condition simplifies to \smash{$\max_{t \leq T+1} 
  \|\theta_t\|_2 = o(\eta_T T)$}. When the step sizes decrease slowly enough in
order for $\eta_t t$, $t = 1,2,3,\dots$ to be nonincreasing, it further
simplifies to $\|\theta_{t+1}\|_2 = o(\eta_t t)$. 
\end{proposition}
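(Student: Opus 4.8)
The plan is to extract all three conditions from Proposition~\ref{prop:gd_simple_lrt}, stripping off one layer of structure at a time. For the first assertion I would simply rearrange the bound~\eqref{eq:gd_avg_grad_bound_lrt}: its right-hand side equals $\frac{2}{T}\big(\max_{t\le T+1}\|\theta_t\|_2\big)\sum_{t=1}^T|\Delta_t|$, which tends to $0$ as soon as $\big(\max_{t\le T+1}\|\theta_t\|_2\big)\sum_{t=1}^T|\Delta_t| = o(T)$; that is precisely the stated hypothesis $\max_{t\le T+1}\|\theta_t\|_2 = o\big(T/\sum_{t=1}^T|\Delta_t|\big)$, and gradient equilibrium~\eqref{eq:grad_eq} follows.

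For the nonincreasing case, I would reuse the computation already carried out inside the proof of Proposition~\ref{prop:gd_simple_lrt}: when $\eta_t$ is nonincreasing, $\Delta_t = \eta_t^{-1}-\eta_{t-1}^{-1}\ge 0$, so $\sum_{t=1}^T|\Delta_t| = \sum_{t=1}^T\Delta_t = \eta_T^{-1}$ telescopes (using $\eta_0^{-1}=0$). Substituting $T/\sum_{t=1}^T|\Delta_t| = \eta_T T$ into the first condition --- equivalently, reading the claim straight off~\eqref{eq:gd_avg_grad_bound_dec} --- gives the simplified requirement $\max_{t\le T+1}\|\theta_t\|_2 = o(\eta_T T)$.

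The last reduction is the only one that needs a short argument. Here the slow-decay hypothesis should be read as: $\eta_t t$ is nondecreasing (so $\eta_t t\le\eta_T T$ for every $t\le T$) and $\eta_T T\to\infty$. I would then show that $\|\theta_{t+1}\|_2 = o(\eta_t t)$ implies $\max_{t\le T+1}\|\theta_t\|_2 = o(\eta_T T)$ by a truncation: given $\epsilon>0$, pick $N$ with $\|\theta_{t+1}\|_2\le\epsilon\,\eta_t t$ for all $t\ge N$; then $\|\theta_{t+1}\|_2\le\epsilon\,\eta_T T$ for $N\le t\le T$ by monotonicity of $\eta_t t$, while $\theta_1,\dots,\theta_N$ have norm at most some constant $C_N$ independent of $T$. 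Hence $\max_{t\le T+1}\|\theta_t\|_2\le\max\{C_N,\ \epsilon\,\eta_T T\}$, so $\limsup_{T\to\infty}\big(\max_{t\le T+1}\|\theta_t\|_2\big)/(\eta_T T)\le\epsilon$, and letting $\epsilon\downarrow 0$ finishes; the second bullet then delivers gradient equilibrium.

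The only real subtlety is that \eqref{eq:gd_avg_grad_bound_dec} controls the running \emph{maximum} iterate norm rather than $\|\theta_{T+1}\|_2$ alone, so the per-iterate rate $\|\theta_{t+1}\|_2=o(\eta_t t)$ does not transfer verbatim to the small-index iterates; the $\epsilon$--$N$ truncation together with the monotonicity $\eta_t t\le\eta_T T$ for $t\le T$ --- which is exactly what the slow-decay assumption provides --- is what bridges that gap. Everything else is bookkeeping.
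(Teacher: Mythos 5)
Your proposal is correct and takes the same route as the paper, which states this proposition as an immediate consequence of Proposition \ref{prop:gd_simple_lrt}, reading the first two conditions directly off the bounds \eqref{eq:gd_avg_grad_bound_lrt} and \eqref{eq:gd_avg_grad_bound_dec} (with the telescoping identity $\sum_{t=1}^T|\Delta_t|=\eta_T^{-1}$ for nonincreasing step sizes) and giving no separate proof. Your $\epsilon$--$N$ truncation for the final reduction supplies exactly the step the paper leaves implicit, and your reading of the slow-decay hypothesis --- that $\eta_t t$ is nondecreasing (the statement's ``nonincreasing'' is evidently a slip) together with $\eta_T T \to \infty$ --- is precisely what that reduction requires.
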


Fortunately, the restorative theory established in Section
\ref{sec:restorative}, which provides control over iterate norms, carries over
to arbitrary step sizes. The next result extends Propositions
\ref{prop:restorative_zero_curv_1d}--\ref{prop:restorative_quad_curv}.
Its proof is basically identical to that of these propositions, and is hence 
omitted. 

\begin{proposition}
\label{prop:restorative_lrt}
Consider gradient descent \eqref{eq:grad_descent}, with arbitrary initialization 
$\theta_1 \in \R^d$, and abitrary positive step sizes $\eta_t > 0$, $t =
1,2,3,\dots$. Abbreviate \smash{$N_t = \max_{s \leq t} \eta_s$}, and let $c > 0$
and $\alpha \in (0,1)$ be arbitrary.

\begin{enumerate}[label=(\alph*)]
\item If $d = 1$, and each $\ell_t$ is $L$-Lipschitz and $(h_t, 0)$-restorative,
  for nondecreasing $h_t$, then
  \begin{equation}
  \label{eq:gd_iterate_bound_zero_curv_1d_lrt}
  |\theta_{T+1}| \leq \max\{ |\theta_1|, h_T \} + N_T L. 
  \end{equation}
  Thus if we set each $\eta_t = c t^{-\alpha}$, and $h_t = o(t^{1-\alpha})$,
  then \eqref{eq:gd_avg_grad_bound_dec} implies gradient equilibrium:    
  \begin{equation}
  \label{eq:gd_avg_grad_bound_zero_curv_1d_lrt}
  \bigg| \frac{1}{T} \sum_{t=1}^T g_t(\theta_t) \bigg| \leq \frac{2
    |\theta_1|/c}{T^{1-\alpha}} + \frac{2(L + h_T/c)}{T^{1-\alpha}} \to 0, 
  \quad \text{as $T \to \infty$}.    
  \end{equation}

\item For general $d$, if each $\ell_t$ is $L$-Lipschitz and $(h_t,
  0)$-restorative, then   
  \begin{equation}
  \label{eq:gd_iterate_bound_zero_curv_lrt}
  \|\theta_{T+1}\|_2 \leq \sqrt{\|\theta_1\|_2^2 + \sum_{t=1}^T (\eta_t^2 L^2 +
    2 \eta_t h_t L)}.   
  \end{equation}
  Thus if we set each $\eta_t = c t^{-\alpha}$, and $h_t = o(t^{1-\alpha})$,
  then \eqref{eq:gd_avg_grad_bound_dec} implies gradient equilibrium:              
  \begin{equation}
  \label{eq:gd_avg_grad_bound_zero_curv_lrt}
  \bigg\| \frac{1}{T} \sum_{t=1}^T g_t(\theta_t) \bigg\|_2 \leq \frac{2
    \|\theta_1\|_2/c}{T^{1-\alpha}} + \sqrt{\sum_{t=1}^T \frac{t^{-2 \alpha} L^2
    + 2 t^{-\alpha} h_t L/c}{T^{2(1-\alpha)}}} \to 0, \quad \text{as $T \to
  \infty$}.        
  \end{equation}

\item If each $\ell_t$ is $L$-Lipschitz and $(h_t,\phi_t)$-restorative with
  positive curvature: $\phi_t(\theta) \geq \eta_t L^2 / 2$, for nondecreasing 
  $h_t$, then 
  \begin{equation}
  \label{eq:gd_iterate_bound_pos_curv_lrt}
  \|\theta_{T+1}\|_2 \leq \max\{ \|\theta_1\|_2, h_T \} + N_T L.    
  \end{equation}
  Thus if we set each $\eta_t = c t^{-\alpha}$, and $h_t = o(t^{1-\alpha})$,
  then \eqref{eq:gd_avg_grad_bound_dec} implies gradient equilibrium:    
  \begin{equation}
  \label{eq:gd_avg_grad_bound_pos_curv_lrt}
  \bigg\| \frac{1}{T} \sum_{t=1}^T g_t(\theta_t) \bigg\|_2 \leq \frac{2
    \|\theta_1\|_2/c}{T^{1-\alpha}} + \frac{2(L + h_T/c)}{T^{1-\alpha}} \to 0,
  \quad \text{as $T \to \infty$}.   
  \end{equation}

\item If each $\ell_t$ is $L_t$-Lipschitz on the set $\{ \theta \in \R^d :
  \|\theta\|_2 \leq h_t \}$, and also $(h_t,\phi_t)$-restorative with quadratic
  curvature: $\phi_t(\theta) \geq \eta_t^2 \|g_t(\theta)\|_2^2 / 2$, for
  nondecreasing $h_t,L_t$, then   
  \begin{equation}
  \label{eq:gd_iterate_bound_quad_curv_lrt}
  \|\theta_{T+1}\|_2 \leq \max\{ \|\theta_1\|_2, h_T \} + N_T L_T.   
  \end{equation}
  Thus if we set each $\eta_t = c t^{-\alpha}$, and $h_t = o(t^{1-\alpha})$,
  $L_t = o(t^{1-\alpha})$, then \eqref{eq:gd_avg_grad_bound_dec} implies
  gradient equilibrium:     
  \begin{equation}
  \label{eq:gd_avg_grad_bound_quad_curv_lrt}
  \bigg\| \frac{1}{T} \sum_{t=1}^T g_t(\theta_t) \bigg\|_2 \leq \frac{2
    \|\theta_1\|_2/c}{T^{1-\alpha}} + \frac{2(L_T + h_T/c)}{T^{1-\alpha}} \to 0,
  \quad \text{as $T \to \infty$}.     
  \end{equation}
\end{enumerate}
\end{proposition}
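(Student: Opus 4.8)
The plan is to re-run the arguments behind Propositions \ref{prop:restorative_zero_curv_1d}--\ref{prop:restorative_quad_curv} essentially verbatim, making the single substitution of $\eta_t$ for the constant $\eta$ in each per-step recursion derived from the update $\theta_{t+1} = \theta_t - \eta_t g_t(\theta_t)$, and then tracking $N_t = \max_{s\le t}\eta_s$ wherever a ``$+\,\eta L$'' term previously appeared. The backbone of parts (a), (c), (d) is a one-step dichotomy: if $\|\theta_t\|_2 \le h_t$, the (global, or in part (d) local) Lipschitz bound gives $\|\theta_{t+1}\|_2 \le h_t + \eta_t L \le h_t + N_t L$; if $\|\theta_t\|_2 > h_t$, the pertinent curvature hypothesis controls the step. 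I would feed this dichotomy into an induction whose hypothesis reads $\|\theta_t\|_2 \le \max\{\|\theta_1\|_2, h_{t-1}\} + N_{t-1} L$ (with $L_{t-1}$ in part (d)); carrying the index down by one on the right, together with the monotonicity of $h_t$ (and of $L_t$), is exactly what lets the estimate absorb an iterate that has wandered outside the horizon without degrading, and it yields $\|\theta_{T+1}\|_2 \le \max\{\|\theta_1\|_2, h_T\} + N_T L$ as in \eqref{eq:gd_iterate_bound_zero_curv_1d_lrt}, \eqref{eq:gd_iterate_bound_pos_curv_lrt}, \eqref{eq:gd_iterate_bound_quad_curv_lrt}.

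For the outside-horizon case one expands $\|\theta_{t+1}\|_2^2 = \|\theta_t\|_2^2 + \eta_t^2 \|g_t(\theta_t)\|_2^2 - 2\eta_t\, g_t(\theta_t)^\T\theta_t$ and uses the curvature lower bound on the cross term: zero curvature (with $d=1$) gives $|\theta_{t+1}| \le \max\{|\theta_t|, \eta_t L\}$ as in the proof of Proposition \ref{prop:restorative_zero_curv_1d}; curvature $\phi_t(\theta) \ge \eta_t L^2/2$ (resp.\ $\phi_t(\theta) \ge \tfrac{\eta_t}{2}\|g_t(\theta)\|_2^2$) makes the right-hand side collapse to $\|\theta_t\|_2^2$, so $\|\theta_{t+1}\|_2 \le \|\theta_t\|_2$, mirroring Propositions \ref{prop:restorative_pos_curv} and \ref{prop:restorative_quad_curv}. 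The point worth flagging is that the curvature threshold now scales with $\eta_t$, which is precisely why the proposition states the curvature hypotheses with $\eta_t$ in place of the fixed $\eta$ of \eqref{eq:pos_curvature}, \eqref{eq:quad_curvature}. Part (b) is the one case where bounded iterates are hopeless; there I would instead observe that in \emph{both} regimes $\|\theta_{t+1}\|_2^2 \le \|\theta_t\|_2^2 + \eta_t^2 L^2 + 2\eta_t h_t L$ — a genuine estimate when $\|\theta_t\|_2 \le h_t$, and a loose but valid one when $\|\theta_t\|_2 > h_t$ because the cross term is then nonpositive — and telescope over $t=1,\dots,T$ to obtain \eqref{eq:gd_iterate_bound_zero_curv_lrt}.

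Finally I would specialize to $\eta_t = c\,t^{-\alpha}$ with $\alpha \in (0,1)$: this schedule is nonincreasing, so \eqref{eq:gd_avg_grad_bound_dec} applies with $\eta_T = c\,T^{-\alpha}$, and since $s \mapsto s^{-\alpha}$ is decreasing we get $N_T = c$. The bound on $\|\theta_{T+1}\|_2$ is nondecreasing in $T$, hence it also bounds the running maximum $\max_{t\le T+1}\|\theta_t\|_2$ that \eqref{eq:gd_avg_grad_bound_dec} actually needs; substituting and dividing through gives, for parts (a), (c), (d), a bound of order $\bigl(\|\theta_1\|_2/c + L + h_T/c\bigr)/T^{1-\alpha}$ (with $L_T$ for (d)), which tends to $0$ under $h_t = o(t^{1-\alpha})$ (and $L_t = o(t^{1-\alpha})$). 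For part (b) one additionally uses $\sum_{t=1}^T t^{-2\alpha} = o(T^{2(1-\alpha)})$ and $\sum_{t=1}^T t^{-\alpha} h_t = o(T^{2(1-\alpha)})$ (the latter from $h_t = o(t^{1-\alpha})$), so the square-rooted sum in \eqref{eq:gd_avg_grad_bound_zero_curv_lrt}, once divided by $T^{1-\alpha}$, vanishes. I do not anticipate a real obstacle — the paper itself notes the proof is ``basically identical'' to the constant-step case — and the only genuinely new bookkeeping relative to Propositions \ref{prop:restorative_zero_curv_1d}--\ref{prop:restorative_quad_curv} is the observation just made, that the monotone per-$T$ iterate bound controls the running maximum appearing in \eqref{eq:gd_avg_grad_bound_dec} rather than merely the terminal norm $\|\theta_{T+1}\|_2$.
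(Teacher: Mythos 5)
Your proposal is correct and takes essentially the same approach the paper intends: the paper omits the proof as ``basically identical'' to Propositions \ref{prop:restorative_zero_curv_1d}--\ref{prop:restorative_quad_curv}, and your substitution of $\eta_t$ for $\eta$ in the per-step recursions, tracking $N_t$, plus the observation that the monotone iterate bounds also control the running maximum $\max_{t\le T+1}\|\theta_t\|_2$ appearing in \eqref{eq:gd_avg_grad_bound_dec} (with $N_T=c$ and $\eta_T T = cT^{1-\alpha}$ for $\eta_t = ct^{-\alpha}$), is exactly that argument. One minor remark: in part (d) you use the curvature hypothesis $\phi_t(\theta)\ge\tfrac{\eta_t}{2}\|g_t(\theta)\|_2^2$, which is indeed what the inductive step requires, whereas the statement prints $\eta_t^2\|g_t(\theta)\|_2^2/2$ --- evidently a typo in the transcription of \eqref{eq:quad_curvature}, so your reading is the right one.
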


\begin{remark}
In the gradient equilibrium results
\eqref{eq:gd_avg_grad_bound_zero_curv_1d_lrt}, 
\eqref{eq:gd_avg_grad_bound_zero_curv_lrt},
\eqref{eq:gd_avg_grad_bound_pos_curv_lrt}, 
\eqref{eq:gd_avg_grad_bound_quad_curv_lrt} in the proposition, we have focused
on decaying step sizes of the form $\eta_t = c t^{-\alpha}$, in order to make
the conclusions salient. However, we note that the iterate bounds
\eqref{eq:gd_iterate_bound_zero_curv_1d_lrt},
\eqref{eq:gd_iterate_bound_zero_curv_lrt}, 
\eqref{eq:gd_iterate_bound_pos_curv_lrt}, 
\eqref{eq:gd_iterate_bound_quad_curv_lrt} combined with
\eqref{eq:gd_avg_grad_bound_lrt} will lead to gradient equilibrium guarantees
for a broader class of step size schedules, including adaptive ones. 
\end{remark}

\begin{remark}
Another reason to focus on decaying step sizes is that it allows us to fluidly
draw in standard, parallel results on regret. For example, by Theorem 3.1 in
\cite{hazan2016introduction}, we know that online projected gradient descent for
$L$-Lipschitz losses $\ell_t$, $t = 1,2,3,\dots$, with step sizes $\eta = D / (L
\sqrt{t})$, $t = 1,2,3,\dots$, where $D$ is the diameter of the constraint set,
leads to an average regret bound
\[
\frac{\Regret_T}{T} \leq \frac{3}{2} \frac{DL}{\sqrt{T}} \to 0, \quad \text{as
  $T \to \infty$}.    
\]
Our theory does not require bounded constraint sets, but it does extend to
constraints and projections; see Appendix \ref{app:constraints}. The topline
conclusion is that gradient equilibrium and no regret are not at odds, and 
gradient descent with decaying step sizes can achieve them both simultaneously. 
A secondary conclusion is that our restorative theory may be itself helpful for 
regret analysis in unbounded domains, by virtue of its control over the iterate 
norms in gradient descent. 
\end{remark}

\section{Experiments}
\label{sec:experiments}

In this section, we present a set of worked examples. These work together to
showcase the generality of the gradient equilibrium framework laid out in
previous sections. 
See \url{https://github.com/aangelopoulos/gradient-equilibrium} for code to reproduce these experiments.

Generally, our examples will follow a common structure: we
define a base model (which may be a pre-trained or trained online), and a
parameter sequence $\theta_t$, $t = 1,2,3,\dots$ that we will learn
online. Then, we define a loss function over these parameters and state a
resulting gradient equilibrium guarantee. Finally, we examine a given dataset
and present application-specific plots and interpretations to showcase the
utility of the method.

Henceforth we will denote by $f_t$, $t = 1,2,3,\dots$ a sequence of predictions
of a ground-truth label sequence $y_t$, $t = 1,2,3,\dots$. Also, $z_t$, $t =
1,2,3,\dots$ denote features of interest, which are not necessarily the same as
the ones used to make the predictions. They could be, for example,
societally-meaningful covariates such as age, race, sex, etc. Also, we
note that in many of our experiments, we test the following ad hoc settings of
the learning rate: $\{0, 0.001, 0.01, 0.05\}$. While there is a substantial 
literature on choosing learning rates in optimization, we do not pursue such
methods here.      

\subsection{Datasets}

\begin{figure}[htb]
\centering
\includegraphics[width=\linewidth]{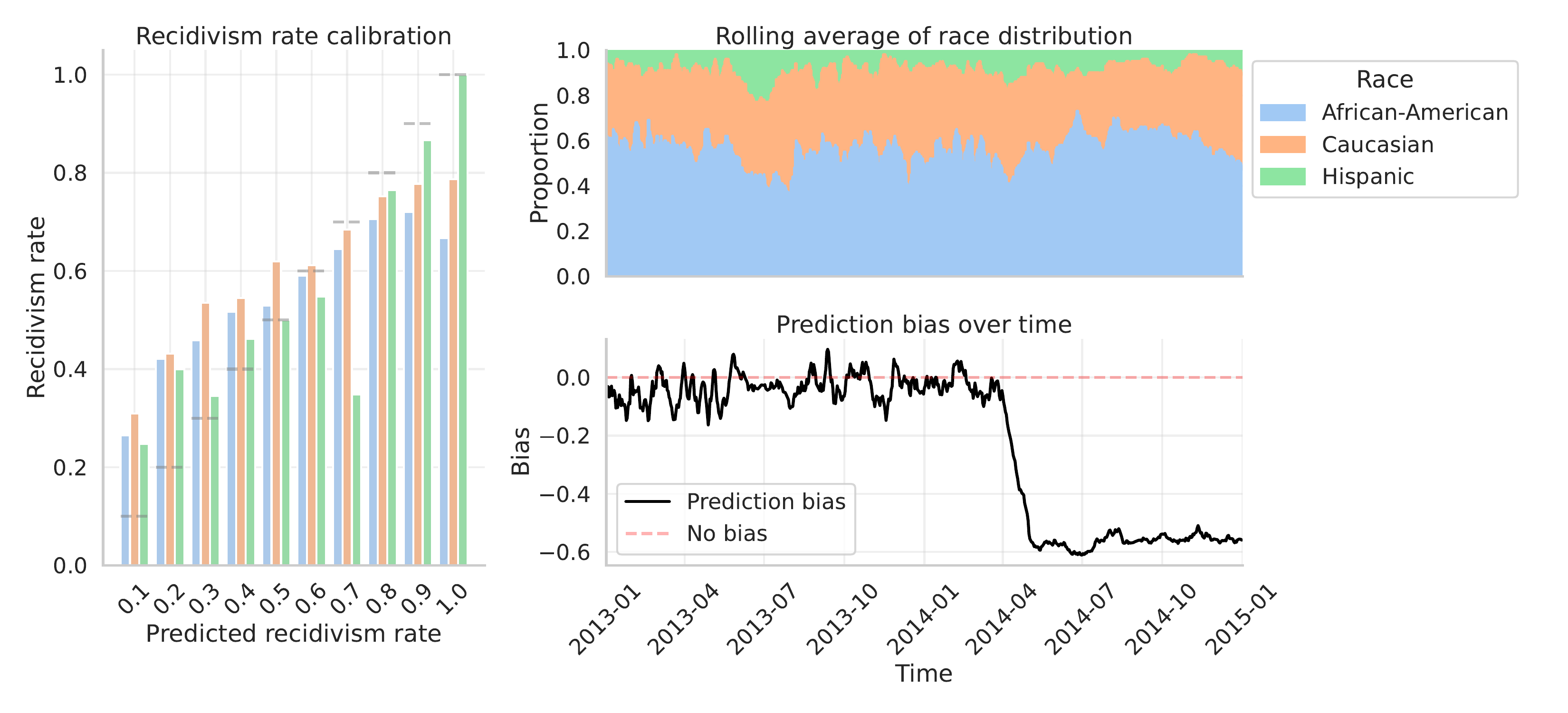}
\caption{\it Statistics of the COMPAS dataset. On the left-hand side is a 
  calibration plot showing the predicted recidivism rate on the horizontal axis
  and the true recidivism rate---conditionally on the predicted one---on the
  vertical axis. The COMPAS algorithm tends to overpredict recidivism for
  African-Americans as the predicted recidivism rate grows, as compared to
  Caucasians and Hispanics. On the top right, we show a rolling average of the
  race distribution over time, with window size 100 (in terms of individuals 
  screened).  On the bottom right, we show a rolling average of the prediction
  bias in time, again with window size 100. A positive bias indicates an
  overprediction of recidivism. Thus, in general, the algorithm is
  underpredicting recidivism. The drastic drop in bias towards the end is an
  artifact of the dataset---the distribution changes drastically so that almost  
  all individuals screened are recidivist. } 
\label{fig:compas-statistics}
\end{figure}

\paragraph{COMPAS dataset.}

The Correctional Offender Management Profiling for Alternative Sanctions
(COMPAS) Recidivism and Racial Bias dataset is a widely-used benchmark for 
evaluating algorithmic fairness and bias in predictive models. It contains
information about individuals arrested in Broward County, Florida, and it
includes features such as criminal history, demographics, and a recidivism risk
score, which is calculated by the COMPAS algorithm for the purpose of making
pretrial bail release decisions. One of the key challenges in this dataset is
the presence of significant racial disparities in the COMPAS risk scores,
which has led to concerns about the fairness and bias of the COMPAS predictions.  

Figure \ref{fig:compas-statistics} plots the racial distribution of the COMPAS
data, along with the predictive bias. The predicted recidivism rate is obtained
by taking the COMPAS recidivism score, which lies on an integer scale from 1 to
10, and dividing it by 10. We preprocess the dataset to remove rare groups, and
only consider the Caucasian, African-American, and Hispanic subgroups. Together,
these groups make up over 90\% of the dataset. 

\paragraph{HelpSteer2 dataset.} 

The HelpSteer2 dataset is an open-source dataset specifically designed for
training and fine-tuning reward models in alignment tasks, particularly for
language models. This dataset contains around 10,000 response pairs for training
and 1,000 for validation, each annotated for helpfulness, correctness,
coherence, complexity, and verbosity on a Likert scale. Figure
\ref{fig:helpsteer-examples} provides two examples from this dataset. We
post-train a Gemma2-2B \cite{team2024gemma} model to predict helpfulness on the
training set, and use the validation set for the purposes of our
experiments. The goal is to debias the Gemma2-2B reward model using this data,
in an online fashion.

\begin{figure}[htb]
\centering
\includegraphics[width=\linewidth]{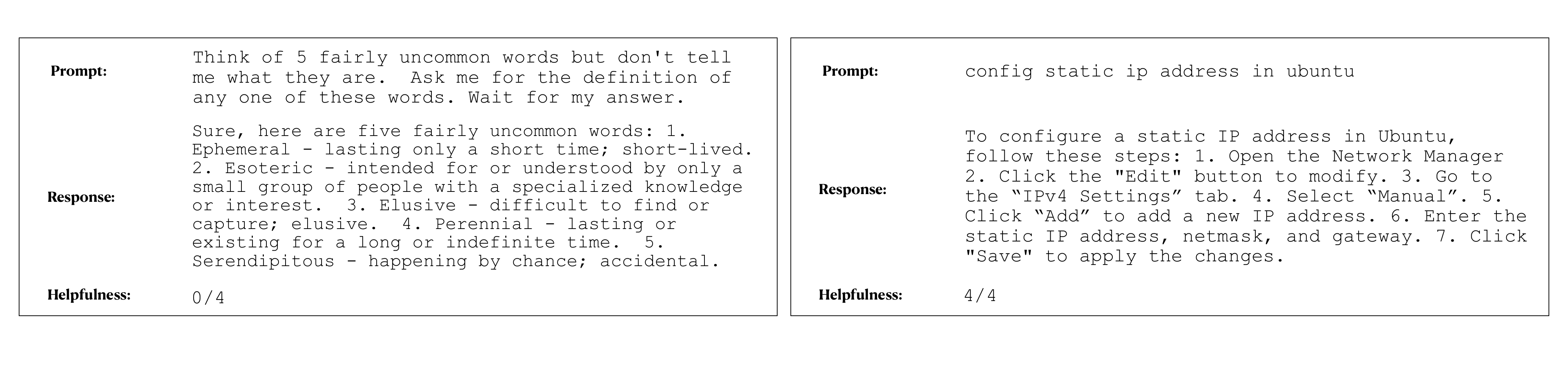}
\vspace{-25pt} 
\caption{\it Two examples from the HelpSteer2 dataset. We display a prompt, a   
  response, and the helpfulness as judged by an expert human.} 
\label{fig:helpsteer-examples}

\bigskip
\includegraphics[width=0.6\linewidth]{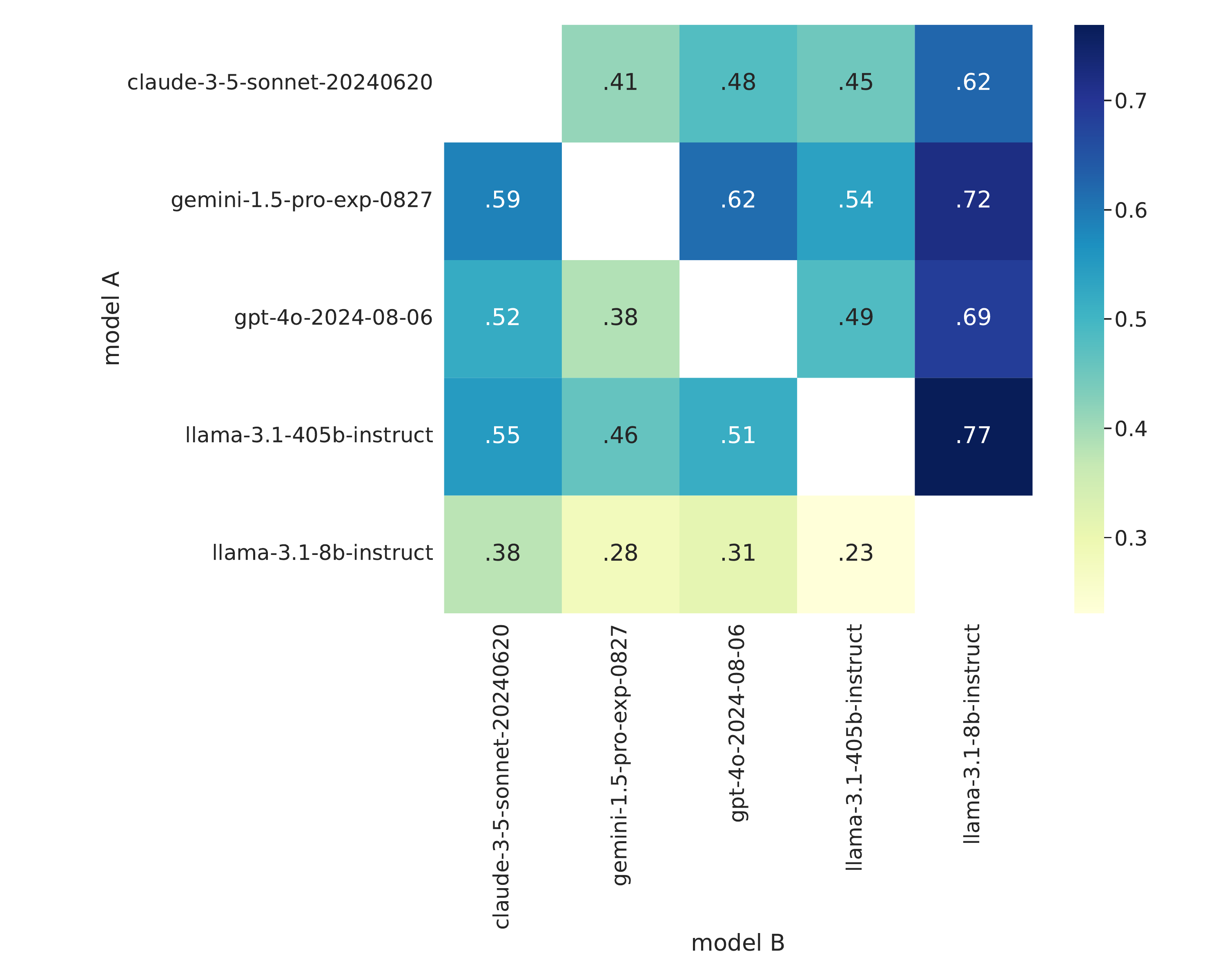}
\caption{\it Chatbot Arena win matrix. We display the win rates between five 
  major models.} 
\label{fig:arena-win-matrix}
\vspace{-5pt} 
\end{figure}

\paragraph{Chatbot Arena dataset.} 

Chatbot Arena \cite{chiang2024chatbot} is an open-source dataset of
conversations between humans and anonymized pairs of large language models
(LLMs). At the end of the conversation, the human votes for which response they
prefer, encoded as a binary preference. This binary preference data is used
to perform a Bradley-Terry regression \cite{bradley1952rank}, whose coefficients
estimate each AI model's relative ``strength'' or capability. This allows for a
leaderboard to be created which reflects how models perform against each other
in direct comparisons, and predicts future preferences.

Bradley-Terry regression can be reframed as a particular form of logistic
regression with binary features. Therefore, our online gradient descent theory
immediately applies, and we can get a guarantee on an online variant of this
regression. The online form of Bradley-Terry regression produces what are also
known as Elo scores \cite{elo1967proposed}. Unlike the previous examples, here
we are not debiasing a model post hoc. Instead, we are simply giving a guarantee
on the Elo score, as it is normally computed. 

\paragraph{MIMIC dataset.} 

The Medical Information Mart for Intensive Care (MIMIC-III) database is a 
publicly available, de-identified clinical dataset encompassing health
information from over 40,000 patients admitted to the intensive care units of
Beth Israel Deaconess Medical Center between 2001 and 2012. It includes data
types such as demographics, vital signs, lab results, medications, caregiver
notes, and mortality data. Our goal is length-of-stay estimation: predicting a
patient's length of stay in the hospital from their covariates at the time of
initial hospital admission. We use a gradient boosting tree as our base model. 

\subsection{Simple debiasing}
\label{sec:simple_debiasing}

We consider the task of debiasing a sequence of predictions, so their average is
approximately equal to the average of responses. The structure differs slightly 
when handling regression and classification problems.  

\paragraph{Regression.}

In the regression setting, given real-valued predictions $f_t$, $t =
1,2,3,\dots$, the goal is to produce a sequence of real-valued parameters
$\theta_t$, $t = 1,2,3,\dots$, with each $\theta_t$ depending only on past data, 
such that (using $a_T \asymp b_T$ to mean $a_T - b_T \to 0$ as $T \to \infty$,
as before):   
\begin{equation}
\label{eq:simple-debias-squared}
\frac{1}{T} \sum_{t=1}^T (f_t+\theta_t) \asymp \frac{1}{T} \sum_{t=1}^Ty_t.
\end{equation}
In other words, the adjusted predictions $f_t + \theta_t$ should be unbiased
for $y_t$, in the long run along the sequence. As discussed in
Section \ref{sec:squared_losses}, online gradient descent with the squared loss,  
$\ell_t(\theta) = \frac{1}{2}(y_t - f_t - \theta)^2$, achieves this goal under 
mild conditions. See Algorithm \ref{alg:simple-debias-squared} below for
pseudocode. 

\begin{algorithm}[H]
\caption{Simple debiasing (regression)}
\begin{algorithmic}[1]
\REQUIRE Predictions $f_1,\dots,f_T$; responses $y_1,\dots,y_T$; learning rate
$\eta > 0$ 
\STATE Initialize $\theta_1 = 0$
\FOR{$t = 1,\dots,T$}
\STATE Compute the gradient: $g_t = f_t + \theta_t - y_t$
\STATE Update the parameter: $\theta_{t+1} = \theta_t - \eta g_t$
\ENDFOR
\ENSURE Adjusted predictions $f_1+\theta_1, \dots, f_T+\theta_T$ 
\end{algorithmic}
\label{alg:simple-debias-squared}
\end{algorithm}

Figure \ref{fig:simple-debiasing-helpsteer} shows the results of this method
on the HelpSteer2 dataset. As expected, gradient equilibrium holds, and thus the 
predictions are unbiased in the long run. Note also that the loss is not
strongly impacted by the choice of learning rate.

\paragraph{Classification.} 

In the classification setting, given $[0,1]$-valued probabilistic predictions
$p_t$, $t = 1,2,3,\dots$ of binary labels, the goal is to produce a sequence of 
real-valued parameters $\theta_t$, $t = 1,2,3,\dots$, where each $\theta_t$
depends only on past data, such that:
\begin{equation}
\label{eq:simple-debias-logistic}
\frac{1}{T} \sum_{t=1}^T \bigg( p_t + \frac{2e^{\theta_t}}{1+e^{\theta_t}} - 1 
\bigg) \asymp \frac{1}{T} \sum_{t=1}^T y_t.
\end{equation}
That is, the adjusted predictions $p_t + 2e^{\theta_t} / (1+e^{\theta_t}) - 1$
should be unbiased for $y_t$, in the long run. As covered in Section
\ref{sec:logistic_losses}, online gradient descent on the generalized logistic
loss, $\ell_t(\theta) = -(y_t - p_t) \theta + 2\log(1 + e^\theta) - \theta$, 
achieves this goal under mild conditions. See Algorithm
\ref{alg:simple-debias-logistic} below for pseudocode.  

\begin{algorithm}[H]
\caption{Simple debiasing (classification)}
\begin{algorithmic}[1]
\REQUIRE Probabilistic predictions $p_t,\dots,p_T$; binary labels
$y_1,\dots,y_T$; learning rate $\eta > 0$
\STATE Initialize $\theta_1 = 0$
\FOR{$t = 1,\dots,T$}
\STATE Compute the gradient: $g_t = p_y - y_t + 2\frac{e^{\theta_t}}
{1+e^{\theta_t}} - 1$    
\STATE Update the parameter: $\theta_{t+1} = \theta_t - \eta g_t$
\ENDFOR
\ENSURE Adjusted predictions $p_1 + \frac{2e^{\theta_1}}{1+e^{\theta_1}} - 1,
\dots, p_T + \frac{2e^{\theta_T}}{1+e^{\theta_T}} - 1$ 
\end{algorithmic}
\label{alg:simple-debias-logistic}
\end{algorithm}

Figure \ref{fig:simple-debiasing-compas} displays the results of this method on
the COMPAS dataset. As expected, gradient equilibrium holds, thus the
predictions are long-run unbiased. Again the loss is not significantly impacted
by the choice of learning rate. We repeat this experiment for several choices of
a decaying learning rate in Figure \ref{fig:simple-debiasing-decaying-compas}. 
The larger decay rates lead to slower convergence to gradient equilibrium
(compared to one another) and smooth out the predictions.

\begin{figure}[p]
\centering
\includegraphics[width=\linewidth]{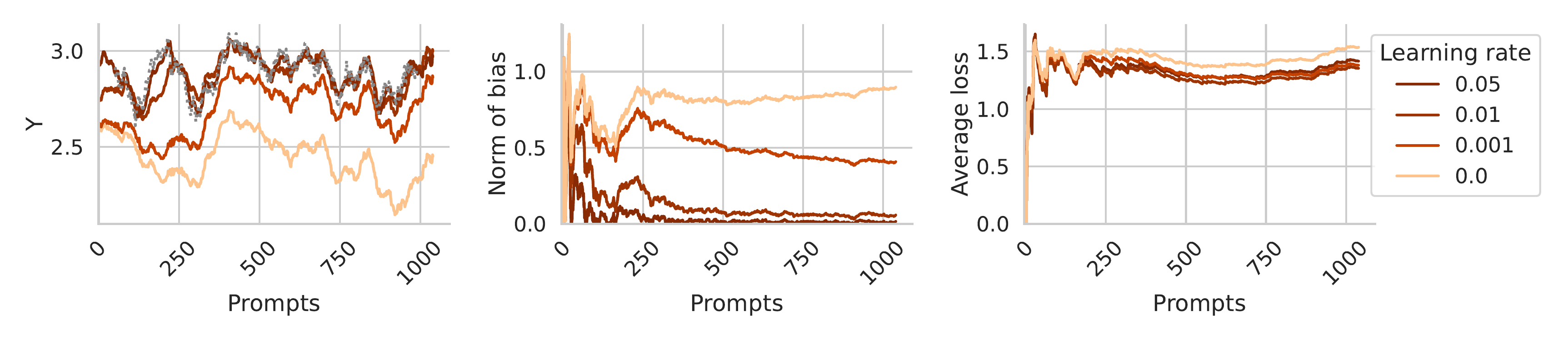}
\caption{\it Simple debiasing results on the HelpSteer2 dataset. On the left, we
  display a rolling average of the response as a gray dotted line along with the
  predictions in different colors, corresponding to the learning rate. In the
  middle, we show the norm of the bias of the adjusted predictor. It diminishes
  more quickly for larger learning rates. On the right, we plot the average loss
  of the adjusted predictor for different learning rates.
  \jupyter{https://github.com/aangelopoulos/gradient-equilibrium/blob/main/helpsteer/debias.ipynb}}  
\label{fig:simple-debiasing-helpsteer}

\bigskip\bigskip
\includegraphics[width=\linewidth]{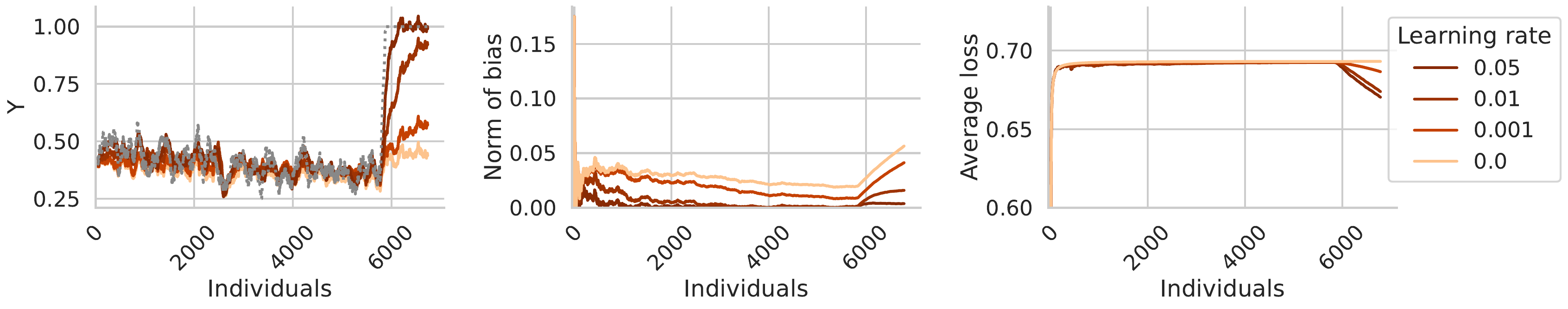}
\caption{\it Simple debiasing results on the COMPAS dataset. Same layout as
  Figure \ref{fig:simple-debiasing-helpsteer}. 
  \jupyter{https://github.com/aangelopoulos/gradient-equilibrium/blob/main/compas/debias.ipynb}}  
\label{fig:simple-debiasing-compas}  

\bigskip\bigskip
\includegraphics[width=\linewidth]{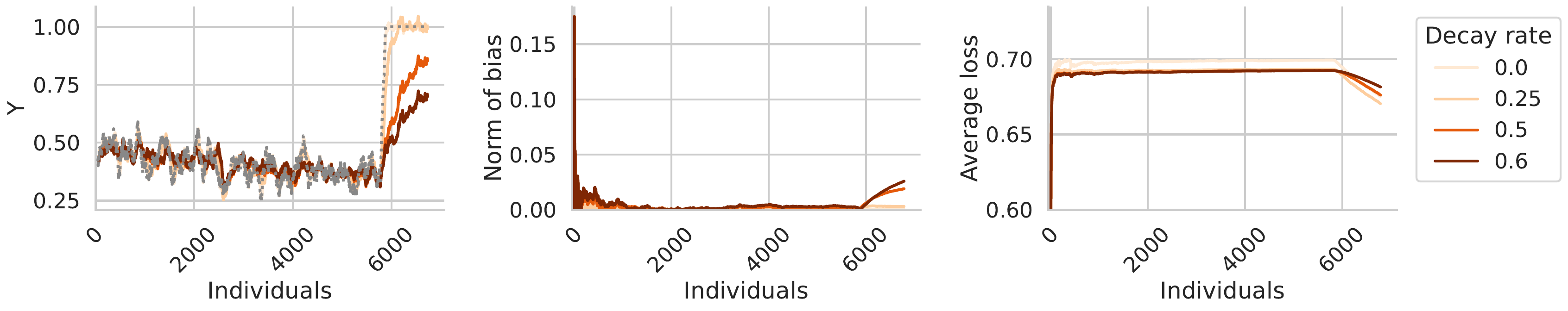}
\caption{\it Simple debiasing results on the COMPAS dataset, now with decaying
  learning rates. 
  \jupyter{https://github.com/aangelopoulos/gradient-equilibrium/blob/main/compas/debias_decaying.ipynb}}
\label{fig:simple-debiasing-decaying-compas} 

\bigskip\bigskip
\centering
\includegraphics[width=\linewidth]{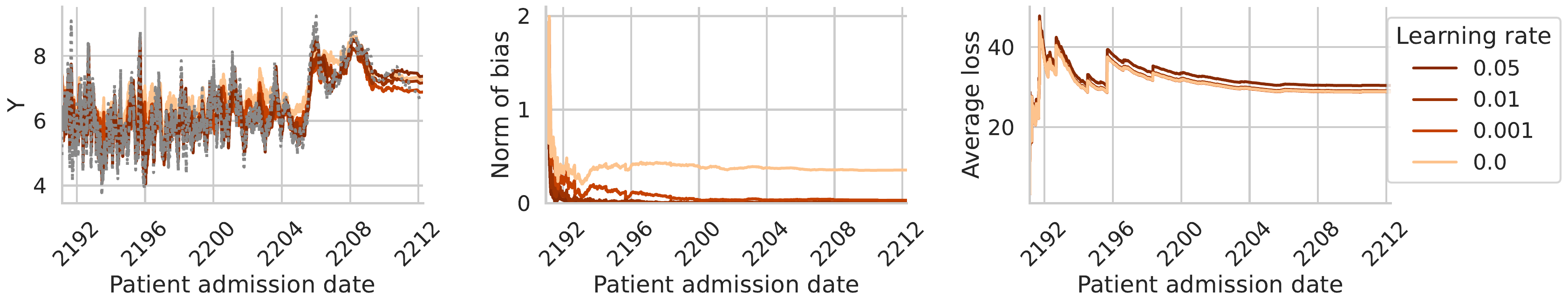}
\caption{\it Multigroup debiasing results on the MIMIC dataset. Same layout
  as Figure \ref{fig:simple-debiasing-helpsteer}. 
  \jupyter{https://github.com/aangelopoulos/gradient-equilibrium/blob/main/mimic_stay/multigroup.ipynb}}
\label{fig:multigroup-debiasing-mimic-series}
\end{figure}

\subsection{Multigroup debiasing}
\label{sec:multigroup_debiasing}

We now turn to the debiasing of a predictor simultaneously over multiple,
possibly overlapping groups. Let $z_t \in \{0,1\}^d$ be a vector of group
indicators, that is, every element of $z_t$ corresponds to some attribute of the 
data point observed at $t$, such as ethnicity, sex, marital status, and so on. 
Our goal is to achieve unbiasedness simultaneously for each of these attributes.
As before, we will have slightly different approches to regression and
classification.   

\paragraph{Regression.} 

In the regression setting, given real-valued predictions $f_t$, $t =
1,2,3,\dots$ and group indicator vectors $z_t \in \{0,1\}^d$, $t = 1,2,3,\dots$,
the goal is produce a sequence of parameter vectors $\theta_t \in \R^d$, $t = 
1,2,3,\dots$, where each $\theta_t$ depends only on past data, such that:
\begin{equation}
\label{eq:multigroup-debias-squared}
\frac{1}{|I_j|} \sum_{t \in I_j} (f_t+z_t^\T \theta_t) \asymp \frac{1}{|I_j|}
\sum_{t \in I_j} y_t, \quad \text{where} \; I_j = \{ t \leq T : z_{tj} = 1 \},
\end{equation}
for each $j = 1,\dots,d$ such that $|I_j|$ grows linearly in $T$. Observe that 
\eqref{eq:multigroup-debias-squared} is the same as 
\eqref{eq:simple-debias-squared}, but with $z_t^\T\theta_t$ in 
place of $\theta_t$, and the sum being taken only over indexes where a
particular group indicator is active. That is, the adjusted predictions $f_t +
z_t^\T \theta_t$ should be long-run unbiased for $y_t$, for each $j$ appearing
often enough. As discussed in Section \ref{sec:glms_ortho_features}, when the
groups are nonoverlapping, online gradient descent on the squared loss,
$\ell_t(\theta) = \frac{1}{2}(y_t - f_t - z_t^\T \theta)^2$, achieves this 
goal under mild conditions; as discussed in Section \ref{sec:squared_ridge},
for arbitrary groups, a ridge-regularized version can be used. See Algorithm
\ref{alg:multigroup-debias-squared} below (for the unregularized version). 

\begin{algorithm}[H]
\caption{Multigroup debiasing (regression)}
\begin{algorithmic}[1]
\REQUIRE Predictions $f_1,\dots,f_T$; group indicators $z_1,\dots,z_T$;
responses $y_1,\dots,y_T$; learning rate $\eta > 0$
\STATE Initialize $\theta_1 = 0$
\FOR{$t = 1,\dots,T$}
\STATE Compute the gradient: $g_t = z_t (f_t + z_t^\T \theta_t - y_t)$
\STATE Update the parameter: $\theta_{t+1} = \theta_t - \eta g_t$
\ENDFOR
\ENSURE Adjusted predictions $f_1+z_1^\T \theta_1, \dots, f_T+z_T^\T \theta_T$
\end{algorithmic}
\label{alg:multigroup-debias-squared}
\end{algorithm}

Figure \ref{fig:multigroup-debiasing-mimic-series} shows marginal
results of this procedure on the MIMIC dataset. Notice that the marginal bias
also goes to zero, and the average loss remains essentially the same regardless
of the learning rate. Figure \ref{fig:multigroup_debiasing_mimic} (in the 
introduction) displays the bias stratified by group. It can be seen that as
compared to the base model, applying our method drives the group-stratified
bias to zero within all groups. (It is also interesting to note that ethnicity 
and marital status form overlapping groups, yet in this case, gradient
equilibrium is achieved even without regularization.)  

\paragraph{Classification.}

In the classification setting, given $[0,1]$-valued predictions $f_t$, $t =
1,2,3,\dots$ of binary labels, and group indicator vectors $z_t \in \{0,1\}^d$, 
$t = 1,2,3,\dots$, the goal is produce a sequence of parameter vectors $\theta_t
\in \R^d$, $t = 1,2,3,\dots$, where each $\theta_t$ depends only on past data,
such that: 
\begin{equation}
\label{eq:multigroup-debias-logistic}
\frac{1}{|I_j|} \sum_{t \in I_j} \bigg( p_t + \frac{2e^{z_t^\T \theta_t}}
{1+e^{z_t^\T \theta_t}} - 1 \bigg) \asymp \frac{1}{|I_j|} \sum_{t \in I_j} y_t,
\quad \text{where} \; I_j = \{ t \leq T : z_{tj} = 1 \},  
\end{equation}
for each $j = 1,\dots,d$ such that $|I_j|$ grows linearly in $T$. Note
\eqref{eq:multigroup-debias-logistic} is the same as
\eqref{eq:simple-debias-logistic}, but with $z_t^\T\theta_t$ in  
place of $\theta_t$, and the sum being taken only over indexes where a
particular group indicator is active. As before, the adjusted predictions
$p_t + 2e^{\theta_t} / (1+e^{\theta_t}) - 1$ should be unbiased for
$y_t$ should be long-run unbiased for $y_t$, for each $j$ appearing 
often enough. As established in Section \ref{sec:glms_ortho_features}, when the   
groups are nonoverlapping, online gradient descent on the generalized logistic
loss, $\ell_t(\theta) = -(y_t - p_t) z_t^\T \theta + 2\log(1 + e^{z_t^T \theta})
- z_t^\T \theta$, achieves this goal under mild conditions; as shown in Section
\ref{sec:logistic_lasso}, for arbitrary groups, a lasso-regularized version can  
be used. See Algorithm \ref{alg:multigroup-debias-logistic} below (for the
unregularized version).  

\begin{algorithm}[H]
\caption{Multigroup debiasing (classification)}
\begin{algorithmic}[1]
\REQUIRE Probabilistic predictions $p_1,\dots,p_T$; group indicators
$z_1,\dots,z_T$; binary labels $y_1,\dots,y_T$; \\ learning rate $\eta > 0$ 
\STATE Initialize $\theta_1 = 0$
\FOR{$t = 1,\dots,T$}
\STATE Compute the gradient: $g_t = z_t \big( p_y - y_t + 2\frac{e^{\theta_t}} 
{1+e^{\theta_t}} - 1 \big)$
\STATE Update the parameter: $\theta_{t+1} = \theta_t - \eta g_t$
\ENDFOR
\ENSURE Adjusted predictions $p_1 + \frac{2e^{z_1^\T\theta_1}}{1+e^{z_1^\T 
    \theta_1}} - 1, \dots, p_T + \frac{2e^{z_T^\T \theta_T}}{1+e^{z_T^\T
    \theta_T}} - 1$
\end{algorithmic}
\label{alg:multigroup-debias-logistic}
\end{algorithm}

Figures \ref{fig:multigroup-debiasing-compas-series} and
\ref{fig:multigroup-debiasing-compas-bias} show results of this procedure on the 
COMPAS dataset. Like previously, the gradient equilibrium conditions manifest,
leading to zero long-run bias per group (improving on the COMPAS base model
considerably). Meanwhile, the loss does not change much as a function of
learning rate.   

\begin{figure}[ht]
\centering
\includegraphics[width=\linewidth]{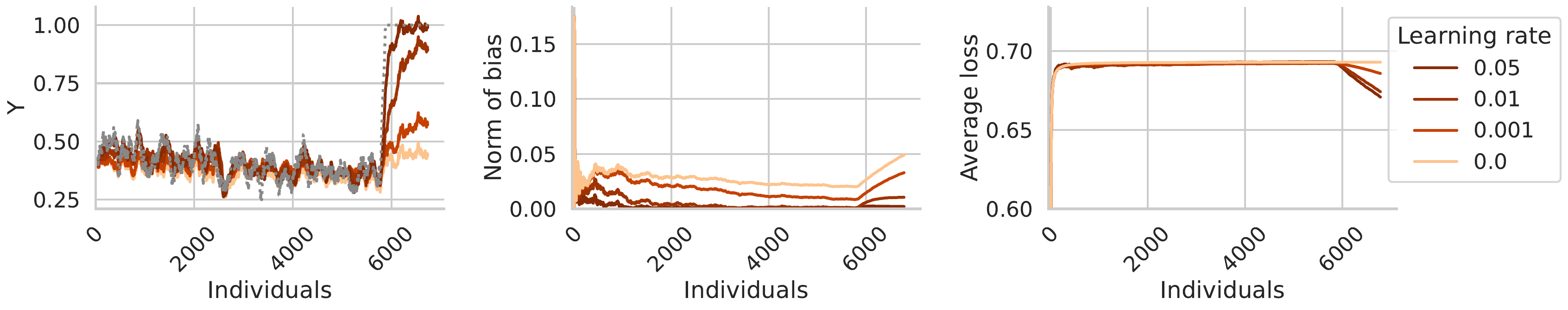}
\caption{\it Multigroup debiasing results on the COMPAS dataset. Same layout as
  Figure \ref{fig:simple-debiasing-helpsteer}. 
  \jupyter{https://github.com/aangelopoulos/gradient-equilibrium/blob/main/compas/multigroup.ipynb}} 
\label{fig:multigroup-debiasing-compas-series}

\bigskip\bigskip
\includegraphics[width=\linewidth]{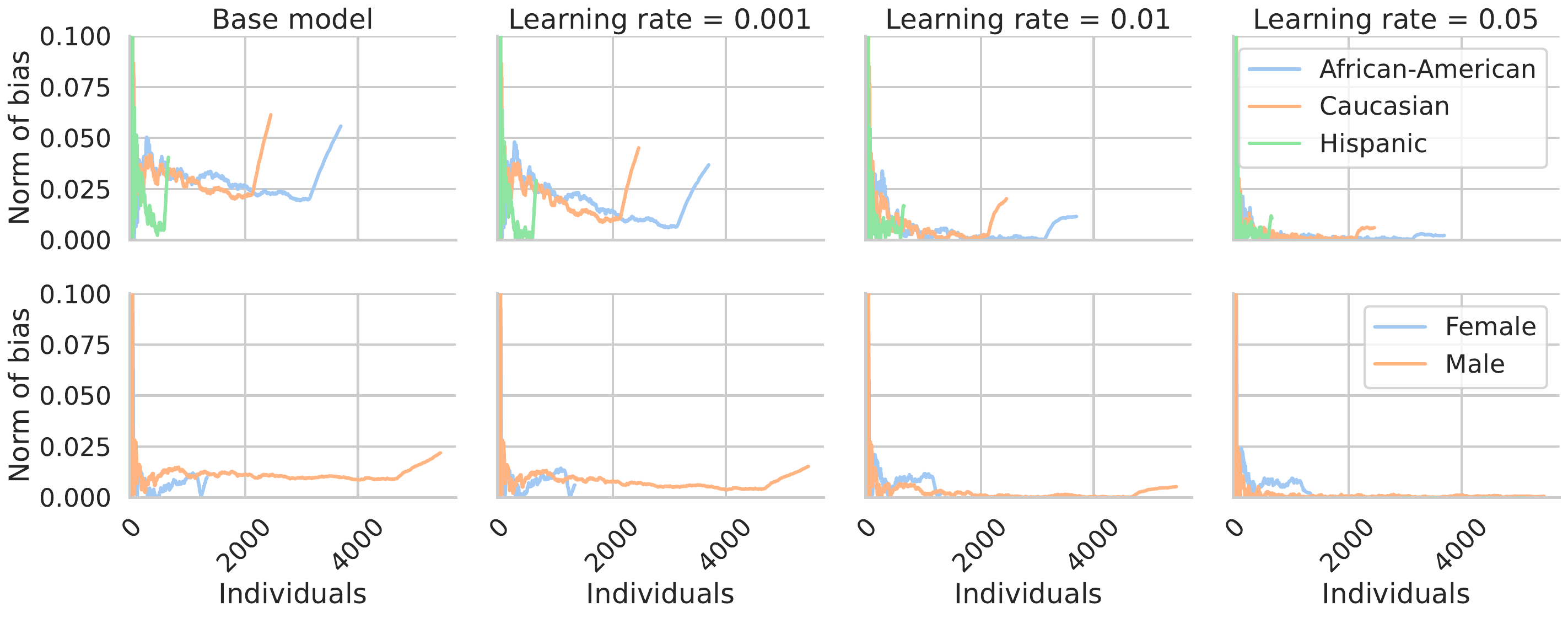}
\caption{\it Multigroup debiasing results on the COMPAS dataset. Same layout as
  Figure \ref{fig:multigroup_debiasing_mimic}.
  \jupyter{https://github.com/aangelopoulos/gradient-equilibrium/blob/main/compas/multigroup.ipynb}}  
\label{fig:multigroup-debiasing-compas-bias}
\end{figure}

\subsection{Quantile tracking and ensembling}
\label{sec:quantile_tracking}

Another application of our framework is to choose the weights of an
ensemble. Unlike the subsections above, this one does not involve debiasing.  
Instead, we focus on sequential quantile estimation, first recalling how to
provide coverage using online gradient updates, then showing how to form an
ensemble for most robust performance. We adopt the setup of online conformal
prediction \cite{gibbs2021adaptive}, and specifically focus on the quantile
tracking algorithm studied in \cite{feldman2022achieving, bhatnagar2023improved,
  angelopoulos2023conformal}. Given real-valued predictions $f_t$, $t =
1,2,3,\dots$ and responses $y_t$, $t = 1,2,3,\dots$, we seek to construct
real-valued parameters $\theta_t$, $t =1,2,3,\dots$, where each $\theta_t$
depends only on past data, such that:  
\begin{equation}
\label{eq:quantile-coverage}
\frac{1}{T} \sum_{t=1}^T 1 \{ y_t \leq f_t + \theta_t \} \asymp \tau.
\end{equation}
In other words, $f_t + \theta_t$ lies above $y_t$ with frequency $\tau$. We
refer to this property as ``coverage,'' insofar as it represents the ability to
contain $y_t$ with a one-sided interval. As discussed in Section
\ref{sec:quantile_losses}, online gradient descent with the quantile loss,   
$\ell_t(\theta) = \rho_\tau(y_t - f_t - \theta)$, achieves this goal under 
mild conditions. To be clear, this result is not original to the current paper,
and is well-known in the online conformal literature. Also, these online
gradient descent updates are sometimes referred to as quantile tracking.  

As with nearly all online learning methods, practical deployments of quantile
tracking require a choice of learning rate $\eta$. In some problems, the best
choice of $\eta$ may actually be nonstationary (time-varying). Initial efforts
have been made to set the learning rate via mixture-of-experts schemes, as in
\cite{zaffran2022adaptive, gibbs2024conformal}. Here, we propose a different
strategy: learning a mixture of quantile tracking iterates via online mirror
descent. To fix notation, consider $K$ learning rates, $\nu_1,\dots,\nu_K$,
and let $\theta^k_t$ denote the quantile tracking iterate under step size
$\nu_k$, at iteration $t$. We produce a mixed parameter estimate   
\[
\theta_t = w_t^\T (\theta^1_t, \dots \theta^K_t),
\]
where $w_t \in \R^K$ is a weight vector living in the standard $K$-dimensional
probability simplex. Further, we learn $w_t$ $t = 1,2,3,\dots$ by online mirror
descent, applied to the loss sequence $\ell_t(w) = \rho_t(y_t - f_t - w^\T
(\theta^1_t, \dots \theta^K_t))$, $t = 1,2,3,\dots$. Algorithm
\ref{alg:quantile-ensembling} gives pseudocode. Mirror descent is formally
studied in Appendix \ref{app:constraints}, where it is shown that it satisfies a
form of gradient equilibrium. How this translates into a rigorous coverage
statement, as in \eqref{eq:quantile-coverage} but at the ensemble level, is left 
to future work.   

\begin{algorithm}[H]
\caption{Quantile ensembling}
\begin{algorithmic}[1]
\REQUIRE Predictions $f_1,\dots,f_T$; responses $y_1,\dots,y_T$; quantile level
$\tau$; base learning rates $\nu_1,\dots,\nu_K$; \\ ensemble learning rate $\nu$  
\STATE Initialize $w_1$ to be uniform on the simplex 
\STATE Initialize $\theta^k_1 = 0$, for $k = 1,\dots,K$ 
\STATE Initialize $\theta_1 = 0$
\FOR{$t = 1,\dots,T$}
\STATE Compute signed error: $\sigma_t = 1\{y_t \leq f_t + \theta^k_t\} - \tau$ 
\STATE Compute quantile updates: $\theta^k_{t+1} = \theta_t - \nu_k \sigma_t$,
$k = 1,\dots,K$    
\STATE Compute ensemble update: 
\begin{align*}
z_{t+1,k} &= w_{tk} \exp(-\eta \theta^k_t \sigma_t), \quad k = 1,\dots,K \\ 
w_{t+1} &= z_{t+1} / \|z_{t+1}\|_1
\end{align*}
\STATE Compute ensemble prediction: $\theta_{t+1} = w_{t+1}^\T (\theta^1_{t+1}, 
\dots, \theta^K_{t+1})$
\ENDFOR
\ENSURE Adjusted quantile predictions $f_1+\theta_1, \dots, f_T+\theta_T$
\end{algorithmic}
\label{alg:quantile-ensembling}
\end{algorithm}

\begin{figure}[ht]
\centering
\includegraphics[width=\linewidth]{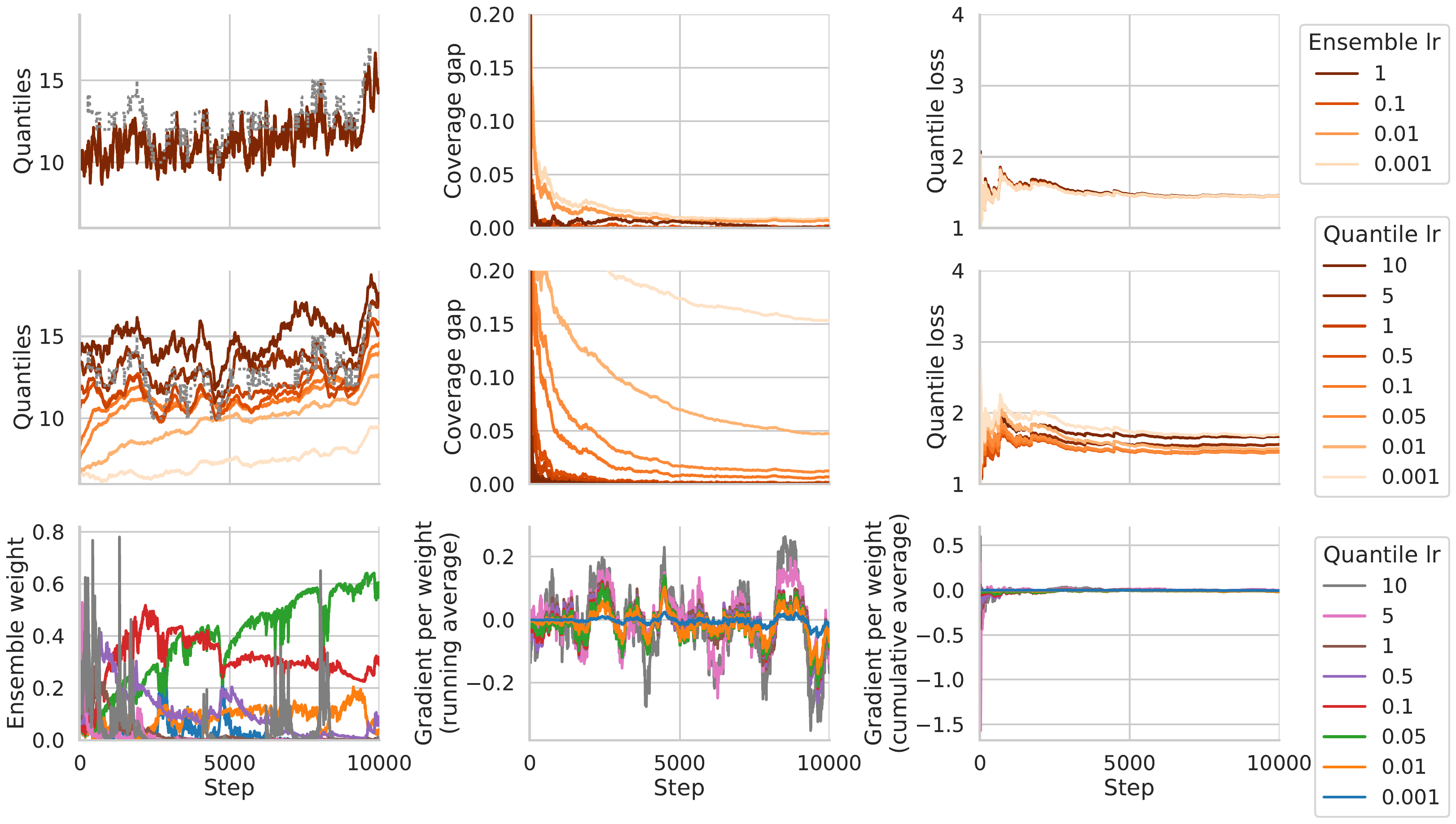}
\caption{\it Quantile ensembling results on the MIMIC dataset. We show results
  of our procedure learning to balance (via weights) different choices of
  learning rates, in quantile tracking. The top row follows the format of
  Figure \ref{fig:simple-debiasing-helpsteer}. The middle row also follows this 
  format, but this time with respect to different quantile tracker learning
  rates as opposed to different ensemble learning rates. The last row shows the 
  behavior and properties of the weights for the quantile learning rates, along 
  the sequence.
  \jupyter{https://github.com/aangelopoulos/gradient-equilibrium/blob/main/mimic_stay/quantile_ensembling.ipynb}}
\label{fig:quantile-ensembling-compas}
\end{figure}

In Figure \ref{fig:quantile-ensembling-compas}, we present results of this
procedure for predicting quantiles of the length-of-stay variable on the MIMIC
dataset. In the middle row, we see that the coverage gap diminishes quickly (as 
expected by the gradient equilibrium guarantees) for quantile tracking as we 
increase the learning rate, but this comes at the expense of increasing the
quantile loss. The top row shows that the ensemble achieves coverage faster than
many of the individual quantile trackers, while matching essentially the best
individual quantile loss. This behavior is almost invariant to the ensemble
learning rate. Finally, the third row reveals how the ensemble upweights or
downweights individual quantile trackers over the sequence.  

\subsection{Pairwise preference prediction}
\label{sec:pairwise_preference}

\begin{figure}[p]
\centering
\includegraphics[width=\linewidth]{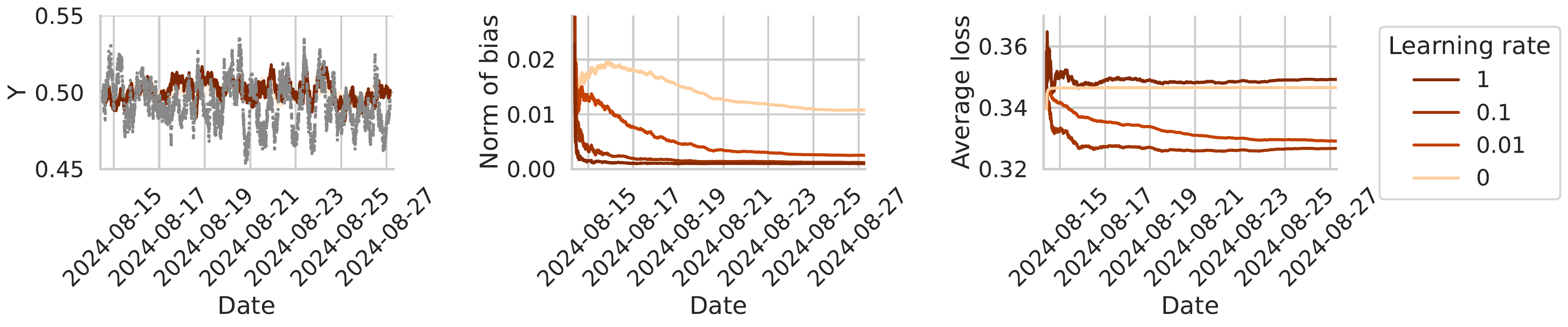}
\includegraphics[width=\linewidth]{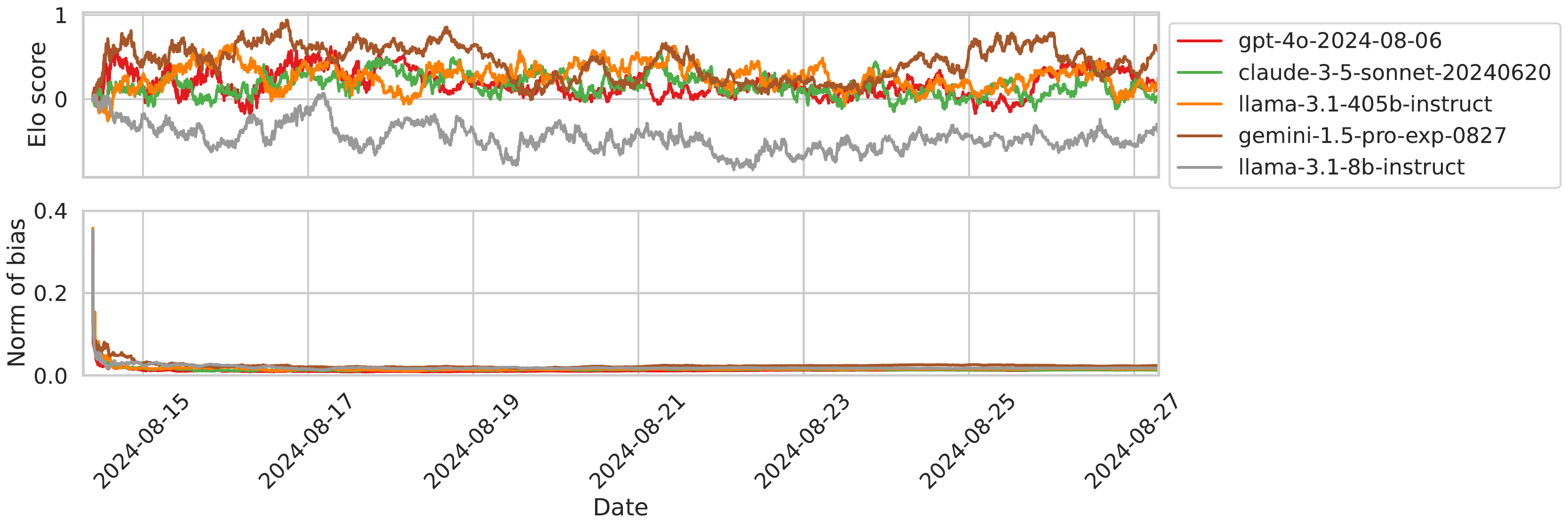}
\caption{\it Results on Chatbot Arena with regularization. Same format as 
  Figure \ref{fig:elo_scores_chatbot}. 
  \jupyter{https://github.com/aangelopoulos/gradient-equilibrium/blob/main/arena/online_elo_reg.ipynb}}
\label{fig:cba-reg}    

\bigskip\bigskip
\includegraphics[width=\linewidth]{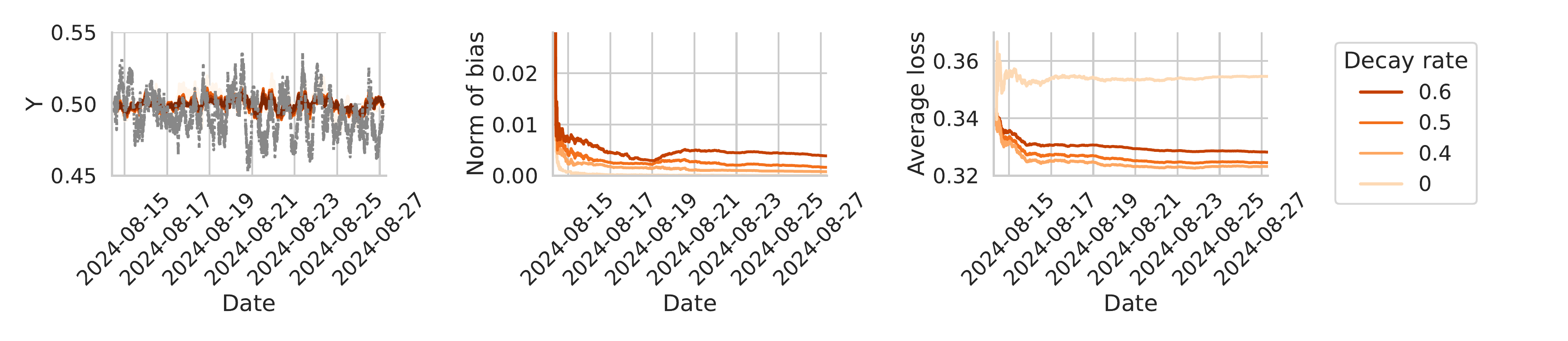}
\includegraphics[width=\linewidth]{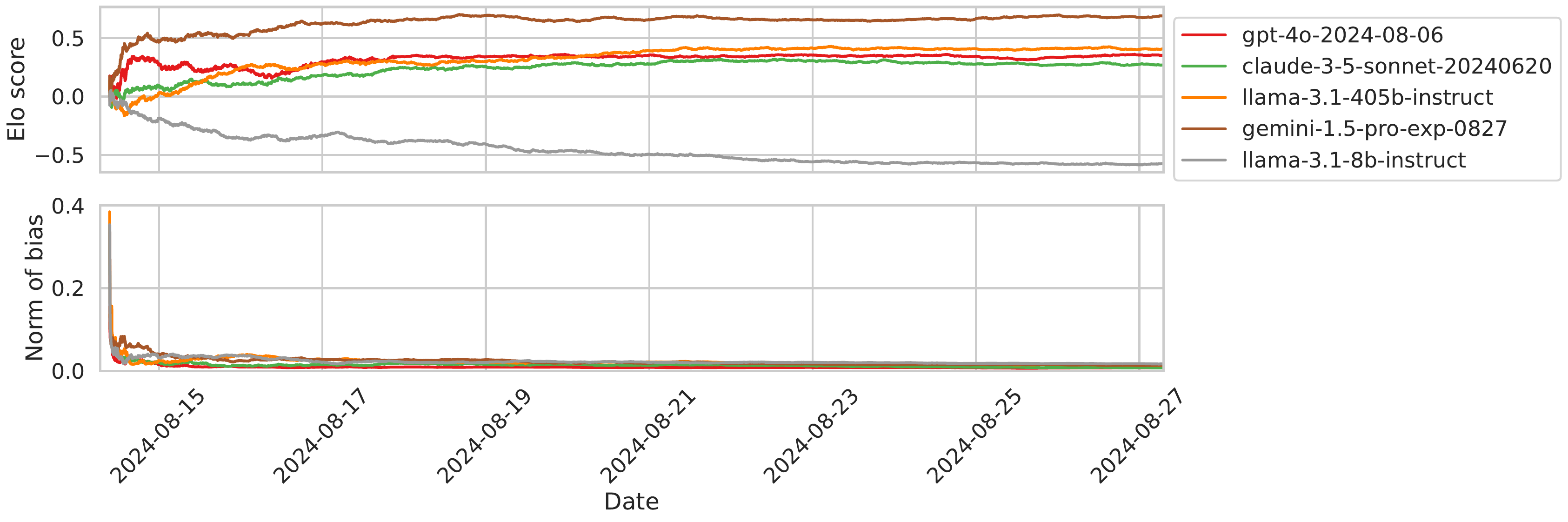}
\caption{\it Results on Chatbot Arena with decaying learning rates. Same format as 
  Figure \ref{fig:elo_scores_chatbot}. For the bottom plots, we set a learning
  rate decay of $\alpha=0.5$. 
  \jupyter{https://github.com/aangelopoulos/gradient-equilibrium/blob/main/arena/online_elo_decaying.ipynb}}  
\label{fig:cba-decay}
\end{figure}

As a last example, we use our gradient equilibrium framework to examine the
well-known Elo rating system \cite{elo1967proposed}, often used in chess and
other two-player competitions, as previewed in Figure
\ref{fig:elo_scores_chatbot} in the introduction. We use the Chatbot Arena
dataset for this example, where we have access to $M$ models and a sequence of
$T$ battles between pairs of models $a_t,b_t \in \{1,\dots,M\}$, and we define  
$y_t = 1$ when a human rater votes for the response of model $b_t$ over $a_t$,
and $y_t = 0$ otherwise. 

The Elo rating system assigns a strength to each model at each $t$, encapsulated
by a parameter $\theta_t \in \R^M$. The Elo updates can be recast as
online gradient descent with respect to the logistic loss function on a certain
set of features. In particular, define a feature vector $z_t \in \R^M$ with
$z_{tj} = -1$ if $j = a_t$, $z_{tj} = 1$ if $j = b_t$, and $z_{tj} = 0$
otherwise. Then the Elo updates are 
\[
\theta_{t+1} = \theta_t - \eta z_t (\sigma(z_t^\T \theta) - y_t), \quad t =
1,2,3,\dots,
\]
where $\sigma(x) = e^x/(1+e^x)$ is the sigmoid function. Gradient equilibrium
for this problem translates into the statement that 
\[
\frac{1}{|I_m|}\sum_{t \in I_m} p_t - \frac{1}{|I_m|}\sum_{t \in I_m} y_t \to 0, 
\quad \text{as $T \to \infty$, where} \; I_m = \{ t \leq T : \text{$a_t = m$ or 
      $b_t = m$} \},
\]
for each model $m$ such that $|I_m|$ grows linearly in $T$. This guarantee says
that the Elo score gives unbiased win-rate predictions for all players appearing
often enough, even in an adversarial online setting. 

Recall, our most basic theory in Proposition \ref{prop:gd_simple} says that the
gradient equilibrium holds whenever the Elo iterates remain bounded or slowly
growing. As shown in Section  \ref{sec:logistic_lasso}, this can be achieved by
adding in lasso regularization to the updates, but it is possible a more refined
analysis would show that the original updates themselves lead to bounded or
slowly growing iterates in general. 

Figure \ref{fig:elo_scores_chatbot} (in the introduction) shows results of the
Elo procedure on Chatbot Arena, confirming that the gradient equilibrium
guarantee can hold in practice, without any modification to the updates.  
However, this plot also underscores the importance of choosing a good learning
rate for these sorts of problems; the largest learning rate, which reaches
equilibrium the fastest, also has the highest loss. Meanwhile, the second
largest learning rate present a much more favorable tradeoff. In Figures 
\ref{fig:cba-reg} and \ref{fig:cba-decay}, we run two further experiments, with
regularization and decaying learning rates, respectively. Regularization (a
lasso penalty with $\lambda = 0.001$) leads to a slight smoothing of the
sequence of predictions, but it comes at a cost: the bias does not go to
zero. Instead, it is driven to a small positive value. With decaying learning
rates we now get much smoother Elo scores, and the bias is still driven to
zero, albeit more slowly than with constant learning rates.   

\section{Discussion}
\label{sec:discussion}

This paper has introduced gradient equilibrium as a property of interest for
online learning algorithms in the standard adversarial sequence model. Online
gradient descent with constant step sizes (Section \ref{sec:grad_descent}) will
lead to gradient equilibrium whenever its iterates remain bounded or grow
slowly, and we have developed theory under regularization (Section
\ref{sec:regularization}), and arbitrary step sizes (Section
\ref{sec:arbitrary_steps}) as well. The analysis can be generalized further to
handle constraints (Appendix \ref{app:constraints}), and to cover online
proximal mirror descent (a class of methods which includes mirror descent,
projected gradient descent, and proximal gradient descent).  Finally, our study
has practical implications for machine learning (Section \ref{sec:experiments}):
gradient equilibrium can help us debias black-box predictions, calibrate
quantiles, learn ensemble weights, prove guarantees on the Elo score, and more.   

Directions for further exploration with gradient equilibrium might include
probabilistic calibration---it is possible this condition can be expressed as
the gradient equilibrium condition of a sufficiently regular loss, or that the
theory herein can otherwise be used to recover calibration guarantees. Another
exciting frontier is control theory. A critical point worth making is that our
core theory does not depend on $g_t$ being a gradient (or subgradient) at 
all, and simply analyzes its action as a map from vectors to vectors (recall the
use of the restorative negative gradient field condition). This means that
gradient descent updates could be applied and analyzed well outside of a setting 
motivated by loss minimization---for example, when we seek to hit a fixed point 
(on average) of some operator, which is a common goal in control and other
fields.    

\subsection*{Acknowledgments}

We thank Sivaraman Balakrishnan, Emmanuel Cand{\`e}s, Isaac Gibbs, and Eric Zhao for
several thoughtful discussions, and Isaac and Eric for helping us identify related work.  
We thank Evan Frick for setting up the HelpSteer2 experiment, and in particular,
for writing the initial version of the script at the following URL:  
\url{https://github.com/aangelopoulos/gradient-equilibrium/blob/main/helpsteer/train_and_generate_rewards.py}. 

\subsection*{Funding}

MIJ was funded by the European Union, ERC-2022-SYG-OCEAN-101071601. Views and 
opinions expressed are however those of the author(s) only and do not 
necessarily reflect those of the European Union or the European Research Council
Executive Agency. Neither the European Union nor the granting authority can be
held responsible for them. RJT was funded by the Office of Naval Research, ONR
grant N00014-20-1-2787.  

\clearpage

{\RaggedRight\small
\bibliographystyle{alpha}
\bibliography{gradeq, general}}

\clearpage
\appendix

\section{Proofs}

\subsection{Proof of Proposition \ref{prop:squared_loss_nr_lrs}}
\label{app:squared_loss_nr_lrs}

Define $\theta_t = y_t + s_T$, $t = 1,\dots,T$. Then
\[
\frac{1}{T} \sum_{t=1}^T (y_t - \theta_t)^2 = s_T^2,
\]
and
\[
\frac{1}{T} \sum_{t=1}^T \theta_t = \bar{y}_T + s_T,
\]
which proves the claim.

\subsection{Proof of Proposition \ref{prop:restorative_zero_curv_1d}}  
\label{app:restorative_zero_curv_1d} 

For convenience, redefine each $h_t = \max\{ |\theta_1|, h_t \}$, and let $h_0 =  
|\theta_1|$. We will use induction to prove
\eqref{eq:gd_iterate_bound_zero_curv_1d}. The base case, for $T = 0$, is 
trivial. Assume that the result is true up through $T$. We break up the
argument for $T+1$ into cases. If $|\theta_T| \leq h_T$, then note that by the
triangle inequality
\begin{align*}
|\theta_{T+1}| 
&\leq |\theta_T| + \eta |g_T(\theta_T)| \\ 
&\leq h_T + \eta L,
\end{align*}
where the second line uses $L$-Lipschitzness of $\ell_t$. If instead
$|\theta_{T+1}| > h_T$, then we further divide this case into two subcases. If
$\theta_{T+1} > h_T$, then by \eqref{eq:restorative_zero_curv_1d}, we have 
$\theta_{T+1} \leq \theta_T$, and moreover, $\theta_{T+1} \geq h_T - \eta
g_T(\theta_T) \geq -\eta L$, using $L$-Lipschitzness (and $h_T \geq 0$). Thus,
putting these bounds together, we have $\theta_{T+1} \in [- \eta L,
\theta_T]$. In the case $\theta_{T+1} < -h_T$, similar arguments lead to
$\theta_{T+1} \in [\theta_T, \eta L]$. Thus, when $|\theta_{T+1}| > h_T$, we
have shown   
\[
\theta_{T+1} \in \big[ \min\{ \theta_T, -\eta L\}, \max \{\theta_T, \eta L\}
\big]. 
\]
By the inductive hypothesis, this gives $|\theta_{T+1}| \leq h_{T-1} + \eta L
\leq h_T + \eta L$, using the nondecreasing property of $h_T$. This completes
the inductive step and proves \eqref{eq:gd_iterate_bound_zero_curv_1d}. The
average gradient bound \eqref{eq:gd_avg_grad_bound_zero_curv_1d} now follows
from \eqref{eq:gd_avg_grad_bound}, and then using the simple inequality
$\max\{a, b\} \leq a + b$ to write the bound more cleanly.       

\subsection{Proof of Proposition \ref{prop:restorative_zero_curv}}  
\label{app:restorative_zero_curv} 

Taking the squared norm on both sides of the update \eqref{eq:grad_descent} and
expanding yields  
\begin{align}
\nonumber
\|\theta_{T+1}\|_2^2 
&= \|\theta_T\|_2^2 + \eta^2 \|g_T(\theta_T)\|_2^2 - 2 \eta 
  g_T(\theta_T)^\T \theta_T \\ 
\nonumber
&\leq \|\theta_T\|_2^2 + \eta^2 L^2 - 2 \eta g_T(\theta_T)^\T \theta_T \\ 
\label{eq:intermediate_bound}
&\leq \|\theta_1\|_2^2 + \eta^2 L^2 T - 2 \eta \sum_{t=1}^T g_t(\theta_t)^\T 
  \theta_t,   
\end{align}
where the second line uses $L$-Lipschitzness of the loss (or equivalently,
$\|g_t\|_2 \leq L$), and the third unravels the iteration over $t =
1,\dots,T$. Next we show each summand on last line above is bounded by    
\begin{equation}
\label{eq:summand_bound}
- 2 \eta g_t(\theta_t)^\T \theta_t \leq 2 \eta L h_t.  
\end{equation}
We split the argument into cases. If $\|\theta_t\|_2 \leq h_t$, then the upper
bound \eqref{eq:summand_bound} follows from the Cauchy-Schwarz inequality and
$L$-Lipschitzness. If $\|\theta_t\|_2 > h_t$, then the restorative condition
\eqref{eq:restorative} provides the stronger upper bound $-\ell_t(\theta_t)^\T
\theta_t \leq 0$. This completes the proof of \eqref{eq:summand_bound}, and
combining this with \eqref{eq:intermediate_bound}, then taking a square root, 
proves \eqref{eq:gd_iterate_bound_zero_curv}. The second 
result \eqref{eq:gd_avg_grad_bound_zero_curv} follows from \smash{$\sum_{t=1}^T
  h_T \leq T h_T$}, as $h_T$ is nondecreasing, applying
\eqref{eq:gd_avg_grad_bound}, and then using \smash{$\sqrt{a + b} \leq
  \sqrt{a} + \sqrt{b}$} to write the bound more cleanly.    

\subsection{Proof of Proposition \ref{prop:restorative_pos_curv}}  
\label{app:restorative_pos_curv} 

We use induction, as in the proof of Proposition
\ref{prop:restorative_zero_curv_1d}. As in that proof, redefine each $h_t =
\max\{ \|\theta_1\|_2, h_t \}$ and let $h_0 = \|\theta_1\|_2$. The base case,  
for $T = 0$, is trivial. Assume that the result is true up through $T$. We break
up the argument for $T+1$ into cases. If $\|\theta_T\|_2 \leq h_T$, then note
that by the triangle inequality     
\begin{align*}
\|\theta_{T+1}\|_2 
&\leq \|\theta_T\|_2 + \eta \|g_T(\theta_T)\|_2 \\ 
&\leq h_T + \eta L,
\end{align*}
where the second line uses $L$-Lipschitzness of $\ell_T$. If instead
$\|\theta_T\|_2 > h_T$, then   
\begin{align*}
\|\theta_{T+1}\|_2^2 
&= \|\theta_T\|_2^2 + \eta^2 \|g_T(\theta_T)\|_2^2 - 2 \eta g_T(\theta_T)^\T
  \theta_T \\ 
&\leq \|\theta_T\|_2^2 + \eta^2 L^2 - 2 \eta g_T(\theta_T)^\T \theta_T \\  
&\leq (h_{T-1} + \eta L)^2 + \eta^2 L^2 - 2 \eta \phi_T(\theta) \\
&\leq (h_T + \eta L)^2,
\end{align*}
where the second line uses $L$-Lipschitzness of $\ell_T$, the third uses the
inductive hypothesis and the restorative condition \eqref{eq:restorative} on
$\ell_T$, and the fourth uses the nondecreasing property of $h_T$ and condition
\eqref{eq:pos_curvature} on $\phi_T(\theta)$ (which implies that $\eta^2 L - 2
\eta \phi_T(\theta) \leq 0$). Taking a square root proves
\eqref{eq:gd_iterate_bound_pos_curv}. The second result 
\eqref{eq:gd_avg_grad_bound_pos_curv} is implied by
\eqref{eq:gd_avg_grad_bound}, and using $\max\{a, b\} \leq a + b$ to write the
bound more cleanly.   

\subsection{Proof of Proposition \ref{prop:restorative_quad_curv}}   
\label{app:restorative_quad_curv} 

This proof is similar to that of Proposition \ref{prop:restorative_quad_curv}, 
but it differs slightly in the inductive argument for $T+1$. We again divide 
into two cases. If $\|\theta_T\|_2 \leq h_T$, then the triangle inequality and
\emph{local} $L_T$-Lipschitzness of $\ell_T$, as in
\eqref{eq:local_lipschitz_bound}, implies that $\|\theta_{T+1}\|_2 \leq h_T +
\eta L_T$. If instead $\|\theta_T\|_2 > h_T$, then expanding the squared norm of
the gradient update, and using the inductive hypothesis and nondecreasingness of
$h_T,L_T$, gives
\[
\|\theta_{T+1}\|_2^2 \leq (h_T + \eta L_T)^2 + \eta^2 \|g_T
(\theta_T)\|_2^2 - 2 \eta g_T(\theta_T)^\T \theta_T. 
\]
The restorative condition \eqref{eq:restorative} with quadratic curvature
\eqref{eq:quad_curvature} implies
\[
\eta^2 \|g_T(\theta_T)\|_2^2 - 2 \eta g_T(\theta_T)^\T \theta_T \leq \eta^2
\|g_T(\theta_T)\|_2^2 - 2 \eta \phi_T(\theta) \leq 0,
\]
and thus $\|\theta_{T+1}\|_2 \leq h_T + \eta L_T$, completing the proof of
\eqref{eq:gd_iterate_bound_quad_curv}.

\subsection{Proof of Corollary \ref{cor:quantile_loss_zero_curv}}    
\label{app:quantile_loss_zero_curv}

Observe 
\begin{align*}
\frac{1}{T} \sum_{t=1}^T g_t(\theta_t) 
&= \frac{1}{T} \bigg( -\tau \sum_{t=1}^T1\{ y_t > \theta_t \} + (1-\tau)
  \sum_{t=1}^T 1\{ y_t \leq \theta_t \} \bigg) \\
&= \frac{1}{T} \sum_{t=1}^T 1\{ y_t \leq \theta_t \} -\tau,
\end{align*}
where the second line uses the assumption that $g_t(\theta_t) = -\tau$ whenever
$\theta_t = y_t$. The result follows directly from
\eqref{eq:gd_avg_grad_bound_zero_curv_1d}.       

\subsection{Proof of Corollary \ref{cor:squared_loss_quad_curv}}    
\label{app:squared_loss_quad_curv}

Fix any $|\theta| > h_t$. Let $\mu = \eta/2$, and $b_t = |y_t|$. As highlighted 
in the display before the proposition, we seek to prove that  
\[
\theta (\theta - y_t) \geq \mu (y_t - \theta)^2.
\]
It suffices to show that 
\[
|\theta| (|\theta| - b_t) \geq \mu (b_t + |\theta|)^2,
\]
or 
\[
\mu \leq \frac{|\theta| (|\theta| - b_t)}{(b_t + |\theta|)^2} =
\underbrace{\frac{|\theta|^2}{(b_t + |\theta|)^2}}_{f_+(\theta)} - 
\underbrace{\frac{b_t |\theta|}{(b_t + |\theta|)^2}}_{f_-(\theta)}.
\]
Now we will minimize \smash{$f_+(\theta)$} and maximize \smash{$f_-(\theta)$}
over $|\theta| \geq h_t$. Clearly, the minimum of \smash{$f_+(\theta)$} occurs
at the boundary of this range, $|\theta| = h_t$, and the minimum value is $h_t^2
/ (b_t + h_t)^2$. For \smash{$f_-(\theta)$}, rewrite this as  
\[
f_-(\theta) = \frac{b_t}{(b_t / \sqrt{|\theta|} + \sqrt{|\theta|})^2}. 
\]
Observe that its unconstrained maximum is obtained by globally minimizing the
denominator, which occurs at $|\theta| = b_t$. As \smash{$f_-(\theta)$} is
monotone increasing in $|\theta|$ to the right of $b_t$ (and also $h_t > b_t$),
its maximum over the range $|\theta| \geq h_t$ again occurs at $|\theta| = h_t$,
and the maximum value is therefore $b_t h_t / (b_t + h_t)^2$. Putting the last
two results together (on the extrema of \smash{$f_+$} and \smash{$f_-$}), we see
that it suffices to have         
\[
\mu \leq \frac{h_t (h_t - b_t)}{(b_t + h_t)^2}.
\]
Using $b_t \leq \delta h_t$, the right-hand side above is lower bounded by  
$h_t^2 (1-\delta) / (\delta h_t + h_t)^2 = (1-\delta)/(1+\delta)^2$, which
completes the proof of the first part of the proposition. For the second part,
we simply note that the gradient of the squared loss is bounded by $|\theta_t -
y_t| \leq h_t + b_t$ for $|\theta| \leq h_t$ and $|y_t| \leq b_t$. The rest is a
direct application of \eqref{eq:gd_avg_grad_bound_quad_curv}.   

\subsection{Proof of Corollary \ref{cor:logistic_loss_zero_curv}}    
\label{app:logistic_loss_zero_curv}

We must verify that
\begin{align*}
(b-a) \frac{e^\theta}{1 + e^\theta} + a 
&\leq y_t, \quad \text{for all $\theta < -h$}, \\    
(b-a) \frac{e^\theta}{1 + e^\theta} + a 
&\geq y_t, \quad \text{for all $\theta > h$}.  
\end{align*}
Note that the function $\theta \mapsto (b-a) e^\theta / (1 + e^\theta) + a$ is
an increasing bijection from $\R$ to $[a,b]$, whose inverse is $u \mapsto  
\log((u-a)/(b-u))$, so the first line above holds for $-h = \log((y_t-a) /
(b-y_t))$, while the second line holds for $h = \log((y_t-a) / (b-y_t))$. Taking
the max of these values two proves the first claim in the corollary. The 
remaining claims simply follow from noting $y_t \in [a+\epsilon_t,
b-\epsilon_t]$, where $\epsilon_t \in (0, (b-a)/2)$, implies   
\[
\max\bigg\{ \log \frac{y_t-a}{b-y_t}, \, \log \frac{b-y_t}{y_t-a} \bigg\} \leq  
\log \frac{b-a}{2\epsilon_t},
\]
and the rest is a direct application of
\eqref{eq:gd_avg_grad_bound_zero_curv_1d}.      

\subsection{Proof of Corollary \ref{cor:glm_loss_ortho_features}}
\label{app:glm_loss_ortho_features}

Recalling \eqref{eq:grad_descent_decoupled}, we can rewrite this as parallel
streams of GD processes, one for each $j = 1,\dots,d$:  
\[
\vartheta_{i_j(t+1), j} = \vartheta_{i_j(t), j} + \eta (y_{i_j(t)} -
\psi'(\vartheta_{i_j(t), j})), \quad t = 1,2,3,\dots,    
\]
where $i_j(t)$ is the index at which $j\th$ feature is visited for the $t\th$
time, in the sense that $|I_j(i_j(t))| = t$. We can then apply Corollaries
\ref{cor:squared_loss_quad_curv} or \ref{cor:logistic_loss_zero_curv} to the
above GD iterations (which are the standard GD iterations studied in these
corollaries, but we have simply reparametrized the index from $t = 1,2,3,\dots$
to $i_j(t)$, $t = 1,2,3,\dots$) to establish the desired results.   

\subsection{Proof of Proposition \ref{prop:restricted_strongly_monotone}} 
\label{app:restricted_strongly_monotone}

We will prove the result in parts (a) and (b) separately from that in part (c).  

\paragraph{Parts (a) and (b).} 

Fix any $\|\theta\|_2 > h_t$. Rewrite the restricted $\alpha_t$-strong
monotonicity condition \eqref{eq:restricted_strongly_monotone} as  
\begin{equation}
\label{eq:restricted_strongly_monotone2}
g_t(\theta)^\T \theta \geq g_t(0)^\T \theta + \alpha_t \|\theta\|_2^2.  
\end{equation}
It suffices to prove that 
\[
g_t(0)^\T \theta + \alpha_t \|\theta\|_2^2 \geq \kappa,
\]
where we will set $\kappa = 0$ for part (a) and $\kappa = \eta L^2 / 2$ for  
part (b). By the Cauchy-Schwarz inequality and the assumed bound on the gradient
at the origin, $g_t(0)^\T \theta \geq - \|g_t(0)\|_2 \|\theta\|_2 \geq - b_t
\|\theta\|_2$, so it suffices to prove 
\[
\alpha_t \|\theta\|_2^2 - b_t \|\theta\|_2 - \kappa \geq 0.
\]
We can treat this as a univariate quadratic inequality $q(x) \geq 0$ in $x =
\|\theta\|_2$. The quadratic $q$ has roots  
\[
x_- = \frac{b_t - \sqrt{b_t^2 + 4\kappa}}{2\alpha_t}, \quad \text{and} \quad  
x_+ = \frac{b_t + \sqrt{b_t^2 + 4\kappa}}{2\alpha_t} \leq \frac{b_t +
  \sqrt\kappa}{\alpha_t}, 
\]
where the upper bound uses the simple inequality \smash{$\sqrt{a+b} \leq
  \sqrt{a} + \sqrt{b}$}. As $q$ is convex, it will be nonnegative to the right
of its larger root $x_+$. Thus we can set $h_t$ to be larger or equal to the
right-hand side in the above display, which proves parts (a) and (b), after
plugging in for $\kappa$ appropriately.  

\paragraph{Part (c).}

Fix any $\|\theta\|_2 > h_t$. Expand the right-hand side of the restricted
$\beta_t$-co-coercivity condition \eqref{eq:restricted_co_coercive} and
rearrange to yield
\[
\|g_t(\theta)\|_2^2 \leq \beta_t (g_t(\theta) - g_t(0))^\T \theta -
\|g_t(0)\|_2^2 + 2 g_t(\theta)^\T g_t(0). 
\]
It suffices to prove that
\[
g_t(\theta)^\T \theta \geq \frac{\eta}{2} \Big( \beta_t (g_t(\theta) -
g_t(0))^\T \theta - \|g_t(0)\|_2^2 + 2 g_t(\theta)^\T g_t(0) \Big), 
\]
or equivalently 
\[
\bigg(1 - \frac{\eta\beta_t}{2} \bigg) g_t(\theta)^\T \theta +
\frac{\eta\beta_t}{2} g_t(0)^\T \theta + \frac{\eta}{2} \|g_t(0)\|_2^2 - \eta
g_t(\theta)^\T g_t(0) \geq 0.
\]
Recall \eqref{eq:restricted_strongly_monotone2} from the restricted
$\alpha_t$-strong monotonicity condition. Assuming $\eta \leq 2/\beta_t$ so that
$1 - \eta\beta_t/2 \geq 0$, we can multiply both sides of
\eqref{eq:restricted_strongly_monotone2} by $1 - \eta\beta_t/2$ to show that
the above display is implied by 
\[
g_t(0)^\T \theta + \alpha_t \bigg(1 - \frac{\eta\beta_t}{2} \bigg)
\|\theta\|_2^2 + \frac{\eta}{2} \|g_t(0)\|_2^2 - \eta g_t(\theta)^\T g_t(0) \geq
0, 
\]
or equivalently,
\[
g_t(0)^\T \theta + \alpha_t \bigg(1 - \frac{\eta\beta_t}{2} \bigg)
\|\theta\|_2^2 - \frac{\eta}{2} \|g_t(0)\|_2^2 + \eta (g_t(0) - g_t(\theta))^\T
g_t(0) \geq 0.
\] 
By the Cauchy-Schwarz inequality and the assumed bound on the gradient at the
origin, it suffices to prove 
\[
\alpha_t \bigg(1 - \frac{\eta\beta_t}{2} \bigg) \|\theta\|_2^2 - b_t
\|\theta\|_2 - \frac{\eta b_t^2}{2} + \eta (g_t(0) - g_t(\theta))^\T g_t(0) \geq
0. 
\]
For the last term, we use Cauchy-Schwarz, the restricted $\beta_t$-Lipschitz
condition on the gradient \eqref{eq:restricted_grad_lipschitz}, and the gradient
bound at the origin, to lower bound $(g_t(0) - g_t(\theta))^\T g_t(0) \geq -
\|g_t(0) - g_t(\theta)\|_2 \|g_t(0)\|_2 \geq - \beta_t b_t \|\theta\|_2$. Thus
it suffices to prove   
\[
\alpha_t \bigg(1 - \frac{\eta\beta_t}{2} \bigg) \|\theta\|_2^2 - b_t (1 +
\eta\beta_t) \|\theta\|_2 - \frac{\eta b_t^2}{2} \geq 0.
\]
As before, we can treat this as a univariate convex quadratic inequality $q(x)
\geq 0$ in $x = \|\theta\|_2$, and the larger of its two roots is
\[
x_+ = \frac{b_t (1 + \eta\beta_t) + \sqrt{(b_t (1 + \eta\beta_t))^2 + 2 \eta
    b_t^2}} {2\alpha_t (1 - \eta\beta_t/2)} \leq 
\frac{b_t (1 + \eta\beta_t) + \sqrt{\eta/2} \cdot b_t}{\alpha_t (1 -
  \eta\beta_t/2)}, 
\]
Hence we can set $h_t$ to be larger or equal to the right-hand side above, which
completes the proof of part (c). 

\subsection{Proof of Lemma \ref{lem:logistic_grad_inner_prod}}
\label{app:logistic_grad_inner_prod}

Consider the standard logistic case with $a = 0$ and $b = 1$. Let $f(u; y) =
(e^u / (1 + e^u) - y) u$. For any given $u$, this is linear in $y$, and hence
the minimum over $y$ must occur at an endpoint of the range $[0,1]$. Therefore  
\[
f(u; y) \geq \min \{ f(u; 0), f(u; 1) \}.
\]
Each of $f(\cdot; 0), f(\cdot; 1)$ are smooth quasi-convex functions. Indeed 
$f(u; 0) = f(-u; 1)$, and thus it suffices to minimize, say, $f(\cdot; 1)$. This
can be done by finding the root of its derivative, i.e., solving the
transcendental equation $u = 1 + e^{-u}$. Applying the bisection method gives 
\[
\inf_u \, f(u; 1) = -0.2784645... \geq -0.279.
\]
The result for general $a < b$ follows similarly.

\subsection{Proof of Proposition \ref{prop:logistic_lasso_pos_curv}}
\label{app:logistic_lasso_pos_curv}

To verify that \smash{$\tilde\ell_t$} is Lipschitz, we compute its subgradient  
\[
\tilde{g}_t(\theta) = \bigg( (b-a) \frac{e^{x_t^\T \theta}}{1 + e^{x_t^\T
    \theta}} + a - y_t \bigg) x_t + \lambda s, 
\]
where $s \in \partial \|\theta\|_1$. Thus we see that
\[
\| \tilde{g}_t(\theta) \|_2 \leq \bigg| (b-a) \frac{e^{x_t^\T \theta}}{1 +
  e^{x_t^\T \theta}} + a - y_t \bigg| \|x_t\|_2 + \lambda \|s\|_2 \leq c (b-a) + 
\lambda \sqrt{d},  
\]
which proves the claim. Next, to check that \smash{$\tilde\ell_t$} satisfies
the restorative property, we compute
\[
\tilde{g}_t(\theta)^\T \theta = \bigg( (b-a) \frac{e^{x_t^\T \theta}}{1 +
  e^{x_t^\T \theta}} +a - y_t \bigg) x_t^\T \theta + \lambda \|\theta\|_1.  
\]
By Lemma \ref{lem:logistic_grad_inner_prod}, the first term above can be
globally lower bounded:
\[
\tilde{g}_t(\theta)^\T \theta \geq -0.279 (b-a) + \lambda \|\theta\|_1.
\]
To verify the restorative property with positive curvature, it suffices to show that
for $\|\theta\|_2 \geq h_t$, 
\[
-0.279 (b-a) + \lambda \|\theta\|_1 \geq \frac{\eta (c (b-a) + \lambda
  \sqrt{d})^2}{2}.    
\]
Since $\|\theta\|_1 \geq \|\theta\|_2$, it suffices to have 
\[
-0.279 (b-a) + \lambda h_t \geq \frac{\eta (c (b-a) + \lambda \sqrt{d})^2}{2},   
\]
and rearranging gives the desired lower bound on $h_t$. Finally, setting $h_t$
to this lower bound, we can apply \eqref{eq:gd_avg_grad_bound_pos_curv} to yield   
\[
\bigg\| \frac{1}{T} \sum_{t=1}^T \bigg[ \bigg( (b-a) \frac{e^{x_t^\T
    \theta_t}}{1 + e^{x_t^\T \theta_t}} + a - y_t \bigg) x_t + \lambda s_t
\bigg] \bigg\|_2 \leq \frac{2 \|\theta_1\|_2}{\eta T} + \frac{L}{T} +
\frac{0.279 (b-a) + \eta L^2/2}{\lambda \eta T},  
\]
where $s_t \in \partial \|\theta_t\|_1$. To verify
\eqref{eq:logistic_lasso_avg_grad_bound} we use
\eqref{eq:original_avg_grad_bound} and \smash{$\|s_t\|_2 \leq \sqrt{d}$} to
yield  
\[
\bigg\| \frac{1}{T} \sum_{t=1}^T \bigg( (b-a) \frac{e^{x_t^\T \theta_t}}{1 +
  e^{x_t^\T \theta_t}} + a - y_t \bigg) x_t \bigg\|_2 \leq \frac{2
  \|\theta_1\|_2}{\eta T} + \frac{L}{T} + \frac{0.279 (b-a) + \eta L^2/2}
  {\lambda \eta T} + \lambda \sqrt{d}.  
\]
This completes the proof. 

\subsection{Proof of Lemma \ref{lem:squared_grad_inner_prod}}
\label{app:squared_grad_inner_prod}

Let $f(u; y) = a_1 (u - y) u - a_2 (u - y)^2$. Observe that 
\[
f(u; y) = (a_1 - a_2) u^2 - (a_1 - 2 a_2) y u - a_2 y^2,
\]
which is a convex quadratic in $u$, provided that $a_1 > a_2$. Its minimizer is
\[
\bar{u} = \frac{(a_1 - 2 a_2) y}{2 (a_1 - a_2)},
\]
and its minimum is 
\[
f(\bar{u}; y) = - \frac{(a_1 - 2 a_2)^2 y^2}{4 (a_1 - a_2)} - a_2 y^2, 
\]
and using $|y| \leq b$ to lower bound this further gives the claimed result.  

\subsection{Proof of Proposition \ref{prop:squared_ridge_quad_curv}} 
\label{app:squared_ridge_quad_curv}

To verify that \smash{$\tilde\ell_t$} is locally Lipschitz, we compute its
gradient   
\[
\tilde{g}_t(\theta) = (x_t^T \theta - y_t) x_t + \lambda \theta. 
\]
Thus we see that for $\|\theta\|_2 \leq h_t$, 
\[
\|\tilde{g}_t(\theta)\|_2 \leq |x_t^T \theta - y_t| \|x_t\|_2 + \lambda
\|\theta\|_2 \leq (c_t h_t + b_t) c_t + \lambda h_t,
\]
which proves the claim. Next, to check that \smash{$\tilde\ell_t$} satisfies
the restorative property, we compute
\[
\tilde{g}_t(\theta)^\T \theta = (x_t^T \theta - y_t) x_t^\T \theta + \lambda 
\|\theta\|_2^2,
\]
and
\[
\frac{\eta}{2} \|\tilde{g}_t(\theta)\|_2^2 = \frac{\eta}{2} (x_t^T \theta -
y_t)^2 \|x_t\|_2^2 + \lambda \eta (x_t^T \theta - y_t) x_t^\T \theta +  
\frac{\lambda^2 \eta}{2} \|\theta\|_2^2.
\]
For the restorative property with quadratic curvature it suffices to show that
for $\|\theta\|_2 \geq h_t$,  
\[
\underbrace{(1 - \lambda \eta) (x_t^T \theta - y_t) x_t^\T \theta - \frac{\eta
  c_t^2}{2} (x_t^T \theta - y_t)^2}_{f(x_t^\T \theta; \, y_t)} +\, \lambda
\bigg(1 - \frac{\lambda \eta}{2}\bigg) \|\theta\|_2^2 \geq 0,
\]
where $f(u; y) = (1 - \lambda \eta) (u - y) u - (\eta c_t^2 / 2) (u-y)^2$. By
Lemma \ref{lem:squared_grad_inner_prod}, under the assumption $\eta < 1 /
(\lambda + c_t^2 / 2)$, we can globally lower bound $f$ (first term), and we can 
also simply lower bound $\|\theta\|_2$ by $h_t$ (second term), thus it suffices
to have     
\[
-\frac{(1 - \lambda \eta - \eta c_t^2)^2 b_t^2}{4 (1 - \lambda \eta - \eta c_t^2
  / 2)} - \frac{\eta c_t^2 b_t^2}{2} + \lambda \bigg(1 - \frac{\lambda \eta}{2} 
\bigg) h_t^2 \geq 0.        
\]
Rearranging gives the desired lower bound on $h_t$. Setting $h_t$ to this lower
bound, we can apply \eqref{eq:gd_avg_grad_bound_quad_curv} to yield       
\[
\bigg\| \frac{1}{T} \sum_{t=1}^T \Big[ (x_t^T \theta_t - y_t) x_t + \lambda
\theta_t \Big] \bigg\|_2 \leq \frac{2 \|\theta_1\|_2}{\eta T} + \frac{b_T
  c_T}{T} + \frac{\sqrt{C_{0T}(\lambda)}(\eta (c_T^2 + \lambda) +
  1)}{\sqrt{\lambda} \eta T}. 
\]
where $C_{0T}(\lambda)$ is as in the proposition statement. Now we use
\eqref{eq:original_avg_grad_bound} to yield  
\begin{align*}
\bigg\| \frac{1}{T} \sum_{t=1}^T (x_t^T \theta_t - y_t) x_t \bigg\|_2 
&\leq \frac{2 \|\theta_1\|_2}{\eta T} + \frac{b_T c_T}{T} +
\frac{\sqrt{C_{0T}(\lambda)}(\eta (c_T^2 + \lambda) + 1)}  
{\sqrt{\lambda} \eta T} + \frac{\lambda }{T} \sum_{t=1}^T \|\theta_t\|_2 \\   
&\leq \frac{2 \|\theta_1\|_2}{\eta T} + \frac{b_T c_T}{T} +
\frac{\sqrt{C_{0T}(\lambda)}(\eta (c_T^2 + \lambda) + 1)} 
{\sqrt{\lambda} \eta T} + \lambda (\max\{ \|\theta_1\|_2, h_T \} + \eta L_T), 
\end{align*} 
where the last line uses \eqref{eq:gd_iterate_bound_quad_curv}. Applying the
upper bound 
\[
\lambda (\max\{ \|\theta_1\|_2, h_T \} + \eta L_T) \leq \lambda (\|\theta_1\|_2
+ \eta b_T c_T) + \sqrt{\lambda C_{0T}(\lambda)}(\eta (c_T^2 + \lambda) +
1),  
\]
leads to \eqref{eq:squared_ridge_avg_grad_bound}, and completes the proof.   

\section{Constraints}
\label{app:constraints}

We extend the perspective on gradient equilibrium from the main text by 
considering losses $\ell_t$, $t = 1,2,3,\dots$ subject to a constraint $\theta
\in C \subseteq \R^d$. In this section, we establish theory for online projected
gradient descent and online mirror descent analogous that in Section
\ref{sec:grad_descent} for online gradient descent. Our first step is to revisit
and interpret the gradient  equilibrium condition under constraints.   

\subsection{Gradient equilibrium revisited}

Let \smash{$\tilde\ell_t = \ell_t + I_C$}, $t = 1,2,3,\dots$, where $I_C$ is the
characteristic function of $C \subseteq \R^d$,   
\[
I_C(\theta) = 
\begin{cases} 
0 & \text{if $\theta \in C$}, \\
\infty & \text{if $\theta \notin C$}.
\end{cases}
\]
Assume that each $\ell_t$ is finite and subdifferentiable on $C$. As noted in
the regularization section (and verified in Appendix \ref{app:gen_subgrad}), our
notion of subgradients allows us to take, as a subgradient
\smash{$\tilde{g}_t(\theta)$} of \smash{$\tilde\ell_t$} at $\theta$,              
\begin{equation}
\label{eq:subgrad_decomp_constr}
\tilde{g}_t(\theta) = g_t(\theta) + g_{I_C}(\theta),
\end{equation}
where as usual $g_t(\theta)$ is a subgradient of $\ell_t$ at $\theta$, and
\smash{$g_{I_C}(\theta)$} denotes a subgradient of $I_C$ at
$\theta$. Subgradients of $I_C$ at $\theta$ are elements of the normal cone to
$C$ at $\theta$, a fact which will be leveraged shortly. For a sequence of 
subgradients \smash{$\tilde{g}_t(\theta)$}, $t = 1,2,3\dots$ which are chosen to
satisfy \eqref{eq:subgrad_decomp_constr}, we have 
\[
\frac{1}{T} \sum_{t=1}^T \tilde{g}_t(\theta_t) \asymp \frac{1}{T} \sum_{t=1}^T
g_t(\theta_t) + \frac{1}{T} \sum_{t=1}^T g_{I_C}(\theta_t), 
\]
and therefore (using $a_T \asymp b_T$ to mean $a_T - b_T \to 0$ as $T \to
\infty$, as before) 
\begin{equation}
\label{eq:grad_eq_constr1}
\frac{1}{T} \sum_{t=1}^T \tilde{g}_t(\theta_t) \asymp 0 \iff
\frac{1}{T} \sum_{t=1}^T g_t(\theta_t) \asymp - \frac{1}{T} \sum_{t=1}^T
g_{I_C}(\theta_t).
\end{equation}
That is, the gradient equilibrium condition for the constrained losses
can be recast as a modified equilibrium condition for the original losses.        

We can further develop the right-hand side in \eqref{eq:grad_eq_constr1}, to
help elucidate this condition. As mentioned above, subgradients 
\smash{$g_{I_C}(\theta)$} of the characteristic function $I_C$ at $\theta$ are
elements of the normal cone $N_C(\theta)$ to the set $C$ at $\theta$. The
consists of vectors $v \in \R^d$ such that $v^\T \delta \leq o(\|\delta\|_2)$,
for all directions $\delta \in \R^d$ such that $\theta + \delta \in C$. Appendix
\ref{app:gen_subgrad} gives a more formal characterization. The second condition
in \eqref{eq:grad_eq_constr1} can hence be rewritten as
\begin{equation}
\label{eq:grad_eq_constr2}
\frac{1}{T} \sum_{t=1}^T g_t(\theta_t) \asymp \frac{1}{T} \sum_{t=1}^T v_t, 
\quad \text{where $v_t \in -N_C(\theta_t)$, $t = 1,2,3,\dots$}.
\end{equation}
From this, we can form the following interpretation. If $g_t(\theta_t) = v_t$
for an individual $t$, then by definition of the normal cone, $g_t(\theta_t)^\T
\delta \geq o(\|\delta\|_2)$, for all directions $\delta \in \R^d$ such that
$\theta_t + \delta \in C$. Intuitively, any infinitesimal move away from
$\theta_t$ which remains feasible (remains within $C$) cannot decrease $\ell_t$,
according to its first-order Taylor expansion. In a similar vein, we can view
\eqref{eq:grad_eq_constr2} as saying that the average gradient
\smash{$\frac{1}{T} \sum_{t=1}^T g_t(\theta_t)$} will have nonnegative inner
product with any infinitesimal direction of ``average feasibility'' (by this we
mean an average of directions which maintain feasibility), as $T$ grows. This
can be seen as a sequential analog of the standard first-order condition for 
optimality in constrained problems.

\begin{remark}
When $\theta$ lies in the interior of $C$, the normal cone is trivially
$N_C(\theta) = \{0\}$. Only when $\theta$ lies on the boundary $\partial C$ of
$C$ is the normal cone nontrivial. This allows us to rewrite 
\eqref{eq:grad_eq_constr2} once more as   
\begin{equation}
\label{eq:grad_eq_constr3}
\frac{1}{T} \sum_{t=1}^T g_t(\theta_t) \asymp \frac{1}{T} \sum_{t : \theta_t 
  \in \partial C} v_t, \quad \text{where $v_t \in -N_C(\theta_t)$, $t =
  1,2,3,\dots$}.
\end{equation}
Thus if the number $t$ for which $\theta_t \in \partial C$ grows sublinearly,
$|\{t \leq T : \theta_t \in \partial C\}| = o(T)$, and $v_t$, $t = 1,2,3,\dots$
are uniformly bounded, then \eqref{eq:grad_eq_constr3} reduces to the usual
gradient equilibrium condition \eqref{eq:grad_eq}. Unfortunately, this
observation is of no consequence to us in the theory that follows, as we analyze
projected descent methods, which produce iterates that generically lie on the 
boundary at all iterations. However, it might be useful for analyzing
alternative (interior point) methods.
\end{remark}

\subsection{Examples of constrained equilibrium}

We work through examples of the equilibrium condition \eqref{eq:grad_eq_constr2}
for constrained losses. 

\subsubsection{Linear equalities}

Consider a set of linear equality constraints, $C = \{ \theta \in \R^d : A
\theta = b \}$, for an arbitrary matrix $A \in \R^{k \times d}$ and vector $b
\in \R^k$. Fix $\theta \in C$. Since $C$ is convex, the normal cone
$N_C(\theta)$ has a more explicit form, recalling \eqref{eq:normal_cone3}: it 
consists of all vectors $v \in \R^d$ such that $v^\T (z - \theta) \leq 0$, for
all $z \in C$. Note that for our given $C$,
\[
z \in C \iff \delta = z - \theta \in \nul(A),
\]
where $\nul(A)$ is the null space of $A$. Also, if $v^\T \delta \leq 0$ for 
all $\delta \in \nul(A)$, then we must have $v^\T \delta = 0$ for all $\delta
\in \nul(A)$. In other words, we have shown that the normal cone contains all 
vectors orthogonal to $\nul(A)$. Hence, if $U \in \R^{d \times p}$ is a matrix
whose columns form a basis for $\nul(A)$, then $U^\T v = 0$ for all $v \in
N_C(\theta)$. We can then multiply by $U^\T$ on both sides of the gradient 
equilibrium condition \eqref{eq:grad_eq_constr2}, which gives
\begin{equation}
\label{eq:grad_eq_linear1}
\frac{1}{T} \sum_{t=1}^T U^\T g_t(\theta_t) \asymp 0.
\end{equation}
This is in fact equivalent to the usual (unconstrained) gradient equilibrium
condition had we reparametrized our losses to satisfy the linear equality  
constraints. To see this, define $f_t(\beta) = \ell_t(U \beta + \theta_0)$,
where $A \theta_0 = b$. The usual gradient equilibrium condition for $f_t$, $t =
1,2,3,\dots$, invoking the chain rule, is   
\begin{equation}
\label{eq:grad_eq_linear2}
\frac{1}{T} \sum_{t=1}^T U^\T g_t(U \beta_t + \theta_0) \asymp 0.
\end{equation}
(To check that the chain rule still holds for our generalized definition of
subgradients, see Theorem 10.6 and Exercise 10.7 of
\cite{rockafellar2009variational}.) Observe that \eqref{eq:grad_eq_linear2} 
reduces to \eqref{eq:grad_eq_linear1} once we identify $\theta_t = U \beta_t +
\theta_0$.   

\subsubsection{Simplex}

Consider the standard $d$-dimensional probability simplex, $C = \{ \theta \in 
\R^d : \sum_{i=1} \theta_i = 1, \; \theta_i \geq 0, \; i = 1,\dots,d\}$. Let
$\theta \in \partial C$, and let $F$ be the smallest face of $C$ containing
$\theta$. This is the unique face such that $\theta$ lies in its relative
interior. Write $\aff(F) = L + a$ for the affine span of $F$, where $L \subseteq
\R^d$ is a linear subspace and $a \in \R^d$ is an offset. Again by convexity of
$C$, the normal cone has an explicit form, recalling \eqref{eq:normal_cone3}: it
consists of all vectors $v \in \R^d$ such that $v^\T (z - \theta) \leq 0$, for
all $z \in C$. Note that, for vectors $z$ lying in the face $F \subseteq C$ in
particular,  
\[
z \in F \implies \delta = z - \theta \in L \cap B_\epsilon(\theta), 
\]
for a ball $B_\epsilon(\theta)$ around $\theta$, with some small radius
$\epsilon > 0$, where we have used the fact that $\theta$ is in the relative
interior of $F$. Further, in order to have $v^\T \delta \leq 0$ for all $\delta
\in L \cap B_\epsilon(\theta)$, we must have $v^\T \delta = 0$ for all $\delta
\in L$. In other words, we have shown that the normal cone contains contains all
vectors orthogonal to $L$. 

Note that the linear part $L$ of a face $F$ of the standard probability simplex
generally takes the form   
\[
L = \bigg\{ v \in \R^d : \sum_{i \in S} v_i = 0, \; v_i = 0, \; i \notin S
\bigg\},
\]
for some subset $S \subseteq \{1,\dots,d\}$ of coordinates. Hence, vectors
$u \in \R^d$ orthogonal to $L$, written $u \in L^\perp$, have the form $u_i =
\alpha$ for all $i \in S$, where $\alpha \in \R$.  

Now assume that $F$ contains $v_t$ in its relative interior, for all $t =
1,2,3,\dots$. (This assumption could be relaxed to $F$ containing all but an
$o(T)$ number of elements, with a boundedness assumption on those left out.)
Taking an inner product with $u \in L^\perp$ on both sides of the gradient
equilibrium condition \eqref{eq:grad_eq_constr2} (where we choose $\alpha \not=
0$, and $u_i \not= 0$, $i \notin S$) gives   
\begin{align}
\label{eq:grad_eq_simplex1}
\frac{1}{T} \sum_{t=1}^T \sum_{i \in S} [g_t(\theta_t)]_i &\asymp 0, \\
\label{eq:grad_eq_simplex2}
\frac{1}{T} \sum_{t=1}^T [g_t(\theta_t)]_i &\asymp 0, \quad i \in S.
\end{align}
That is, the constrained equilibrium condition for the simplex in this case
implies the usual notion of gradient equilibrium \eqref{eq:grad_eq_simplex2} for
a subset of the coordinates, and an aggregate equilibrium condition
\eqref{eq:grad_eq_simplex1} for the rest.  

\subsection{Proximal mirror descent}
\label{app:prox_mirror_descent}

We turn to analyzing \emph{online proximal mirror descent}. As we will
see, this algorithm generalizes both online mirror descent and online projected 
gradient descent. Let $r : \R^d \to (-\infty, \infty]$ be a 
convex function, finite and subdifferentiable on a set $D \in \R^d$. Assume that
each loss $\ell_t$ is also finite and subdifferentiable on $D$ (but not
necessarily convex). Finally, let $\Phi: \R^d \to \R$ be a strictly convex and
differentiable function. Given an initial point $\theta_1 \in D$, online
proximal mirror descent produces iterates according to:         
\begin{equation}
\label{eq:prox_mirror_descent}
\begin{rcases*}
\nabla \Phi(z_{t+1}) = \nabla \Phi(\theta_t) - \eta_t g_t(\theta_t) \\  
\displaystyle \theta_{t+1} = \argmin_\theta \, \Big\{ D_\Phi(\theta, z_{t+1}) +
\eta_t r(\theta) \Big\} \; 
\end{rcases*} \quad t = 1,2,3,\dots,
\end{equation}
where $D_\Phi(\theta, z) = \Phi(\theta) - \Phi(z) - \nabla \Phi(z)^\T (\theta - 
z)$ is the Bregman divergence with respect to $\Phi$. The quantity $\nabla \Phi$
in this setting is often referred to as the \emph{mirror map}. Henceforth, in
referring to \eqref{eq:prox_mirror_descent}, we will drop the qualifier
``online'' and simply call this proximal mirror descent. Similarly, we will drop
the qualifier ``online'' when discussing all algorithms in this section.

Proximal mirror descent \eqref{eq:prox_mirror_descent} has many notable special
cases: 
\begin{itemize}
\item when $r = I_C$, the characteristic function of a set $C$, it reduces to
  mirror descent;
\item when $\nabla \Phi = \Id$ (the identity map), it reduces to proximal
  gradient descent; 
\item when $r = I_C$ and $\nabla \Phi = \Id$, it reduces to projected gradient
  descent; 
\item when $r = 0$ and $\nabla \Phi = \Id$, it reduces to gradient descent. 
\end{itemize}

In what follows, our main interest will be in the cases where $r = I_C$, i.e.,
mirror descent and projected gradient descent, as we are concerned with studying
gradient equilibrium under constraints. But since it is fluid to work with the
framework offered by proximal mirror descent, we will begin our analysis in
greater generality. Throughout, we let \smash{$\tilde\ell_t = \ell_t + r$}
denote the modified loss and write \smash{$\tilde{g}_t(\theta)$} for its
subgradient at $\theta$. Recall that we can take this to be of the form 
\smash{$\tilde{g}_t(\theta) = g_t(\theta) + g_r(\theta)$}, where $g_r$ is a
subgradient of $r$ at $\theta$. We focus on gradient equilibrium for the 
regularized loss sequence,
\begin{equation}
\label{eq:grad_eq_reg}
\frac{1}{T} \sum_{t=1}^T \tilde{g}_t(\theta_t) \asymp 0 \iff
\frac{1}{T} \sum_{t=1}^T g_t(\theta_t) \asymp - \frac{1}{T} \sum_{t=1}^T
g_r(\theta_t).
\end{equation}

Next we derive a generalization of the basic result in Proposition
\ref{prop:gd_simple} for proximal mirror descent.  

\begin{proposition}
\label{prop:pmd_simple}
Consider proximal mirror descent \eqref{eq:prox_mirror_descent}, with arbitrary
initialization $\theta_1 \in D$, and constant step sizes $\eta_t = \eta > 0$,
for all $t$. Let $g_r(\theta_{t+1}) = (\nabla \Phi(z_{t+1}) - \nabla
\Phi(\theta_{t+1})) / \eta$, which is indeed a subgradient of $r$ at
$\theta_{t+1}$, for each $t$, and let $g_r(\theta_1)$ be an arbitrary
subgradient of $r$ at $\theta_1$. Then  
\begin{equation}
\label{eq:pmd_avg_grad_identity}
\frac{1}{T} \sum_{t=1}^T \tilde{g}_t(\theta_t) = \frac{\nabla \Phi(\theta_1) +
  \eta g_r(\theta_1) - \nabla \Phi(\theta_{T+1}) - \eta g_r(\theta_{T+1})}{\eta
  T},
\end{equation}
and therefore
\begin{equation}
\label{eq:pmd_avg_grad_bound}
\bigg\| \frac{1}{T} \sum_{t=1}^T \tilde{g}_t(\theta_t) \bigg\|_2 \leq
\frac{\|\nabla \Phi(\theta_1) + \eta g_r(\theta_1)\|_2 + \|\nabla
  \Phi(\theta_{T+1}) + \eta g_r(\theta_{t+1})\|_2}{\eta T}.    
\end{equation}
\end{proposition}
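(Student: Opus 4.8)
The plan is to mimic the telescoping argument of Proposition~\ref{prop:gd_simple}, but applied to the \emph{modified} iteration for proximal mirror descent. First I would observe that the second update in \eqref{eq:prox_mirror_descent} is an unconstrained convex minimization (since $D_\Phi(\cdot, z_{t+1})$ is strictly convex and $r$ is convex), so its first-order optimality condition reads $\nabla_\theta D_\Phi(\theta_{t+1}, z_{t+1}) + \eta g_r(\theta_{t+1}) = 0$ for some subgradient $g_r(\theta_{t+1}) \in \partial r(\theta_{t+1})$. Using $\nabla_\theta D_\Phi(\theta, z) = \nabla \Phi(\theta) - \nabla \Phi(z)$, this rearranges to $\nabla \Phi(z_{t+1}) = \nabla \Phi(\theta_{t+1}) + \eta g_r(\theta_{t+1})$, which confirms that the stated choice $g_r(\theta_{t+1}) = (\nabla \Phi(z_{t+1}) - \nabla \Phi(\theta_{t+1}))/\eta$ is legitimately a subgradient of $r$ at $\theta_{t+1}$. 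This is the one genuinely new ingredient relative to the plain GD proof; I expect it to be the main (though still modest) obstacle, since it requires invoking the generalized-subgradient optimality condition and the identity for $\nabla_\theta D_\Phi$.

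Next I would substitute into the mirror-map update. From the first line of \eqref{eq:prox_mirror_descent} with $\eta_t = \eta$, we have $\nabla \Phi(z_{t+1}) = \nabla \Phi(\theta_t) - \eta g_t(\theta_t)$. Combining with the expression for $\nabla \Phi(z_{t+1})$ derived above gives
\[
\nabla \Phi(\theta_{t+1}) + \eta g_r(\theta_{t+1}) = \nabla \Phi(\theta_t) - \eta g_t(\theta_t),
\]
valid for $t = 1, 2, 3, \dots$. I would then rewrite this as
\[
\eta \big( g_t(\theta_t) + g_r(\theta_{t+1}) \big) = \nabla \Phi(\theta_t) - \nabla \Phi(\theta_{t+1}),
\]
and here it is convenient to define $\tilde g_t(\theta_t) = g_t(\theta_t) + g_r(\theta_t)$ using the shifted-index subgradient; so I would add and subtract, writing $g_t(\theta_t) + g_r(\theta_{t+1}) = \tilde g_t(\theta_t) + g_r(\theta_{t+1}) - g_r(\theta_t)$. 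Actually the cleaner route, matching the statement, is to reindex: sum the telescoping relation over $t = 1, \dots, T$, using $g_r(\theta_1)$ as the arbitrary chosen subgradient at the initial point. The left side becomes $\eta \sum_{t=1}^T g_t(\theta_t) + \eta \sum_{t=1}^T g_r(\theta_{t+1})$; shifting the second sum's index and reconciling endpoints introduces exactly the boundary terms $\eta g_r(\theta_1)$ and $-\eta g_r(\theta_{T+1})$, while the $\nabla \Phi$ terms telescope to $\nabla \Phi(\theta_1) - \nabla \Phi(\theta_{T+1})$.

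Putting this together yields
\[
\eta \sum_{t=1}^T \tilde g_t(\theta_t) = \nabla \Phi(\theta_1) + \eta g_r(\theta_1) - \nabla \Phi(\theta_{T+1}) - \eta g_r(\theta_{T+1}),
\]
after identifying $\sum_{t=1}^T \tilde g_t(\theta_t) = \sum_{t=1}^T g_t(\theta_t) + \sum_{t=1}^T g_r(\theta_t)$ and checking the index bookkeeping carefully. Dividing both sides by $\eta T$ gives \eqref{eq:pmd_avg_grad_identity}. Finally, \eqref{eq:pmd_avg_grad_bound} follows by taking the Euclidean norm of both sides of \eqref{eq:pmd_avg_grad_identity} and applying the triangle inequality, exactly as in the proof of Proposition~\ref{prop:gd_simple}. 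The only place demanding care is the index-shifting step that produces the $g_r(\theta_1)$ versus $g_r(\theta_{T+1})$ asymmetry; everything else is a direct telescoping computation.
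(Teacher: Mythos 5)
Your proposal is correct and follows essentially the same route as the paper's proof: verify via the first-order optimality condition of the proximal step that $g_r(\theta_{t+1}) = (\nabla \Phi(z_{t+1}) - \nabla \Phi(\theta_{t+1}))/\eta$ is a subgradient of $r$, substitute the mirror-map update to get $\nabla \Phi(\theta_t) - \nabla \Phi(\theta_{t+1}) = \eta(g_t(\theta_t) + g_r(\theta_{t+1}))$, telescope, shift the index on the $g_r$ sum to produce the boundary terms, and finish with the triangle inequality. No gaps.
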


\begin{proof}
Consider first the second step in \eqref{eq:prox_mirror_descent}, with $\eta_t =
\eta$. By the subgradient optimality condition,
\[
0 \in \nabla_\theta D(\theta_{t+1}, z_{t+1}) + \eta \partial r(\theta_{t+1}) 
\iff 0 \in \nabla \Phi(\theta_{t+1}) - \nabla \Phi(z_{t+1}) + \eta \partial 
r(\theta_{t+1}),  
\]
where we have used convexity of $\Phi,r$ to decompose the subgradients. This 
verifies the claim that $g_r(\theta_{t+1}) = (\nabla \Phi(z_{t+1}) - \nabla
\Phi(\theta_{t+1})) / \eta$ is a subgradient of $r$ at $\theta_{t+1}$. Notice,
by the first step in \eqref{eq:prox_mirror_descent}, with $\eta_t = \eta$,  
\begin{align*}
\nabla \Phi(\theta_{t+1}) 
&= \nabla \Phi(z_{t+1}) + \nabla \Phi(\theta_{t+1}) - \nabla \Phi(z_{t+1}) \\
&= \nabla \Phi(\theta_t) - \eta g_t(\theta_t) + \nabla \Phi(\theta_{t+1}) -
  \nabla \Phi(z_{t+1}) \\  
&= \nabla \Phi(\theta_t) - \eta g_t(\theta_t) - \eta g_r(\theta_{t+1}).
\end{align*}
Now rewrite the last line as $\nabla \Phi(\theta_t) - \nabla \Phi(\theta_{t+1})
= \eta g_t(\theta_t) + \eta  g_r(\theta_{t+1})$. Adding this up over $t =
1,\dots,T$, the left-hand side telescopes, yielding   
\[
\nabla \Phi(\theta_1) - \nabla \Phi(\theta_{T+1}) = \eta \sum_{t=1}^T 
(g_t(\theta_t) + g_r(\theta_{t+1})).
\]
Identifying \smash{$\tilde{g}_t(\theta_t) = g_t(\theta_t) + g_r(\theta_t)$},
rearranging, and dividing both sides by $\eta T$ proves
\eqref{eq:pmd_avg_grad_identity}. The bound \eqref{eq:pmd_avg_grad_bound}
follows by taking the norm of both sides, and then applying the triangle
inequality.   
\end{proof}

The simple bound in \eqref{eq:pmd_avg_grad_bound} reveals an important property
of proximal mirror descent (and thus all special cases thereof) with constant
step sizes. The next result summarizes. 

\begin{proposition}
\label{prop:pmd_sublinear}
For proximal mirror descent \eqref{eq:prox_mirror_descent} with constant step
sizes, gradient equilibrium \eqref{eq:grad_eq_reg} holds if the mirror map
$\nabla \Phi$ delivers slowly growing iterates, \smash{$\|\nabla
  \Phi(\theta_t)\|_2 = o(t)$}, and the regularizer $r$ delivers slowly growing
residuals $\|g_r(\theta_t)\|_2 = \| \nabla \Phi(z_t) - \nabla \Phi(\theta_t)\|_2
/ \eta = o(t)$.    
\end{proposition}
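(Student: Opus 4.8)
The plan is to read the conclusion off directly from Proposition \ref{prop:pmd_simple}, essentially in the same way that Proposition \ref{prop:gd_sublinear} follows from Proposition \ref{prop:gd_simple}. First I would recall the bound \eqref{eq:pmd_avg_grad_bound}, which controls the average regularized gradient by
\[
\bigg\| \frac{1}{T} \sum_{t=1}^T \tilde{g}_t(\theta_t) \bigg\|_2 \leq
\frac{\|\nabla \Phi(\theta_1) + \eta g_r(\theta_1)\|_2 + \|\nabla \Phi(\theta_{T+1}) + \eta g_r(\theta_{T+1})\|_2}{\eta T},
\]
where, crucially, $g_r(\theta_{T+1}) = (\nabla \Phi(z_{T+1}) - \nabla \Phi(\theta_{T+1}))/\eta$ is exactly the subgradient of $r$ identified in Proposition \ref{prop:pmd_simple}, so the residual hypothesis in the statement is phrased in terms of this particular choice.

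Next I would bound the numerator termwise. The first term $\|\nabla \Phi(\theta_1) + \eta g_r(\theta_1)\|_2$ does not depend on $T$, hence is $O(1)$. For the second term, the triangle inequality gives $\|\nabla \Phi(\theta_{T+1}) + \eta g_r(\theta_{T+1})\|_2 \leq \|\nabla \Phi(\theta_{T+1})\|_2 + \eta \|g_r(\theta_{T+1})\|_2$. The assumption $\|\nabla \Phi(\theta_t)\|_2 = o(t)$ makes the first summand $o(T)$, and the assumption $\|g_r(\theta_t)\|_2 = \|\nabla \Phi(z_t) - \nabla \Phi(\theta_t)\|_2 / \eta = o(t)$ makes the second summand $o(T)$ as well. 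Therefore the numerator is $O(1) + o(T) = o(T)$.

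Finally, dividing by $\eta T$ yields $\big\| \frac{1}{T} \sum_{t=1}^T \tilde{g}_t(\theta_t) \big\|_2 = o(1) \to 0$ as $T \to \infty$. This is precisely the left-hand side of the equivalence \eqref{eq:grad_eq_reg}, so gradient equilibrium for the regularized loss sequence holds, and equivalently $\frac{1}{T} \sum_{t=1}^T g_t(\theta_t) \asymp -\frac{1}{T} \sum_{t=1}^T g_r(\theta_t)$. There is no substantive obstacle: the entire content is already packaged into Proposition \ref{prop:pmd_simple}, and the only point requiring care is matching the residual hypothesis to the specific subgradient $g_r(\theta_{t+1})$ appearing in the telescoping identity \eqref{eq:pmd_avg_grad_identity} rather than to an arbitrary element of $\partial r(\theta_{t+1})$ — which the statement does.
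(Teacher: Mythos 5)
Your proof is correct and matches the paper's (implicit) argument: the proposition is stated as an immediate consequence of the bound \eqref{eq:pmd_avg_grad_bound} from Proposition \ref{prop:pmd_simple}, and your termwise triangle-inequality bound on the numerator, using the two sublinearity hypotheses and dividing by $\eta T$, is exactly that reasoning. Your remark about matching the residual hypothesis to the specific subgradient $g_r(\theta_{t+1}) = (\nabla \Phi(z_{t+1}) - \nabla \Phi(\theta_{t+1}))/\eta$ identified in Proposition \ref{prop:pmd_simple} is the right point of care.
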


Below, we specialize to $r = I_C$ for a convex set $C$, and we investigate
consequences of the basic theory developed above for both bounded and unbounded 
constraint sets.        

\subsubsection{Bounded constraints, general $\nabla \Phi$}

The analysis for bounded $C$ is particularly simple, and it does not require any
of the restorative theory from Section \ref{sec:restorative}. If $\nabla \Phi$
is continuous, then it has a finite maximum on the bounded set $C$. Combined
with a mild (local) Lipschitz condition on $\ell_t$, this gives the desired
boundedness of the right-hand side in \eqref{eq:pmd_avg_grad_identity}.     

\begin{proposition}
\label{prop:finite_mirror}
Let $C \subseteq \R^d$ be a bounded convex set. Assume that $\nabla \Phi$
continuous on $C$, and denote 
\[
M = \sup_{\theta \in C} \, \|\nabla \Phi(\theta)\|_2 < \infty.
\]
Assume also that each $\ell_t$ is $L_t$-Lipschitz on $C$, where $L_t$ is
sublinear. Then mirror descent \eqref{eq:prox_mirror_descent} with $r = I_C$,
arbitrary initialization $\theta_1 \in C$, and constant step sizes $\eta_t =
\eta > 0$, for all $t$, satisfies gradient equilibrium:
\begin{equation}
\label{eq:pmd_avg_grad_bound_finite_mirror}
\bigg\| \frac{1}{T} \sum_{t=1}^T \tilde{g}_t(\theta_t) \bigg\|_2 \leq
\frac{\|\nabla \Phi(\theta_1) + \eta g_r(\theta_1)\|_2}{\eta T} + 
\frac{M}{\eta T} + \frac{L_T}{T} \to 0, \quad \text{as $T \to \infty$}.    
\end{equation}
\end{proposition}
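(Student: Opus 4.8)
The plan is to apply Proposition \ref{prop:pmd_simple} directly, specialized to the case $r = I_C$, and then control each term on the right-hand side of the identity \eqref{eq:pmd_avg_grad_identity} using the hypotheses. First I would recall that for mirror descent with $r = I_C$, the second step in \eqref{eq:prox_mirror_descent} is a Bregman projection onto $C$, so each iterate $\theta_t$ lies in $C$, and the subgradient $g_r(\theta_{t+1}) = (\nabla \Phi(z_{t+1}) - \nabla \Phi(\theta_{t+1}))/\eta$ identified in Proposition \ref{prop:pmd_simple} is an element of the normal cone $N_C(\theta_{t+1})$. Writing $\tilde{g}_t(\theta_t) = g_t(\theta_t) + g_r(\theta_t)$, Proposition \ref{prop:pmd_simple} gives the identity
\[
\frac{1}{T} \sum_{t=1}^T \tilde{g}_t(\theta_t) = \frac{\nabla \Phi(\theta_1) + \eta g_r(\theta_1) - \nabla \Phi(\theta_{T+1}) - \eta g_r(\theta_{T+1})}{\eta T},
\]
so taking norms and applying the triangle inequality reduces the task to bounding $\|\nabla \Phi(\theta_{T+1}) + \eta g_r(\theta_{T+1})\|_2$.

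The key observation is that $\nabla \Phi(\theta_{T+1}) + \eta g_r(\theta_{T+1}) = \nabla \Phi(z_{T+1})$, by the very definition of $g_r(\theta_{T+1})$. Then, unraveling the first step of the mirror descent recursion, $\nabla \Phi(z_{T+1}) = \nabla \Phi(\theta_T) - \eta g_T(\theta_T)$, and since $\theta_T \in C$ we have $\|\nabla \Phi(\theta_T)\|_2 \le M$, while $\|g_T(\theta_T)\|_2 \le L_T$ by the (local) $L_T$-Lipschitz assumption on $\ell_T$ over $C$. Hence $\|\nabla \Phi(z_{T+1})\|_2 \le M + \eta L_T$. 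Substituting this bound back into the norm of the averaged-gradient identity yields
\[
\bigg\| \frac{1}{T} \sum_{t=1}^T \tilde{g}_t(\theta_t) \bigg\|_2 \leq \frac{\|\nabla \Phi(\theta_1) + \eta g_r(\theta_1)\|_2}{\eta T} + \frac{M}{\eta T} + \frac{L_T}{T},
\]
which is exactly \eqref{eq:pmd_avg_grad_bound_finite_mirror}. Finally, since $M < \infty$ is a fixed constant and $L_T$ is sublinear, i.e. $L_T = o(T)$, all three terms tend to $0$ as $T \to \infty$, establishing gradient equilibrium.

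I do not anticipate a serious obstacle here, since the proposition is essentially a corollary of the already-established Proposition \ref{prop:pmd_simple} plus boundedness of $C$. The one point requiring a little care is the identification $\nabla \Phi(\theta_{T+1}) + \eta g_r(\theta_{T+1}) = \nabla \Phi(z_{T+1})$ and the fact that $z_{T+1}$ is reached from $\theta_T \in C$ by a single gradient step — this is where continuity of $\nabla \Phi$ on $C$ (giving the finite bound $M$) and the local Lipschitz property of $\ell_T$ on $C$ both get used. I would also note explicitly that the initialization term $\|\nabla \Phi(\theta_1) + \eta g_r(\theta_1)\|_2$ is a finite constant not depending on $T$ (it depends on the choice of $g_r(\theta_1) \in N_C(\theta_1)$, which may be taken to be $0$ if $\theta_1 \in \interior C$), so it too contributes an $O(1/T)$ term. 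No restorative-field machinery from Section \ref{sec:restorative} is needed because boundedness of $C$ trivially bounds the iterates.
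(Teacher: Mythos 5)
Your proof is correct and is essentially identical to the paper's: both start from Proposition \ref{prop:pmd_simple}, use the identification $\nabla \Phi(\theta_{T+1}) + \eta g_r(\theta_{T+1}) = \nabla \Phi(z_{T+1}) = \nabla \Phi(\theta_T) - \eta g_T(\theta_T)$, and then bound this via $\|\nabla \Phi(\theta_T)\|_2 \leq M$ and $\|g_T(\theta_T)\|_2 \leq L_T$. No gaps; the additional remarks about $\theta_t \in C$ and the initialization term are consistent with the paper's argument.
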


\begin{proof}
Return to \eqref{eq:pmd_avg_grad_bound}, and rewrite the right-hand side as
\begin{align*}
\frac{1}{T} \sum_{t=1}^T \tilde{g}_t(\theta_t) 
&\leq \frac{\|\nabla \Phi(\theta_1) + \eta g_r(\theta_1)\|_2 + \|\nabla 
  \Phi(z_{T+1})\|_2}{\eta T} \\
&= \frac{\|\nabla \Phi(\theta_1) + \eta g_r(\theta_1)\|_2 + \|\nabla 
  \Phi(\theta_T) - \eta g_T(\theta_T)\|_2}{\eta T}. 
\end{align*}
The result \eqref{eq:pmd_avg_grad_bound_finite_mirror} follows by the using
triangle inequality, $\|\nabla \Phi(\theta_T)\|_2 \leq M$, and
$\|g_T(\theta_T)\|_2 \leq L_T$. 
\end{proof}

\subsubsection{Unbounded constraints,  $\nabla \Phi = \Id$}

For unbounded $C$, fortunately, we can leverage much of the restorative theory
from Section \ref{sec:restorative}. We assume, without loss of generality, that
$C$ contains the origin. This tied to the fact that the restorative condition
\eqref{eq:restorative} is based around the origin. (If $0 \notin C$, then we can
translate the parameter space to make it so.) We further make the restriction
that $\nabla \Phi = \Id$, the identity map, and thus focus on projected gradient  
descent. 

Note that $g_r(\theta_t) \in \partial r(\theta_t) = \partial I_C(\theta_t) =  
N_C(\theta_t)$, the normal cone to $C$ at $\theta_t$. As $C$ is convex, the 
normal cone has a special structure \eqref{eq:normal_cone3}, which implies
$g_r(\theta_t)^\T \theta_t \geq g_r(\theta_t)^\T z$ for all $z \in C$, and so, 
taking $z = 0$, 
\[
g_r(\theta_t)^\T \theta_t \geq 0.
\]
This simple fact will be critical in porting over the restorative theory to
projected gradient descent descent with an unbounded set $C$. It implies  
$\|\theta_t\|_2^2 + \eta^2 \|g_r(\theta_t)\|_2^2 \leq \|\theta_t + \eta
g_r(\theta_t)\|_2^2$, and in particular, 
\begin{equation}
\label{eq:subgrad_iterate_ineq}
\|\theta_t\|_2 \leq \|\theta_t + \eta g_r(\theta_t)\|_2. 
\end{equation}
We are ready to state our result for projected gradient descent, which
generalizes Propositions
\ref{prop:restorative_zero_curv_1d}--\ref{prop:restorative_quad_curv}.

\begin{proposition}
\label{prop:restorative_proj}
Let $C \subseteq \R^d$ be a convex set containing the origin, and consider
projected gradient descent \eqref{eq:prox_mirror_descent} with $r = I_C$ and
$\nabla \Phi = \Id$, arbitrary initialization $\theta_1 \in C$, and constant
step sizes $\eta_t = \eta > 0$, for all $t$.

\begin{enumerate}[label=(\alph*)]
\item If $d = 1$, and each $\ell_t$ is $L$-Lipschitz and $(h_t, 0)$-restorative,
  for nondecreasing $h_t$, then
  \begin{equation}
  \label{eq:pgd_iterate_bound_zero_curv_1d}
  |\theta_{T+1} + \eta g_r(\theta_{T+1})| \leq \max\{ |\theta_1 + \eta
  g_r(\theta_1)|, \, h_T \} + \eta L. 
  \end{equation}
  Thus if $h_t$ is sublinear, then \eqref{eq:pmd_avg_grad_bound} implies
  gradient equilibrium:      
  \begin{equation}
  \label{eq:pgd_avg_grad_bound_zero_curv_1d}
  \bigg| \frac{1}{T} \sum_{t=1}^T \tilde{g}_t(\theta_t) \bigg| \leq \frac{2 
    |\theta_1 + \eta g_r(\theta_1)|}{\eta T} + \frac{L}{T} + \frac{h_T}{\eta T}
  \to 0, \quad \text{as $T \to \infty$}.  
  \end{equation}

\item For general $d$, if each $\ell_t$ is $L$-Lipschitz and $(h_t,
  0)$-restorative, then  
  \begin{equation}
  \label{eq:pgd_iterate_bound_zero_curv}
  \|\theta_{T+1} + \eta g_r(\theta_{T+1})\|_2 \leq \sqrt{\|\theta_1 + \eta
    g_r(\theta_1)\|_2^2 + \eta^2 L^2 T + 2 \eta L \sum_{t=1}^T h_t}.  
  \end{equation}
  Thus if $h_t$ is sublinear and nondecreasing, then
  \eqref{eq:pmd_avg_grad_bound} implies gradient equilibrium:      
  \begin{equation}
  \label{eq:pgd_avg_grad_bound_zero_curv}
  \bigg\| \frac{1}{T} \sum_{t=1}^T \tilde{g}_t(\theta_t) \bigg\|_2 \leq \frac{2 
    \|\theta_1 + \eta g_r(\theta_1)\|_2}{\eta T} + \sqrt{\frac{L^2}{T} + \frac{2
      L h_T}{\eta T}} \to 0, \quad \text{as $T \to \infty$}.  
  \end{equation}

\item If each $\ell_t$ is $L$-Lipschitz and $(h_t,\phi_t)$-restorative with
  positive curvature \eqref{eq:pos_curvature}, for nondecreasing $h_t$, then
  \begin{equation}
  \label{eq:pgd_iterate_bound_pos_curv}
  \|\theta_{T+1} + \eta g_r(\theta_{T+1})\|_2 \leq \max\{ \|\theta_1 + \eta
  g_r(\theta_1)\|_2, \, h_T \} + \eta L.   
  \end{equation}
  Thus if $h_t$ is sublinear, then \eqref{eq:pmd_avg_grad_bound} implies
  gradient equilibrium:  
  \begin{equation}
  \label{eq:pgd_avg_grad_bound_pos_curv}
  \bigg\| \frac{1}{T} \sum_{t=1}^T \tilde{g}_t(\theta_t) \bigg\|_2 \leq
  \frac{2 \|\theta_1 + \eta g_r(\theta_1)\|_2}{\eta T} + \frac{L}{T} +
  \frac{h_T}{\eta T} \to 0, \quad \text{as $T \to \infty$}.      
  \end{equation}

\item If each $\ell_t$ is $L_t$-Lipschitz on the set $\{ \theta \in \R^d :
  \|\theta\|_2 \leq h_t \}$, and also $(h_t,\phi_t)$-restorative with quadratic
  curvature \eqref{eq:quad_curvature}, for nondecreasing $h_t,L_t$, then  
  \begin{equation}
  \label{eq:pgd_iterate_bound_quad_curv}
  \|\theta_{T+1} + \eta g_r(\theta_{T+1})\|_2 \leq \max\{ \|\theta_1 + \eta
  g_r(\theta_1)\|_2, \, h_T \} + \eta L_T.   
  \end{equation}
  Thus if $h_t,L_t$ are sublinear, then \eqref{eq:pmd_avg_grad_bound} implies
  gradient equilibrium:  
  \begin{equation}
  \label{eq:pgd_avg_grad_bound_quad_curv}
  \bigg\| \frac{1}{T} \sum_{t=1}^T \tilde{g}_t(\theta_t) \bigg\|_2 \leq
  \frac{2 \|\theta_1 + \eta g_r(\theta_1)\|_2}{\eta T} + \frac{L_T}{T} +
  \frac{h_T}{\eta T} \to 0, \quad \text{as $T \to \infty$}.      
  \end{equation}
\end{enumerate}
\end{proposition}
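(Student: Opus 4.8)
The plan is to introduce the auxiliary sequence $w_t = \theta_t + \eta\, g_r(\theta_t)$, $t = 1,2,3,\dots$, which is exactly the pre-projection iterate: since $\nabla\Phi = \Id$ and the choice $g_r(\theta_{t+1}) = (z_{t+1}-\theta_{t+1})/\eta$ from Proposition~\ref{prop:pmd_simple}, the first line of the update \eqref{eq:prox_mirror_descent} gives $w_{t+1} = z_{t+1} = \theta_t - \eta\, g_t(\theta_t)$. The crucial structural fact is inequality \eqref{eq:subgrad_iterate_ineq}, namely $\|\theta_t\|_2 \leq \|w_t\|_2$ for every $t$, which holds because $g_r(\theta_t) \in N_C(\theta_t)$ and $0 \in C$ force $g_r(\theta_t)^\T\theta_t \geq 0$. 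Taking squared norms in $w_{t+1} = \theta_t - \eta\, g_t(\theta_t)$,
\[
\|w_{t+1}\|_2^2 = \|\theta_t\|_2^2 + \eta^2 \|g_t(\theta_t)\|_2^2 - 2\eta\, g_t(\theta_t)^\T \theta_t \leq \|w_t\|_2^2 + \eta^2 \|g_t(\theta_t)\|_2^2 - 2\eta\, g_t(\theta_t)^\T \theta_t,
\]
which is precisely the recursion driving the proofs of Propositions~\ref{prop:restorative_zero_curv_1d}--\ref{prop:restorative_quad_curv}, but now with $w_t$ on the left-hand side while the restorative condition is still applied to $\theta_t$ (legitimate, since the restorative hypotheses are properties of $\ell_t$). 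The same normal-cone argument also gives $\|\theta_1\|_2 \leq \|w_1\|_2$, so the induction can be seeded in terms of $w_1$.

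With this in hand, each of parts (a)--(d) is established by repeating, almost verbatim, the induction from the corresponding unconstrained proposition, with $w_t$ playing the role of $\theta_t$ on the ``output'' side and $\|\theta_t\|_2 \leq \|w_t\|_2$ invoked to pass from the inductive hypothesis on $\|w_t\|_2$ to a bound on $\|\theta_t\|_2$. For part (a) I would redefine $h_t \leftarrow \max\{|w_1|, h_t\}$ and induct, splitting on $|\theta_T| \leq h_T$ versus $|\theta_T| > h_T$ (in the latter using $\sign(g_T(\theta_T)) = \sign(\theta_T)$ together with $|\theta_T| \leq |w_T|$ and the inductive hypothesis), to obtain $|w_{T+1}| \leq \max\{|w_1|, h_T\} + \eta L$, mirroring Appendix~\ref{app:restorative_zero_curv_1d}. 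For part (b) I would bound the summand $-2\eta\, g_t(\theta_t)^\T\theta_t \leq 2\eta L h_t$ exactly as in Appendix~\ref{app:restorative_zero_curv} (Cauchy--Schwarz when $\|\theta_t\|_2 \leq h_t$, restorativeness giving $\leq 0$ when $\|\theta_t\|_2 > h_t$) and unroll the recursion. For parts (c) and (d) I would induct as in Appendices~\ref{app:restorative_pos_curv} and \ref{app:restorative_quad_curv}: when $\|\theta_T\|_2 \leq h_T$ the triangle inequality plus (local) Lipschitzness gives $\|w_{T+1}\|_2 \leq h_T + \eta L$ (resp.\ $h_T + \eta L_T$), and when $\|\theta_T\|_2 > h_T$ the positive (resp.\ quadratic) curvature assumption forces $\eta^2 \|g_T(\theta_T)\|_2^2 - 2\eta\, g_T(\theta_T)^\T\theta_T \leq 0$, hence $\|w_{T+1}\|_2^2 \leq \|\theta_T\|_2^2 \leq \|w_T\|_2^2 \leq (h_{T-1} + \eta L)^2 \leq (h_T + \eta L)^2$ by the inductive hypothesis and monotonicity of $h_t$ (resp.\ $h_t, L_t$).

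Finally, in every part the stated equilibrium bounds follow immediately by feeding the iterate bound into Proposition~\ref{prop:pmd_simple}: with $\nabla\Phi = \Id$, the numerator in \eqref{eq:pmd_avg_grad_bound} is $\|w_1\|_2 + \|w_{T+1}\|_2$, and substituting the derived bound on $\|w_{T+1}\|_2$ (and using $\max\{a,b\} \leq a + b$ in parts (a), (c), (d), and $\sqrt{a+b} \leq \sqrt a + \sqrt b$ in part (b)) yields \eqref{eq:pgd_avg_grad_bound_zero_curv_1d}--\eqref{eq:pgd_avg_grad_bound_quad_curv}. I expect the only real subtlety — and hence the main obstacle — to be the bookkeeping in the inductions: one must keep the inductive hypothesis stated for $\|w_t\|_2$ rather than $\|\theta_t\|_2$, apply the restorative condition only ever to $\theta_t$, and invoke $\|\theta_t\|_2 \leq \|w_t\|_2$ at exactly the right place. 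Once this setup is fixed, the calculations are identical to those in the unconstrained case, so the proof can simply cite the corresponding appendix arguments rather than reproduce them.
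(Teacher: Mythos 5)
Your proposal is correct and follows essentially the same route as the paper's proof: it introduces the shifted quantity $\theta_t + \eta g_r(\theta_t)$ (the pre-projection iterate), uses $g_r(\theta_t)^\T\theta_t \geq 0$ from the normal cone and $0 \in C$ to get \eqref{eq:subgrad_iterate_ineq}, reruns the inductions of Propositions \ref{prop:restorative_zero_curv_1d}--\ref{prop:restorative_quad_curv} with that quantity on the output side, and then plugs the iterate bounds into \eqref{eq:pmd_avg_grad_bound}. This matches the argument in Appendix \ref{app:prox_mirror_descent} step for step, so no further commentary is needed.
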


\begin{proof}
It will be helpful to rewrite the projected gradient descent updates using the
regularizer subgradient $g_r(\theta_{t+1}) = (z_{t+1} - \theta_{t+1}) / \eta$,
introduced in Proposition \ref{prop:pmd_simple}. Note that we can rewrite
\eqref{eq:prox_mirror_descent} (with $\nabla \Phi = \Id$) as 
\begin{equation}
\label{eq:proj_grad_descent} 
\theta_{t+1} + \eta g_r(\theta_{t+1}) = \theta_t - \eta g_t(\theta_t) \quad t =
1,2,3,\dots.  
\end{equation}
We now prove parts (a)--(d) separately.

\paragraph{Part (a).}

We follow the proof of Proposition \ref{prop:restorative_zero_curv_1d} in
Appendix \ref{app:restorative_zero_curv_1d}. For convenience, redefine $h_t =
\max\{ |\theta_1 + \eta g_r(\theta_1)|, \, h_t \}$, and let $h_0 = |\theta_1 +
\eta g_r(\theta_1)|$. We will use induction to prove
\eqref{eq:pgd_iterate_bound_zero_curv_1d}. The base case, for $T = 0$, is  
trivial. Assume that the result is true up through $T$. We break up the
argument for $T+1$ into cases. If $|\theta_T| \leq h_T$, then note that by 
\eqref{eq:proj_grad_descent} and the triangle inequality 
\begin{align*}
|\theta_{T+1} + \eta g_r(\theta_{T+1})| 
&\leq |\theta_T| + \eta |g_T(\theta_T)| \\ 
&\leq h_T + \eta L,
\end{align*}
where the second line uses $L$-Lipschitzness of $\ell_t$. If instead
$|\theta_{T+1}| > h_T$, then the same argument (applying one-sided inequalities)
as in Appendix \ref{app:restorative_zero_curv_1d} leads to  
\[
\theta_{T+1} + \eta g_r(\theta_{t+1}) \in \big[ \min\{ \theta_T, -\eta L\}, \max
\{\theta_T, \eta L\} \big]. 
\]
By \eqref{eq:subgrad_iterate_ineq} and the inductive hypothesis, 
$|\theta_{T+1} + \eta g_r(\theta_{t+1})| \leq h_{T-1} + \eta L \leq h_T + \eta
L$, using the nondecreasing property of $h_T$. This completes the inductive step
and proves \eqref{eq:pgd_iterate_bound_zero_curv_1d}. The average gradient
bound \eqref{eq:pgd_avg_grad_bound_zero_curv_1d} follows from 
\eqref{eq:pmd_avg_grad_bound}, and then using the simple inequality $\max\{a, b\}
\leq a + b$ to write the bound more cleanly.   

\paragraph{Part (b).}

We follow the proof of Proposition \ref{prop:restorative_zero_curv} in Appendix
\ref{app:restorative_zero_curv}. Taking the squared norm on both sides of the
update \eqref{eq:proj_grad_descent} and expanding yields  
\begin{align*}
\|\theta_{T+1} + \eta g_r(\theta_{T+1})\|_2^2 &= \|\theta_T\|_2^2 + \eta^2
  \|g_T(\theta_T)\|_2^2 -  2 \eta g_T(\theta_T)^\T \theta_T \\ 
&\leq \|\theta_T\|_2^2 + \eta^2 L^2 - 2 \eta g_T(\theta_T)^\T \theta_T \\ 
&\leq \|\theta_1\|_2^2 + \eta^2 L^2 T - 2 \eta \sum_{t=1}^T g_t(\theta_t)^\T 
  \theta_t.
\end{align*}
Here the second line uses $L$-Lipschitzness, while the third unravels the
iteration over $t = 1,\dots,T$, using \eqref{eq:subgrad_iterate_ineq} at each
step. The same argument as in Appendix \ref{app:restorative_zero_curv} shows
that each $- 2 \eta g_t(\theta_t)^\T \theta_t \leq 2 \eta L h_t$. Plugging this
in, and taking a square root, proves \eqref{eq:pgd_iterate_bound_zero_curv}. The
second result \eqref{eq:gd_avg_grad_bound_zero_curv} follows from bounding 
\smash{$\sum_{t=1}^T h_T \leq T h_T$}, as $h_T$ is nondecreasing, applying
\eqref{eq:gd_avg_grad_bound}, and then using \smash{$\sqrt{a + b} \leq \sqrt{a}  
  + \sqrt{b}$} to write the bound more cleanly.  

\paragraph{Part (c).}

We follow the proof of Proposition \ref{prop:restorative_pos_curv} in Appendix
\ref{app:restorative_pos_curv}. Redefine $h_t = \max\{ \|\theta_1 + \eta
g_r(\theta_1)\|_2, \, h_t \}$, and let $h_0 = \|\theta_1 + \eta
g_r(\theta_1)\|_2$. We use induction to prove
\eqref{eq:pgd_iterate_bound_pos_curv}. The base case, for $T = 0$, is 
trivial. Assume that the result is true up through $T$. We break up the
argument for $T+1$ into cases. If $\|\theta_T\|_2 \leq h_T$, then note 
that by \eqref{eq:proj_grad_descent} and the triangle inequality
\begin{align*}
\|\theta_{T+1} + \eta_t g_r(\theta_{T+1})\|_2
&\leq \|\theta_T\|_2 + \eta \|g_T(\theta_T)\|_2 \\ 
&\leq h_T + \eta L,
\end{align*}
where the second line uses $L$-Lipschitzness of $\ell_T$. If instead
$\|\theta_T\|_2 > h_T$, then      
\begin{align*}
\|\theta_{T+1} + \eta g_r(\theta_{T+1})\|_2^2 
&= \|\theta_T\|_2^2 + \eta^2 \|g_T(\theta_T)\|_2^2 - 2 \eta 
  g_T(\theta_T)^\T \theta_T \\
&\leq \|\theta_T\|_2^2 + \eta^2 L^2 - 2 \eta g_T(\theta_T)^\T \theta_T \\ 
&\leq (h_{T-1} + \eta L)^2 + \eta^2 L^2 - 2 \eta g_T(\theta_T)^\T \theta_T \\ 
&\leq (h_{T-1} + \eta L)^2 + \eta^2 L^2 - 2 \eta \phi_T(\theta) \\
&\leq (h_T + \eta L)^2.
\end{align*}
Here the second line uses $L$-Lipschitzness of $\ell_T$, the third uses
\eqref{eq:subgrad_iterate_ineq} and the inductive hypothesis, the fourth uses
the restorative condition \eqref{eq:restorative}, and the last uses the
nondecreasing property of $h_T$ and positive curvature
\eqref{eq:pos_curvature}. Taking a square root proves 
\eqref{eq:pgd_iterate_bound_pos_curv}. The second result
\eqref{eq:pgd_avg_grad_bound_pos_curv} follows from
\eqref{eq:pmd_avg_grad_bound}, and using $\max\{a, b\} \leq a + b$ to write    
the bound more cleanly. 

\paragraph{Part (d).}

This follows a similar proof to part (c) (and also to that of Proposition
\ref{prop:restorative_quad_curv} in Appendix \ref{app:restorative_quad_curv}),
but differs slightly in the inductive argument for $T+1$. As before, we
divide into two cases. If $\|\theta_T\|_2 \leq h_T$, then the triangle
inequality and local $L_T$-Lipschitzness of $\ell_T$ implies $\|\theta_{T+1}\|_2
\leq h_T + \eta L_T$. If instead $\|\theta_T\|_2 > h_T$, then expanding the
squared norm of the gradient update, using \eqref{eq:subgrad_iterate_ineq}
together with the inductive hypothesis, and nondecreasingness of $h_T,L_T$,
gives   
\[
\|\theta_{T+1}\|_2^2 \leq (h_T + \eta L_T)^2 + \eta^2 \|g_T
(\theta_T)\|_2^2 - 2 \eta g_T(\theta_T)^\T \theta_T. 
\]
The restorative condition \eqref{eq:restorative} with quadratic curvature
\eqref{eq:quad_curvature} implies
\[
\eta^2 \|g_T (\theta_T)\|_2^2 - 2 \eta g_T(\theta_T)^\T \theta_T \leq \eta^2
\|g_T (\theta_T)\|_2^2 - 2 \eta \phi_T(\theta) \leq 0,
\]
and thus $\|\theta_{T+1}\|_2 \leq h_T + \eta L_T$, completing the proof of
\eqref{eq:pgd_iterate_bound_quad_curv}.
\end{proof}

\begin{remark}
All uses of the restorative property, throughout the proof, were limited to 
examining gradient inner products of the form $g_t(\theta_t)^\T \theta_t$ for
$\theta_t \in C$ (by the nature of projected gradient descent). That is, instead
of \eqref{eq:restorative}, we only really require the following $C$-restricted 
restorative condition:   
\begin{equation}
\label{eq:restorative_constr}
g(\theta)^\T \theta \geq \phi(\theta), \quad \text{for all $\|\theta\|_2 > h$,
  $\theta \in C$, and all generalized subgradients $g(\theta)$ of $\ell$ at
  $\theta$}.    
\end{equation}
\end{remark}

\section{Generalized subgradients}
\label{app:gen_subgrad}

A generalized subgradient of $\ell$ at $\theta \in D$, denoted $g(\theta)$, can
be taken to be any vector $v \in \R^d$ satisfying 
\begin{equation}
\label{eq:gen_subgrad1}
\liminf_{\substack{z \to \theta \\ z \not= \theta}} \frac{\ell(z) -
  \ell(\theta) - v^\T (z-\theta)}{\|z-\theta\|_2} \geq 0.
\end{equation}
We note that \cite{rockafellar2009variational} call this a ``regular''
subgradient, in their Definition 8.3. Two important special cases to highlight
are as follows: 
\begin{itemize}
\item if $\ell$ is convex, then $g(\theta)$ reduces to a classical subgradient
  of $\ell$ at $\theta$, written $g(\theta) \in \partial \ell(\theta)$;  
\item if $\ell$ is differentiable at $\theta$, then $g(\theta)$ reduces
  (uniquely) to the gradient of $\ell$ at $\theta$, written $g(\theta) = \nabla 
  \ell(\theta)$. 
\end{itemize}
See Exercise 8.8(a) and Proposition 8.12 of \cite{rockafellar2009variational}
for proofs of these facts. In general, when a subgradient exists at $\theta$,
the function $\ell$ is said to be subdifferentiable at this point.

Note that we may abbreviate \eqref{eq:gen_subgrad1} as
\begin{equation}
\label{eq:gen_subgrad2}
\ell(z) \geq \ell(\theta) + v^\T (z-\theta) + o(\|z-\theta\|_2), \quad
\text{for all $z$}.
\end{equation}
This allows us to easily verify the following fact. If $\ell = f_1 + f_2$, where 
each $f_1,f_2$ are subdifferentiable at $\theta$ with subgradients $v_1,v_2$ 
respectively, then $\ell$ is subdifferentiable at $\theta$ with subgradent
$v_1+v_2$. To check this, simply note that by adding together  
\begin{align*}
f_1(z) &\geq f_1(\theta) + v_1^\T (z-\theta) + o(\|z-\theta\|_2), \\ 
f_2(z) &\geq f_2(\theta) + v_2^\T (z-\theta) + o(\|z-\theta\|_2),  
\end{align*}
we get 
\[
(f_1+f_2)(z) \geq (f_1+f_2)(\theta) + (v_1+v_2)^\T (z-\theta) +
o(\|z-\theta\|_2),  
\]
as desired.

Another useful fact concerns the characteristic function $I_C$ of a set $C$, 
\[
I_C(\theta) = 
\begin{cases} 
0 & \text{if $\theta \in C$}, \\
\infty & \text{if $\theta \notin C$},
\end{cases}
\]
whose subgradients are intimately connected to the geometry of $C$. It is not 
hard to see that subgradients of $I_C$ are vectors $v \in \R^d$ satisfying  
\begin{equation}
\label{eq:normal_cone1}
\liminf_{\substack{z \overset{C}{\to} \theta \\ z \not= \theta}} \frac{v^\T
  (z-\theta)}{\|z-\theta\|_2} \leq 0. 
\end{equation}
Here \smash{$z \overset{C}{\to} \theta$} means that $z$ converges to $\theta$
from within $C$. A vector $v \in \R^d$ that satisfies the above condition is 
called a normal vector to $C$ a $\theta$. The set of all such vectors form what 
we call the normal cone, denoted $N_C(\theta)$.  We note that
\cite{rockafellar2009variational} call such vectors ``regular'' normal vectors.  
Similar to the notation for subgradients, it is often convenient to abbreviate 
\eqref{eq:normal_cone1} as 
\begin{equation}
\label{eq:normal_cone2}
v^\T (z-\theta) \leq o(\|z-\theta\|_2), \quad \text{for all $z \in C$}. 
\end{equation}
The form \eqref{eq:normal_cone2} is used to describe normal vectors in Appendix
\ref{app:constraints}. An important special case to highlight: if $C$ is
convex, then $N_C(\theta)$ reduces to the usual definition of the normal cone
from convex geometry,  
\begin{equation}
\label{eq:normal_cone3}
N_C(\theta) = \{ v \in \R^d : v^\T (z - \theta) \leq 0, \; \text{for all $z \in  
  C$} \}. 
\end{equation}
See Theorem 6.9 of \cite{rockafellar2009variational} for a proof of this fact. 

\section{No move regret}
\label{app:no_move_regret}

Consider the following definition of a regret-type property. 

\begin{definition}
A sequence of iterates $\theta_t$, $t = 1,2,3,\dots$ is satisfies \emph{no move 
  regret} (NMR) with respect to a sequence of loss functions $\ell_t$, $t =
1,2,3,\dots$ provided that for any bounded set $D \subseteq \R^d$,  
\begin{equation}
\label{eq:no_move_regret}
\liminf_{T \to \infty} \, \inf_{\delta \in D} \, \frac{1}{T} \bigg( \sum_{t=1}^T
\ell_t(\theta_t + \delta) - \sum_{t=1}^T \ell_t(\theta_t) \bigg) \geq 0. 
\end{equation}
\end{definition}

This concept serves as somewhat of a bridge between no regret (NR) and gradient 
equilibrium (GEQ). For convex losses, it is straightforward to show that GEQ
implies NMR. 

\begin{proposition}
\label{prop:geq_implies_nmr}
Assume each $\ell_t$, $t = 1,2,3,\dots$ is convex. If the iterates $\theta_t$,
$t = 1,2,3,\dots$ satisfy gradient equilibrium \eqref{eq:grad_eq}, then they
satisfy no move regret \eqref{eq:no_move_regret}.
\end{proposition}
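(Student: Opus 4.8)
The plan is to exploit convexity of each $\ell_t$ directly through the subgradient inequality, which for convex functions holds globally: for any $\theta,\theta'$ and any subgradient $g_t(\theta_t)$ of $\ell_t$ at $\theta_t$, we have $\ell_t(\theta') \geq \ell_t(\theta_t) + g_t(\theta_t)^\T (\theta' - \theta_t)$. Applying this with $\theta' = \theta_t + \delta$ for a fixed displacement $\delta \in D$ gives $\ell_t(\theta_t + \delta) - \ell_t(\theta_t) \geq g_t(\theta_t)^\T \delta$. Summing over $t = 1,\dots,T$ and dividing by $T$ yields
\[
\frac{1}{T} \sum_{t=1}^T \big( \ell_t(\theta_t + \delta) - \ell_t(\theta_t) \big) \geq \bigg( \frac{1}{T} \sum_{t=1}^T g_t(\theta_t) \bigg)^{\!\T} \delta.
\]

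The next step is to take the infimum over $\delta \in D$ on both sides. On the right-hand side, since $D$ is bounded, say $\|\delta\|_2 \leq R$ for all $\delta \in D$, Cauchy--Schwarz gives $\big( \frac{1}{T} \sum_{t=1}^T g_t(\theta_t) \big)^\T \delta \geq - R \, \big\| \frac{1}{T} \sum_{t=1}^T g_t(\theta_t) \big\|_2$. Therefore
\[
\inf_{\delta \in D} \, \frac{1}{T} \sum_{t=1}^T \big( \ell_t(\theta_t + \delta) - \ell_t(\theta_t) \big) \geq - R \bigg\| \frac{1}{T} \sum_{t=1}^T g_t(\theta_t) \bigg\|_2.
\]
By the gradient equilibrium hypothesis \eqref{eq:grad_eq}, the right-hand side tends to $0$ as $T \to \infty$, so its $\liminf$ is $0$; taking $\liminf_{T \to \infty}$ of the left-hand side then gives the desired inequality \eqref{eq:no_move_regret}.

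I do not anticipate a serious obstacle here; the argument is essentially two lines of convex analysis plus Cauchy--Schwarz. The one point requiring a small amount of care is the interchange/bounding step: we want $\liminf_T \inf_\delta (\cdots) \geq 0$, and since for every $T$ the quantity $\inf_\delta(\cdots)$ is bounded below by $-R \|\frac{1}{T}\sum_t g_t(\theta_t)\|_2$, and the latter bound converges to $0$, the $\liminf$ of $\inf_\delta(\cdots)$ is at least $\liminf_T \big(-R\|\frac{1}{T}\sum_t g_t(\theta_t)\|_2\big) = 0$. A second minor subtlety worth a sentence in the writeup: the subgradient inequality used is the global one valid for convex functions, which is consistent with the generalized subgradient notion of Appendix \ref{app:gen_subgrad} (for convex $\ell_t$, a generalized subgradient is a classical subgradient). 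No additional regularity or boundedness of the iterates $\theta_t$ is needed.
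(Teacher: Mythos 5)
Your proof is correct and follows essentially the same route as the paper's: the global subgradient inequality for convex $\ell_t$, averaging over $t$, Cauchy--Schwarz against the bound on $D$, and passing to the limit via gradient equilibrium. No issues.
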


\begin{proof}
Fix any bounded set $D$, and $\delta \in D$. Since $\ell_t$ is convex, any
subgradient $g_t(\theta_t)$ at $\theta_t$ must satisfy
\[
\ell_t(\theta_t + \delta) - \ell_t(\theta_t) \geq g_t(\theta_t)^\T \delta.
\]
Average this over $t = 1,\dots,T$ and using linearity gives
\[
\frac{1}{T} \bigg( \sum_{t=1}^T \ell_t(\theta_t + \delta) - \sum_{t=1}^T
\ell_t(\theta_t) \bigg) \geq \bigg( \frac{1}{T} \sum_{t=1}^T g_t(\theta_t)
\bigg)^\T \delta. 
\]
Letting $b$ denote the bound on vectors in $D$, by Cauchy-Schwarz,
\[
\inf_{\delta \in D} \, \frac{1}{T} \bigg( \sum_{t=1}^T \ell_t(\theta_t + \delta)
- \sum_{t=1}^T \ell_t(\theta_t) \bigg) \geq \bigg( \frac{1}{T} \sum_{t=1}^T
g_t(\theta_t) \bigg)^\T \delta \geq - b \bigg\| \sum_{t=1}^T g_t(\theta_t)
\bigg\|_2. 
\]
Taking the limit infimum as $T \to \infty$ proves the claim.
\end{proof}

On the other hand, for smooth losses, it is likewise straightforward to show
that NMR implies GEQ.     

\begin{proposition}
\label{prop:nmr_implies_geq}
Assume each $\ell_t$, $t = 1,2,3,\dots$ is $\beta$-smooth (differentiable with
$\beta$-Lipschitz gradient). If the iterates $\theta_t$, $t = 1,2,3,\dots$
satisfy no move regret \eqref{eq:no_move_regret}, then they satisfy gradient
equilibrium \eqref{eq:grad_eq}.    
\end{proposition}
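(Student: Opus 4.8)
The plan is to pair the no-move-regret inequality with the quadratic upper bound furnished by $\beta$-smoothness, optimizing the perturbation $\delta$ along the direction of the negative average gradient. Since each $\ell_t$ is differentiable, its generalized subgradient at $\theta_t$ is just $\nabla\ell_t(\theta_t)$, so write $G_T = \tfrac1T\sum_{t=1}^T\nabla\ell_t(\theta_t)$; then gradient equilibrium is exactly the statement $G_T\to 0$. First I would invoke the descent lemma: a differentiable function with $\beta$-Lipschitz gradient satisfies $\ell_t(\theta_t+\delta)\le \ell_t(\theta_t)+\nabla\ell_t(\theta_t)^\T\delta+\tfrac{\beta}{2}\|\delta\|_2^2$ for every $\delta$. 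Averaging over $t=1,\dots,T$ gives $\tfrac1T\sum_{t=1}^T\bigl(\ell_t(\theta_t+\delta)-\ell_t(\theta_t)\bigr)\le G_T^\T\delta+\tfrac{\beta}{2}\|\delta\|_2^2$.

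Next, fix $r>0$ and take the bounded set $D=\{\delta:\|\delta\|_2\le r\}$ in the definition of NMR. Plugging in the particular choice $\delta=-r\,G_T/\|G_T\|_2$ when $G_T\neq 0$ (and $\delta=0$ otherwise), which lies in $D$, the averaged bound specializes to $\inf_{\delta\in D}\tfrac1T\sum_{t=1}^T\bigl(\ell_t(\theta_t+\delta)-\ell_t(\theta_t)\bigr)\le -r\|G_T\|_2+\tfrac{\beta r^2}{2}$. By NMR the left-hand side has nonnegative $\liminf$ as $T\to\infty$, so $0\le \liminf_{T}\bigl(-r\|G_T\|_2+\tfrac{\beta r^2}{2}\bigr)=\tfrac{\beta r^2}{2}-r\limsup_{T}\|G_T\|_2$, hence $\limsup_{T}\|G_T\|_2\le \tfrac{\beta r}{2}$. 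Since $r>0$ was arbitrary, sending $r\to 0^+$ forces $\limsup_{T}\|G_T\|_2=0$, i.e.\ $G_T\to 0$, which is gradient equilibrium.

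The argument is essentially mechanical, so there is no real obstacle, only a point of care: NMR quantifies the bounded set $D$ outside the $\liminf$, so I must fix $r$ first, derive $\limsup_T\|G_T\|_2\le \beta r/2$ for that fixed $r$, and only afterwards let $r\to 0$; one cannot let the radius shrink with $T$. Relatedly, the perturbation $\delta$ must depend on $T$ through $G_T$ yet stay inside the fixed ball, which is exactly why I take $\delta$ on the sphere of radius $r$ rather than the unconstrained minimizer $-G_T/\beta$ of the quadratic — the constrained choice is what lets the same fixed $D$ work for all $T$.
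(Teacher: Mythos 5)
Your proof is correct and follows essentially the same route as the paper: average the $\beta$-smoothness quadratic upper bound and test NMR with a perturbation along $-G_T$. Your only deviation is using a normalized $\delta$ of fixed radius $r$ (then $r\to 0^+$) instead of the paper's unconstrained minimizer $\delta=-G_T/\beta$, which is a slightly more careful handling of the bounded-set quantifier but not a different argument.
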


\begin{proof}
Fix any $\delta$. By a well-known fact for $\beta$-smooth functions, 
\[
\ell_t(\theta_t + \delta) - \ell_t(\theta_t) \leq \nabla \ell_t(\theta_t)^\T
\delta + \frac{\beta}{2} \|\delta\|_2^2.
\]
Averaging this over $t = 1,\dots,T$ gives
\[
\frac{1}{T} \bigg( \sum_{t=1}^T \ell_t(\theta_t + \delta) - \sum_{t=1}^T
\ell_t(\theta_t) \bigg) \leq \bigg( \frac{1}{T} \sum_{t=1}^T \nabla
\ell_t(\theta_t) \bigg)^\T \delta + \frac{\beta}{2} \|\delta\|_2^2.
\]
Letting \smash{$G_T = \frac{1}{T} \sum_{t=1}^T \nabla \ell_t(\theta_t)$}, the
right-hand side above is minimized at $\delta = -G_T / \beta$, and it has a
minimum value of $-\|G_T\|_2^2 / (2\beta)$. Hence, if $G_t \not\to 0$ as $T
\to \infty$, then we would have a violation of NMR. This proves the claim via
the contrapositive.
\end{proof}

In the above, we assumed convexity and smoothness for simplicity. It is 
possible that connections could be drawn in greater generality.  

%

\end{document}